\newtheorem{theorem}{Theorem}
\newtheorem{lemma}{Lemma}
\newtheorem{proposition}{Proposition}
\newtheorem{definition}{Definition}
\renewenvironment{proof}[1][Proof]{\noindent\textbf{#1.} }{\qed}
\renewcommand{\appendixtocname}{Appendix Contents.}
\renewcommand{\appendices}{%
  \clearpage
  \renewcommand{\thesection}{\Alph{section}}
  \renewcommand{\appendixname}{}
  \renewcommand{\appendixpagename}{}
  \let\tf@toc\tf@app
  \addtocontents{app}{\protect\setcounter{tocdepth}{2}}
  \immediate\write\@auxout{%
    \string\let\string\tf@toc\string\tf@app^^J
  }
}
\newcommand{\listofappendices}{%
  \begingroup
  \renewcommand{\contentsname}{\appendixtocname}
  \let\@oldstarttoc\@starttoc
  \def\@starttoc##1{\@oldstarttoc{app}}
  \tableofcontents
  \endgroup
}
\newcommand{\BEAS}{\begin{eqnarray*}}
\newcommand{\EEAS}{\end{eqnarray*}}
\newcommand{\BEA}{\begin{eqnarray}}
\newcommand{\EEA}{\end{eqnarray}}
\newcommand{\BEQ}{\begin{equation}}
\newcommand{\EEQ}{\end{equation}}
\newcommand{\BIT}{\begin{itemize}}
\newcommand{\EIT}{\end{itemize}}
\newcommand{\BNUM}{\begin{enumerate}}
\newcommand{\ENUM}{\end{enumerate}}
\newcommand{\BA}{\begin{array}}
\newcommand{\EA}{\end{array}}
\newcommand{\Diag}{\mathop{\textnormal{ Diag}}}
\newcommand{\rb}{\mathbb{{R}}}
\def \ds { \displaystyle}
\def \E{{\mathbb E}}
\title{\textbf{An Uncertainty Principle\\ for Linear Recurrent Neural Networks}}
\author{
  \textbf{Alexandre Fran\c{c}ois}\\
  INRIA, Ecole Normale Sup\'erieure, PSL Research University, France\\
  \texttt{alexandre.francois@inria.fr}
  \and
  \textbf{Antonio Orvieto}\\
  MPI for Intelligent Systems, ELLIS Institute T\"ubingen, Germany\\
  \texttt{antonio@tue.ellis.eu}
  \and
  \textbf{Francis Bach}\\
  INRIA, Ecole Normale Sup\'erieure, PSL Research University, France\\
  \texttt{francis.bach@inria.fr}
}
\date{}
\begin{document}
\maketitle

\begin{abstract}
 We consider linear recurrent neural networks, which have become a key building block of sequence modeling due to their ability for stable and effective long-range modeling. In this paper, we aim at characterizing this ability on a simple but core copy task, whose goal is to build a linear filter of order $S$ that approximates the filter that looks $K$ time steps in the past (which we refer to as the shift-$K$ filter), where $K$ is larger than $S$. Using classical signal models and quadratic cost, we fully characterize the problem by providing lower bounds of approximation, as well as explicit filters that achieve this lower bound up to constants. The optimal performance highlights an uncertainty principle: the optimal filter has to average values around the $K$-th time step in the past with a range~(width) that is proportional to $K/S$.

\end{abstract}

\section{Introduction}

Since their early development~\citep{rumelhart1986sequential, elman1990finding}, recurrent neural networks (RNNs) have advanced machine learning for sequential data, with milestones such as echo-state networks~\citep{jaeger2001echo} and LSTMs~\citep{hochreiter1997long}. However, two problems severely limit the application of classical RNNs in modern times: (1) GPU hardware optimized for large matrix operations struggles with efficient sequential processing, and (2) RNNs are notoriously difficult to train due to vanishing and exploding gradients~\citep{bengio1994learning, pascanu2013difficulty}. As a result, transformers~\citep{vaswani2017attention} have emerged as the dominant solution for sequence processing, offering desirable scalability properties and less challenging optimization. However, the attention mechanism powering transformers relies on computing pairwise interactions between inputs at each timestamp, resulting in a squared inference and memory complexity $O(L^2)$ in the sequence length $L$. Instead, classical RNNs require one pass through the data to recurrently update their hidden state, bringing their complexity down to $O(L)$. This property is particularly desirable in the long-context setting~(e.g., analysis of long documents or genomics).

Indeed, in the interest of efficiency, we have recently witnessed a \textit{resurgence of new RNNs} in state-of-the-art industry-size applications such as language modeling~\citep{gu2024mamba, peng2024eagle, qin2024hgrn2, de2024griffin, yang2024parallelizing}. Sparked from the S4 model~\citep{gu2022efficiently}, these new recurrences offer $O(L)$ complexity as classical RNNs, yet are parallelizable on modern hardware like attention. At the core of their efficiency is a simplified recurrence that is \textit{linear} in the hidden state:
\begin{equation}
    x_{n} = A_n x_{n-1} + B_n u_n, 
    \label{eq:1}
\end{equation}
where $u_n$ is the input data at timestamp $n$, $x_n$ is the hidden state (which is a linear combination of inputs $u_1,u_2,\dots, u_n$), and $A_n,B_n$ are input-controlled transition matrices with a special parametrization~\citep{orvieto2023resurrecting}. Compared to previous RNNs, $A_n$ and $B_n$ have\textit{ no dependency on the hidden state}---a feature which reduces expressivity~\citep{merrill2024illusion,cirone2024theoretical} but unlocks GPU-efficient processing~\citep{martin2018parallelizing, smith2023simplified}.

New linear RNNs offer improved inference complexity and competitive performance on language modeling tasks~\citep{dao2024transformers,waleffe2024empirical}, as well as state-of-the-art results on several other domains including vision~\citep{liu2024vmamba,li2025videomamba, liang2024pointmamba, xing2024segmamba}, audio generation~\citep{goel2022sashimi}, online learning~\citep{zucchet2023online}, reinforcement learning~\citep{lu2023structured} and genome analysis, where the $O(L)$ complexity can tackle long DNA sequences~\citep{nguyen2024sequence}.

Despite the practical advantages of new linear recurrent mechanisms, we are at a very early evaluation stage in regards to assessing and understanding the capabilities and optimization properties of such systems when compared to~(1) transformers and (2) non-linear~(classic) RNNs. While several works are devoted to establishing a direct connection between transformers and linear RNNs~\citep{katharopoulos2020transformers, schlag2021linear, ali2024hidden, sieber2024understanding}, others point to fundamental and drastic differences in regards to expressivity and basic capabilities. Further, despite~\citet{orvieto2024universality,wang2024state, cirone2024theoretical} provide infinite-width theoretical guarantees for the expressivity of deep architectures based on linear RNNs, other works focusing on specific reasoning tasks of general interest in language modeling tell a different story: \cite{arora2023zoology} identified in the problem of selective \textit{copying}~(i.e., of recalling a specific value from the past, when presented the relative key) a fundamental discrepancy between attention and RNNs: building up a memory of past inputs, as opposed to direct edges, can fundamentally limit finite-width performance of linear RNN based models. This finding inspired a formal investigation by~\cite{jelassi2024repeat}, who proved that perfectly retrieving~(i.e., with zero error) inputs from distant past requires the RNN width to increase linearly with the sequence length. This in contrast to attention-based models, that can build associative mechanisms to solve such tasks with 2 layers \citep[cf.][]{olsson2022context}.

Inspired both by the practical relevance of new linear RNNs and by the need of further theoretical investigations of their basic properties, in this work we mathematically investigate arguably the most basic long-range task: recalling inputs seen $K$ timestamps before the current processing step. Such task has a close relation to the \textit{copy task} by~\cite{jelassi2024repeat}, while being simpler and with a clear challenge: successful replay as $K$ increases. As~\citet{jelassi2024repeat}, we are specifically interested in characterizing optimal performance as a function of the recall range $K$ and the memory size $S$---the dimension of the hidden state $x$.  Yet, while~\citet{jelassi2024repeat} work in the finite-vocabulary input setting standard in language modeling, assuming no particular structure in the recurrence, we take instead a signal processing approach, which allows us to characterize in detail the tradeoff between long-memory requirements~(large $K$) and optimal recall resolution under reduced memory size~($S<K$). Since the task is independent from the input value to recall, we restrict our attention to the case where in Eq.~\eqref{eq:1}, $A_n$ and $B_n$ are input-independent and hence fixed matrices: $A,B$. Further, as common in modern RNNs, we consider without loss in generality\footnote{This equivalence is often used in linear systems theory~\citep{hespanha2018linear}. Let us start from $x_n = A x_{n-1} + Bu_n$. Over the space of \( S \times S \) non-diagonal real matrices $A$, the subset of those non-diagonalizable in the complex domain has measure zero~\citep{bhatia2013matrix}. Thus, with arbitrarily small perturbations, \( A = Q \text{diag}(a) Q^{-1} \). This implies $Q^{-1} x_n = \text{diag}(a) (Q^{-1} x_{n-1}) + (Q^{-1} B) u_n$. Renaming \( x_n \leftarrow Q^{-1} x_n \) and \( B \leftarrow Q^{-1} B \) yields a diagonal complex-valued recurrence. \label{diagfootnote}} the diagonal case $A = \text{diag}(a)$. For one-dimensional input sequences and a final sum operation, if the RNN is initialized with zero memory, the scalar output sequence $(y_n)_{n \geq 0}$ can be computed through a \textit{convolution}:
\begin{equation}
    x_n \!=\! \text{diag}(a) x_{n-1} + u_n b,\quad y_n \!=\! 1^\top x_n  \ \ \implies \ \ y_n = (c \ast u)_n = \sum_{k=0}^n c_k u_{n-k} \quad \text{with} \quad c_k \!=\! \sum_{s=1}^S a^k_s b_s.
    \label{eq:filter_new}
\end{equation}
The task then consists in finding potentially complex vectors $a,b$ such that $y_n \approx u_{n-K}$. This is equivalent to requiring the sequence $(c_k)_{k \geq 0}$ to approximate the \textit{shift-$K$} filter $d = \delta_{K}$ (which is a sequence that is zero everywhere except at position $K$, that is, $d_k = 1_{k=K}$). 

In order to assess the approximation of $d = \delta_K$ by $c$ in the form of Eq.~(\ref{eq:filter_new}), we consider the idealized situation of infinite-length random stationary signals $(u_n)_{n \in \mathbb{Z}}$, and consider the expected loss function at time $n=0$, $\mathbb{E} \big[  | (c \ast u)_0 - (d \ast u)_0|^2 \big]$, where the expectation $\mathbb{E}$ is taken with respect to the distribution of the random sequence $(u_n)$. By stationarity of $(u_n)$ and the law of large number, this is equivalent to the mean-square-error over the entire sequence:
\begin{equation}
\label{loss}
\mathbb{E} \big[  | (c \ast u)_0 - (d \ast u)_0|^2 \big] = \lim_{N \to +\infty} \frac{1}{2N+1} \sum_{n = - N}^N  | (c \ast u)_n - (d \ast u)_n|^2
.\end{equation}

We study this loss function for $u$ being the white noise (problem becomes $\min_{a,b}\| c(a,b) - d \|_2^2$), and for simple auto-correlations $\E[u_k \bar{u}_{k'}] = \rho^{|k-k'|}$ for $\rho \in [0,1)$ ($\rho=0$ corresponding to white noise).

\paragraph{Contributions.} We make the following contributions:
\begin{enumerate}
    
    \item We provide a lower bound 
    on the best possible for the shift-$K$ loss above~(optimized with respect to $a$ and $b$) using tools from the approximation of rational functions~\citep{baratchart2016minimax} and Cauchy matrices~\citep{yang2003generalized}. For white noise, we obtain the lower bound $1- \frac{S}{K}$, showing that a large copy lag $K$ leads to an increase in error. This is made more precise with more general $\rho$'s, with the lower bound $\big(1 - \frac{3S}{K} \frac{1}{1-\rho} \big)_+$, showing that a small error can be obtained for autocorrelated input signals.

    \item We find an analytical solution to the shift-$K$ problem close to our lower bound (with matching behavior up to constants, and thus nearly optimal). Our solution allows us to instantiate an uncertainty principle, providing a clear intuition on resolution/memory tradeoffs~(see Fig.~\ref{figure uncertainty principle}). In addition, our closed-form solution for $a$ in Eq.~\eqref{eq:filter_new} allows us to motivate from a task-specific memorization perspective the successful S4D-Lin initialization by~\citet{gu2022parameterization} --- the simplest linear RNN initialization allowing to solve the most challenging tasks in the long-range arena~\citep{tay2020long}.
\end{enumerate}

The loss of our near-optimal solution illustrates the trade-off between recall range~($K$), memory size~($S$) and recall precision~(i.e., the concentration of the filter $c$ in Eq.~\eqref{eq:filter_new} around the spike $\delta_{K}$). Surprisingly, our finding can be formulated as an uncertainty principle: 
\begin{center}
\textit{Learning a filter~$c$ centered around a large $K$ for a fixed state-size $S$ is relatively easy,\\ yet increasing time-horizon $K$ comes at the expense of resolution~(width\footnote{For a filter designed to approximate the shift-$K$, we call width the width at the halfway height of the peak centered in $K$. The narrower the width, the better the approximation of the shift-$K$. The width can be interpreted as the resolution of the filter.} of the filter). }
\end{center}
As illustrated in Fig.~\ref{figure uncertainty principle}, perfect recall is eventually achieved at $S=K$. For $K>S$, the width of the filter around the correct location is proportional to $K/S$.

\begin{figure}[h]
    \centering
    \includegraphics[width=.95\linewidth]{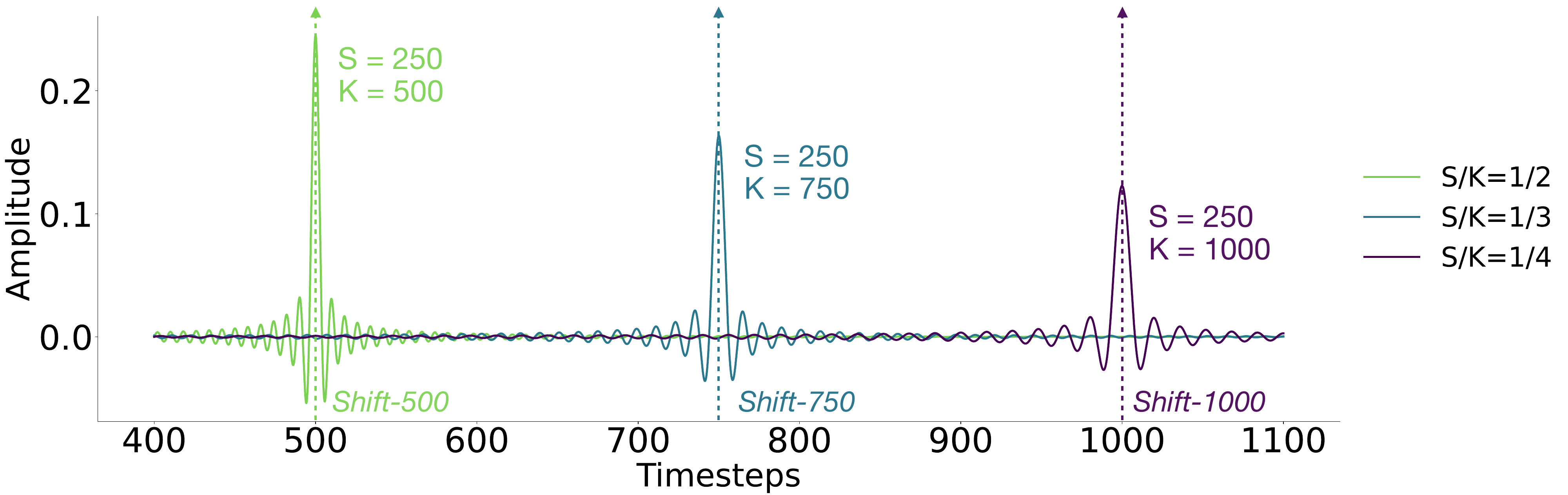}

    \vspace*{-0cm}
    
    \caption{\textit{Learning to shift-$K$ with linear recurrences exhibits an uncertainty principle. For fixed $S=250$, different values of $K$ induce different performances: the smaller the ratio $S/K$, the lower the peak of the filter and the larger the width. For a fixed memory size $S$, increasing the time horizon is feasible, but comes at the expense of resolution. For $K>S$, the width of the filter around the correct location is $K/S$.}}
    \label{figure uncertainty principle}
\end{figure}

\section{Notation and Main Results}
In this paper, we operate in the complex domain as usually done in the literature on SSMs~\citep{gu2022parameterization,orvieto2023resurrecting}. The reason for this choice in the literature is motivated by good performance, increased expressivity guarantees~\citep{orvieto2024universality,ran2024provable}, and most of all by the equivalence between dense linear RNNs and diagonal complex-valued RNNs$^{\ref{diagfootnote}}$.

\paragraph{Time domain.} Starting from Eq.~\eqref{eq:filter_new}, given a generic real-valued input $u=(u_n)_{n\in\mathbb{Z}}$, the output $y=(y_n)_{n\in\mathbb{Z}}$ of a linear RNN with parameters $(a,b) \in \mathbb{C}^S \times \mathbb{C}^S$ can be written as 
$y_n = (c\ast u)_n:= \sum_{k=0}^\infty c_k u_{n-k}$, where the convolution kernel $c = (c_k)_{k\in\mathbb{N}}$ is defined by $(a,b)$  as: 
\begin{equation}
\label{eq:cab}
c_k = \sum_{s=1}^S a^k_s b_s,
\end{equation}
for all $k\in\mathbb{N}$. Let $d = (d_k)_{k\in\mathbb{N}}$ be a second convolution kernel processing the input---the one we would like to approximate with our RNN (that is, $d_k = 1_{k = K}$). One can compute the expected squared norm between outputs of $d\ast u$ and $c\ast u$ at only a single $n \in \mathbb{Z}$ (as shown in Eq.~\eqref{loss}, this is also the mean-square error over the entire sequence):
\begin{align*}
     \mathbb{E} \Big[ |(d\ast u)_n-(c\ast u)_n|^2  \Big] &=   \mathbb{E} \Big[ \Big|\sum_{k=0}^\infty (c_k -d_{k}) u_{n-k} \Big|^2 \Big] =  \sum_{k,k'=0}^\infty (c_k -d_{k}) (\bar{c}_{k'} -\bar{d}_{k'})  \mathbb{E} [ u_{n-k} u_{n-{k'}}].
\end{align*}
Using stationarity of the signal $u$, $\mathbb{E} [ u_{n-k} u_{n-{k'}}] = \gamma(k-k')$ only depends on $k-k'$, and, we get our objective function, to be optimized with respect to the RNN parameters $(a,b)$:
\begin{equation}
    \mathcal{L}_{\text{time}}(c, d) = \sum_{k, k'=0}^{\infty}(c_k - d_k)(\bar{c}_{k'} - \bar{d}_{k'})\gamma(k-k')
    \label{Time domain loss},
\end{equation}
where \(\gamma(k - k')\) is the auto-correlation function that captures average temporal dependencies, weighting the contribution of errors based on time step correlations~\citep{brockwell2002introduction}. When there is no ambiguity on the filters $(c_k)_{k \in \mathbb{N}}, (d_k)_{k \in \mathbb{N}}$, we will refer to the loss $\mathcal{L}_\text{time}(c, d)$ as $\mathcal{L}_\text{time}$. We adopt the common choice \(\gamma(k) = \rho^{|k|}\) with \(\rho \in [0,1)\), also used recently by~\citet{zucchet2024recurrent}, where \(\rho = 0\) corresponds to uncorrelated white noise, 
where 
$\mathcal{L}_{\text{time}}(c, d) = \sum_{k =0}^{\infty}|c_k - d_k|^2$,
and \(\rho \to 1\) reflects strong temporal dependencies.

\paragraph{Frequency Domain.} In this work, we aim at approximating the action of the shift-$K$ filter $d = \delta_{K} := (1_{k = K})_{k\in\mathbb{N}}$. 
We find it convenient to process the loss above in frequency domain. The discrete-time Fourier transforms and Parseval's theorem allow to write the loss in Eq.~\eqref{Time domain loss} as
\begin{equation}
\mathcal{L}_\text{freq}(C, D) = \frac{1}{2\pi}\int_{-\pi}^\pi\vert C(e^{i\omega}) - D(e^{i\omega})\vert^2\Gamma(e^{i\omega})d\omega,
    \label{Frequential loss copy task}
\end{equation}
where $C(e^{i\omega}) = \sum_{s=1}^S\frac{b_s}{1-a_se^{-i\omega}}$ is a rational function of $e^{-i\omega}$, $D(e^{i\omega}) = e^{-iK\omega}$~(DFT of a shifted Dirac impulse) and $\Gamma(e^{i\omega}) = \frac{1-\rho^2}{\vert 1 - \rho e^{-i\omega}\vert^2}$, thus turning the problem to that of rational approximations on the unit circle~\citep{baratchart2016minimax}. See Appendix \ref{appendix subsection natural pair} and \ref{appendix subsection frequency loss} for more details. When there is no ambiguity on the Fourier transforms $C$ and $D$, we will refer to the loss $\mathcal{L}_\text{freq}(C, D)$ as $\mathcal{L}_\text{freq}$.

\paragraph{Overview.} In Section \ref{section lower bound}, we provide a lower bound on $\mathcal{L}_{\text{time}}$ suggesting that having a small state size does not necessarily imply a short memory capacity; however, the bound also shows that this comes at the cost of a degraded resolution. This provides a first connection with our uncertainty principle in the context of learning shifts with linear models and reveals a fundamental tradeoff between the time horizon of our copy task and the performance of the filter, given a fixed size of the model. Furthermore, we highlight the significant role of data autocorrelation, demonstrating that while linear models struggle to retain white noise, their performance improves substantially when dealing with autocorrelated data, which may better reflect real-world scenarios.
In Section~\ref{section upper bound}, we establish our uncertainty principle by deriving a closely matching upper bound. To do this, we consider the loss $\mathcal{L}_\text{freq}$ to carefully design a new filter that performs similar to the lower bound, up to a constant factor. This representation, providing meaningful results in practice, gives insights on the behavior of linear RNNs as they implement longer memory.

To summarize, our insights stem from two main results. The first one is a lower bound on the best possible error greater than  $1- \frac{S}{K}$ for $\rho=0$ and $\big(1 - \frac{3S}{K} \frac{1}{1-\rho} \big)_+$ for $\rho \in [0,1)$ (see Theorems~\ref{lower bound white noise} and~\ref{theorem 
autocorrelated lower bound}). The second is an upper bound (construction of an explicit filter) that matches the lower-bound up to a constant factor~(thus establishing\footnote{Our uncertainty principle, as formulated in the introduction, is first suggested by our lower bound but only formally implied by~(and immediately follows from) the combination with our matching upper bound.} our uncertainty principle), as informally described below in Theorem~\ref{thm:informal_upper} and illustrated in Fig.~\ref{fig:window}.

\begin{figure}[H]
\vspace{-1mm}
    \centering
    \includegraphics[width=.85\linewidth]{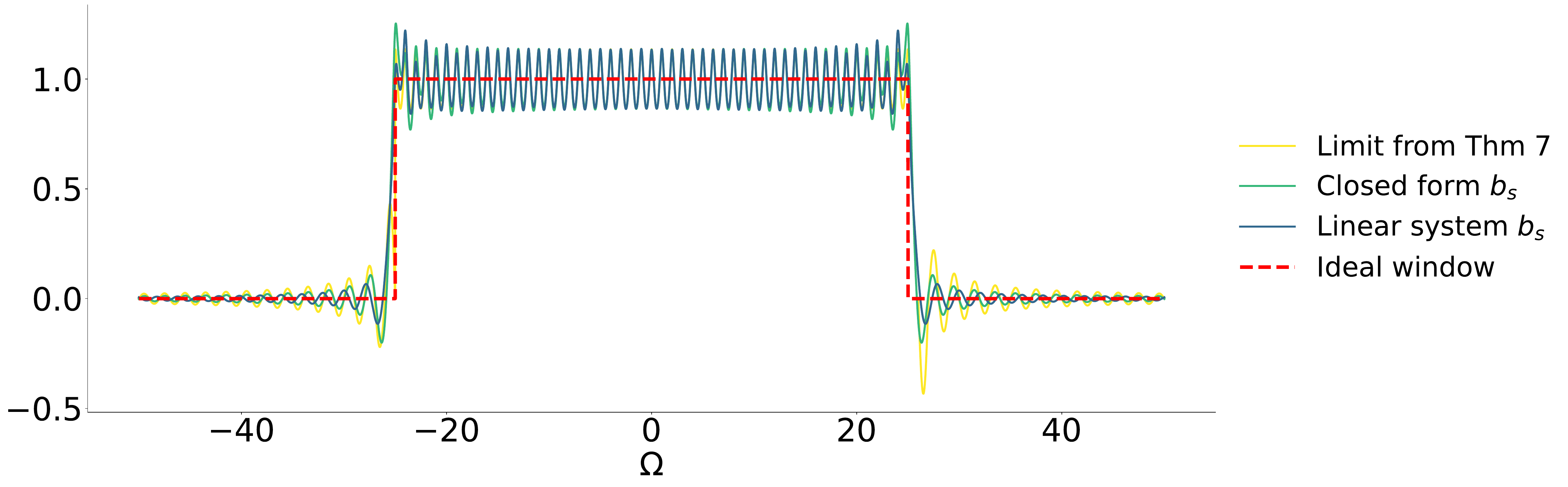}

    \vspace*{-.2cm}
    
    \caption{\textit{Shown is the behavior of $\frac{C(e^{i\omega})}{D(e^{i\omega})}$, where $C(e^{i\omega})$ is the Fourier transform of our near-optimal filter in Theorem~\ref{thm:informal_upper} and $D(e^{i\omega}) = e^{-iK\omega}$ is the Fourier transform of our Shift-$K$ filter. Perfect match between filters implies the ratio is $1$ for all $\omega$.  If instead this equality holds in a window, then the filter would effectively act as a Shift-$K$ for inputs with frequencies $\Gamma(e^{-i\omega})$ in the same window. For $S=51, K=500$, we denote $T = \frac{S-1}{2}$ and plot the ratio $\frac{C(e^{i\omega})}{D(e^{i\omega})}$ with respect to $\Omega=\frac{K\omega}{\pi}$ (to dilate the space). The asymptotic ratio $\frac{C(e^{i\omega})}{D(e^{i\omega})}$ (yellow) from Theorem \ref{convergence to window}, the same ratio for linear models with ($b_s$) given by Eq.~\eqref{Param_of_the_bs} (green), and for ($b_s$) given by linear system inversion Eq.~\eqref{bs linear system inversion} (blue) are compared. The model effectively approximates the shift-$K$ operation, within the frequency window $[-\frac{\pi T}{K}, \frac{\pi T}{K}]$, while vanishing outside this window, leading to a time resolution~(inverse of filter width) of $\frac{S}{K}$. This behavior underscores the uncertainty principle associated with the filter: for small $S/K$ ratios and uncorrelated data, the approximation holds over a narrow frequency range. As autocorrelation increases, the approximation domain shrinks, enhancing accuracy. In red, we show the perfect window~(value of 1 on $[-\frac{\pi T}{K}, \frac{\pi T}{K}]$ and~$0$ outside).}
}
    \label{fig:window}
\end{figure}

\begin{theorem}[upper bound, informal]
Let $S$ be odd and $T = \frac{S-1}{2}$. The filter defined by Eq.~\eqref{eq:cab} with $a_s = \exp\left(-\frac{\alpha}{K}\right)\exp\left(i\frac{\pi s}{K}\right)$ and  $b_s \propto (-1)^s$ for $ s \in \llbracket -T, T\rrbracket$\footnote{While in our introduction, for clarity, we considered $s\in\llbracket1,S\rrbracket$, hereafter, for ease of notation and to emphasize symmetry in our filter, we reparametrize $s\in\llbracket -T,T\rrbracket$ where $T = \frac{S-1}{2}$. The dimension of our hidden state in Eq.~\eqref{eq:filter_new} remains $S$.},   achieves an approximation error comparable to the lower bound up to a constant factor in the context of white noise data. This is because our filter accurately approximates a shift-$K$ in frequency domain over the window \(\left[- \frac{\pi T}{K}, \frac{\pi T}{K} \right]\), vanishing outside this range.
\label{thm:informal_upper}
\end{theorem}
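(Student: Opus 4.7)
The plan is to derive an explicit asymptotic form for $C(e^{i\omega})$ that exhibits it as a smoothed indicator of the band $\bigl[-\tfrac{\pi T}{K},\tfrac{\pi T}{K}\bigr]$ times $D(e^{i\omega}) = e^{-iK\omega}$, then to insert this form into $\mathcal{L}_{\text{freq}}$ by splitting the integral into pass-band and stop-band and matching the lower bounds of Theorems~\ref{lower bound white noise} and~\ref{theorem autocorrelated lower bound}.

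First I would rescale frequencies via $\theta = K\omega$ so that the poles $a_s = e^{-\alpha/K}e^{i\pi s/K}$ appear at evenly spaced angles $\pi s/K$ on a circle of radius $e^{-\alpha/K}\approx 1$. Writing
\begin{equation*}
\frac{1}{1 - a_s e^{-i\omega}} = \frac{1}{1 - e^{-\alpha/K}\,e^{i(\pi s - \theta)/K}},
\end{equation*}
a first-order Taylor expansion in $1/K$ produces the leading term $\tfrac{K}{\alpha + i(\theta - \pi s)}$. Taking $b_s = \lambda (-1)^s$ with $\lambda$ of order $1/K$, the ratio $C(e^{i\omega})/D(e^{i\omega}) = C(e^{i\omega})\,e^{iK\omega}$ reduces, up to $O(1/K)$ corrections, to
\begin{equation*}
\lambda K\, e^{i\theta}\sum_{s=-T}^{T}\frac{(-1)^s}{\alpha + i(\theta - \pi s)}.
\end{equation*}

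The central ingredient is the Mittag-Leffler identity $\sum_{s\in\mathbb{Z}} \frac{(-1)^s}{z-s} = \frac{\pi}{\sin(\pi z)}$, which, evaluated at $z = (\theta + i\alpha)/\pi$ and combined with the $e^{i\theta}$ factor, yields a closed form that is close to $1$ on the window $|\theta| \leq \pi T$ and decays rapidly outside, thus reproducing the rectangular window of the informal statement and Figure~\ref{fig:window}. The normalization $\lambda$ (of order $\sinh(\alpha)/K$) is fixed so that the leading constant at $\theta=0$ equals one. Truncating the infinite Mittag-Leffler sum to $|s|\leq T$ leaves tails controlled by $\sum_{|s|>T}|s|^{-1}|\theta - \pi s|^{-1}$, which are harmless inside the pass-band and contribute only to the already-dominant stop-band error.

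Finally I would insert the pointwise bound on $|C(e^{i\omega}) - D(e^{i\omega})|$ into $\mathcal{L}_{\text{freq}} = \tfrac{1}{2\pi}\int_{-\pi}^\pi |C-D|^2 \Gamma\, d\omega$ and split the integration at $|\omega| = \pi T/K$. In the pass-band, the quantitative Taylor and truncation errors give an $O(S/K)$ contribution; in the stop-band, $|C|$ is negligible and the dominant term is $\int|D|^2\Gamma\,d\omega$ restricted to $|\omega|>\pi T/K$, which evaluates to $1 - O(S/K)$ for white noise and $\bigl(1 - O\bigl(S/(K(1-\rho))\bigr)\bigr)_+$ in the autocorrelated case, thereby matching the lower bounds up to a constant factor. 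The main obstacle will be making both the $1/K$ Taylor expansion and the tail-truncation of the Mittag-Leffler sum uniformly quantitative near the transition $|\omega| = \pi T/K$, where $C/D$ passes abruptly from near $1$ to near $0$ and a crude estimate would produce a constant that diverges once squared and integrated against the weight $\Gamma$.
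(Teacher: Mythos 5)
Your frequency-domain analysis of $C(e^{i\omega})$ coincides in substance with the paper's: the first-order expansion $\frac{1}{1-a_se^{-i\omega}}\approx\frac{K}{\alpha+i(\theta-\pi s)}$ and the alternating Mittag--Leffler sum are exactly Lemmas~\ref{appendix lemma 12.3} and~\ref{appendix lemma fourier series} (the paper proves $\sum_{s\in\mathbb{Z}}\frac{(-1)^s}{2\alpha-i\pi s}=\frac{2}{e^{2\alpha}-e^{-2\alpha}}$ as the Fourier series of $e^{-2\alpha(\omega-\pi)/\pi}$ evaluated at $\pi$ via Dirichlet's theorem, which is the same identity as $\pi/\sin(\pi z)$), and your truncation tails are controlled there by the alternating-series remainder Lemma~\ref{appendix lemma 12.2}. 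Two differences are worth noting. First, the paper does not choose the constant in $b_s$ by normalizing the ratio at $\theta=0$; it takes the exact least-squares optimum $b=C^{-1}\bar a^K$ and shows (Lemma~\ref{lemma eigenvector Toeplitz}, Lemma~\ref{Lemma param of bs}) that $((-1)^s)_s$ is an asymptotic eigenvector of the Cauchy/Toeplitz matrix $C_{ss'}=\frac{1}{1-a_s\bar a_{s'}}$ with eigenvalue $\frac{2K}{e^{2\alpha}-e^{-2\alpha}}$; note that the limiting ratio inside the band is $\frac{e^{-\alpha}(e^{2\alpha}-e^{-2\alpha})e^{i\pi\Omega}}{e^{\alpha}e^{i\pi\Omega}-e^{-\alpha}e^{-i\pi\Omega}}$, which \emph{oscillates} between $1-e^{-2\alpha}$ and $1+e^{-2\alpha}$ rather than being close to $1$, so a pointwise normalization at $\theta=0$ is not canonical and the pass-band error is $\Theta(1)$ pointwise, contributing $\Theta(S/K)$ to the loss rather than a negligible correction.

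Second, and this is the genuine gap, your final step --- squaring the asymptotic form of $C-D$ and integrating against $\Gamma$ with a pass-band/stop-band split --- is not how the paper obtains Theorem~\ref{thm upper bound}, and as stated it does not close. The stop-band asymptotic (Lemma~\ref{appendix lemma 12.4}) is $\frac{e^{-\alpha}(e^{2\alpha}-e^{-2\alpha})}{2}\cdot\frac{i(-1)^{T+1}2\lfloor\Omega\rfloor}{2\pi(\lfloor\Omega\rfloor-T)(\lfloor\Omega\rfloor+T)}$, which is singular on the whole first cell $T\le\Omega<T+1$ and whose $o(1/T)$ error terms are not uniform across the transition; moreover the cross term $\int_{\text{stop}}\mathrm{Re}(C\bar D)\,d\omega$ estimated by absolute values only gives $O(\frac{\log K}{K})$, which is not obviously $o(S/K)$ and in any case cannot recover the sharp constant. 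The paper avoids all of this by never integrating the squared asymptotic: it evaluates $\mathcal{L}_{\text{time}}$ exactly as the quadratic form $1+\sum_{s,s'}\frac{b_s\bar b_{s'}}{1-a_s\bar a_{s'}}-2\,\mathrm{Re}\sum_s b_sa_s^K$ of Eq.~\eqref{good as and bs loss}, substitutes the explicit $a_s,b_s$, and evaluates the resulting alternating double sum asymptotically with the same eigenvector identity, yielding $1-\frac{e^{-2\alpha}(e^{2\alpha}-e^{-2\alpha})}{2}\frac{S}{K}$ directly; the frequency-domain window picture (Theorem~\ref{convergence to window}) is then a separate, purely qualitative statement. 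To make your route rigorous you would need uniform two-sided error bounds on $C(e^{i\omega})$ across the band edge, or you should switch to the time-domain computation for the quantitative part. Finally, the formal upper bound in the paper is only proved for white noise; your autocorrelated pass-band computation reproduces the paper's heuristic for the idealized rectangular filter but is not part of the proven statement.
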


The connection between Theorem~\ref{thm:informal_upper} and HiPPO initialization is presented in Sec.~\ref{sec:hippo}

\section{Related Works}
\label{rw}

\textbf{Attention and RNNs.} In order to reduce the $O(L^2)$ complexity burden in transformers, techniques such as patching~\citep{dosovitskiy2021image,pagnoni2024byte}, gradient checkpointing~\citep{chen2016training}, and FlashAttention~\citep{dao2022flashattention, dao2023flashattention} become crucial when training and deploying models at scale. Despite this limitation, transformers successfully power most state-of-the-art architectures we use today: beyond large language models~\citep{brown2020language, team2024gemini}, attention found widespread application in vision~\citep{dosovitskiy2021image}, graphs~\citep{ma2023graph} and DNA~\citep{dalla2024nucleotide}.  Nevertheless, the quadratic complexity of attention has remained a pressing limitation, prompting numerous efforts over the years to develop efficient approximations~\citep{wang2020linformer, choromanski2020rethinking, chen2021skyformer, lee2022fnet} that inevitably bring attention closer to RNNs~\citep{schlag2021linear,katharopoulos2020transformers}. Indeed, more recently, we have witnessed a resurgence of RNNs in state-of-the-art industry-size applications such as language modeling. Sparked by S4~\citep{gu2020hippo, gu2022efficiently}, which surpassed attention on long-range reasoning tasks~\citep{tay2020long}, we have seen a drastic increase in the usage of RNNs in deep architectures, albeit in a linear form that guarantees both $O(L)$ memory/inference complexity and fast computation on GPUs~\citep{martin2018parallelizing} while matching or surpassing transformers on downstream tasks: a prime example are state-space models~(SSMs) such as Mamba~\citep{gu2024mamba}, along with architectures based on RNNs~\citep{de2024griffin, peng2024eagle, yang2024gated}. 
\\[.35cm]
\textbf{Special initialization of SSMs.} While in natural language SSMs are relatively robust to initialization, on challenging long-range reasoning or memorization tasks careful initialization is crucial~\citep{gu2023how,orvieto2023resurrecting,trockman2024mimetic}. The used schemes stem from the HiPPO theory by~\citet{gu2020hippo}: a special initialization can provably construct features related to the coefficients of optimal polynomial approximations of the input signal. Despite this intriguing connection, already S4~\citep{gu2022efficiently}, the first SSM, deviates quite significantly from the HiPPO prescription. Latest initialization such as S4D-Lin~\citep{gu2022parameterization} or the one by~\citet{orvieto2023resurrecting} are only vaguely related to the HiPPO and present a single non-trivial property: recurrent eigenvalues~($a$ in Eq.~\eqref{eq:filter_new}) are complex-valued, with coupled phase and magnitude. Our theory provides a formal justification of this choice from a memorization perspective.
\\[.35cm]
\textbf{Theoretical guarantees for (non)-linear RNNs.} Expressivity of standard nonlinear RNNs has been extensively studied from a Turing completeness perspective~\citep{siegelmann1992computational,korsky2019computational}. Taking instead the signal processing angle,~\citet{hanson2020universal} proved that wide enough non-linear RNNs can approximate up to vanishing precision non-linear time-homogeneous systems of differential equations driven by input paths. The argument used here is based on Barron's theorem~\citep{barron1993universal} for approximation of continuous functions with neural networks with one hidden layer. Regarding instead linear RNNs such as Eq.~\eqref{eq:1}, results are more recent and have been mostly driven by deep learning developments. \citet{li2022approximation} showed that linear time-invariant RNNs~($A_n$ and $B_n$ independent of $n$, as in this paper) can approximate arbitrary convolution filters as the hidden state size $S$ grows to infinity. Further,~\citet{hanson2019universal} proved that stacking exponentially~(in the sequence length) many temporal convolution filters, chained together with ReLU activations, leads to approximation of arbitrary non-linear filters. Recent works~\citep{orvieto2024universality,wang2024state} prove the universality of linear recurrences~(one layer) when equipped with a fixed (timestamp independent) point-wise MLP acting across the recurrence output, with intriguing connections to Volterra series~\citep{boyd1985fading}. Finally, expressivity of latest models such as Mamba has been studied by~\citep{cirone2024theoretical}. Further, language-specific capabilities of new SSM and RNNs have been studied by~\citet{merrill2024illusion}~(state tracking) and~\citet{jelassi2024repeat}~(copying). 

\section{Lower Bound}\label{section lower bound}

We aim to establish a lower bound on the approximation error $\mathcal{L}_\text{time}(c,d)$ where $c$ has the RNN form in Eq.~\eqref{eq:cab}. By deriving this lower bound, we provide a theoretical benchmark for evaluating the effectiveness of linear time-invariant filters in our shift-$K$ task. Importantly, we demonstrate that the derived lower bound depends on the ratio \(\frac{S}{K}\), where \(S\) represents the hidden dimension and \(K\) is the horizon of our copy task.

To gain deeper insights into the performance of these filters, we analyze the approximation error in two scenarios: the case of white noise ($\rho=0$), and the case of autocorrelated data $(\rho>0)$.

\subsection{White Noise}

With white noise input $\mathcal{L}_\text{time}(c, d)$ has a simpler squared $\ell_2$-norm formulation:

\begin{equation}
\mathcal{L}_\text{time}(c, d)= \sum_{k=0}^{+\infty}\vert c_k - d_k\vert^2 = 1 + \sum_{k=0}^{+\infty}\vert c_k\vert^2 - 2{\textnormal{ Re}}\big(\sum_{k=0}^{+\infty}c_kd_k\big),
\label{white noise time domain loss}
\end{equation}
where we recall that $c$ has the form Eq.~\eqref{eq:cab} and that the shift-$K$ filter $d_k= 1_{k=K}$ has norm one. The following theorem shows a lower bound.

\begin{theorem}[Lower bound of the approximation error---white noise]\label{lower bound white noise}
 Let $S$ and $K$ be two  positive integers. The approximation error $\mathcal{L}_\text{time}(c, d)$ of the shift-$K$ filter $d$ by a filter $c$ of the form in Eq.~\eqref{eq:cab} is lower bounded by $1-\frac{S}{K+1}$.
\end{theorem}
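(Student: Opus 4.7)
The plan is to recast the minimization as a Hilbert-space projection and then exploit invariance of the relevant subspace under the backward shift on $\ell^2(\mathbb{N})$.

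First, I would fix $(a_s)_{s=1}^S$ with $|a_s|<1$ (the other cases make the loss infinite and can be discarded) and observe that, as $b$ varies over $\mathbb{C}^S$, the filter $c$ from Eq.~\eqref{eq:cab} ranges over the subspace $V_a := \mathrm{span}(v_1,\ldots,v_S) \subseteq \ell^2(\mathbb{N})$, where $v_s = (a_s^k)_{k\geq 0}$. Under white noise, the loss is $\|c-\delta_K\|_{\ell^2}^2$, so
\[
\min_{b\in\mathbb{C}^S} \mathcal{L}_\text{time}(c,d) \;=\; \min_{c\in V_a}\|c-\delta_K\|^2 \;=\; 1-\|P_{V_a}\delta_K\|^2,
\]
and it suffices to prove the uniform upper bound $\|P_{V_a}\delta_K\|^2 \leq \frac{S}{K+1}$.

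Next, I would introduce the backward shift $T:\ell^2(\mathbb{N})\to\ell^2(\mathbb{N})$ defined by $(Tc)_k = c_{k+1}$. Two facts are central: each $v_s$ is an eigenvector of $T$ with eigenvalue $a_s$, so $V_a$ is $T$-invariant; and $T$ is a contraction, since $\|Tc\|^2 = \|c\|^2 - |c_0|^2 \leq \|c\|^2$. Writing $\rho_k := \|P_{V_a}\delta_k\|^2 = \sup_{v\in V_a\setminus\{0\}} |v_k|^2/\|v\|^2$ (by Riesz representation of the evaluation functional $v\mapsto v_k$ on $V_a$), the key lemma will be the monotonicity $\rho_{k+1}\leq \rho_k$. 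To prove it, for any nonzero $v\in V_a$ I set $w := Tv\in V_a$; then $|v_{k+1}|=|w_k|$ and $\|w\|\leq\|v\|$, so
\[
\frac{|v_{k+1}|^2}{\|v\|^2} \;\leq\; \frac{|w_k|^2}{\|w\|^2} \;\leq\; \rho_k
\]
(the degenerate case $w=0$ gives $v_{k+1}=0$ directly). Taking the supremum over $v$ yields $\rho_{k+1}\leq \rho_k$.

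To finish, I would apply the trace identity for orthogonal projections, $\sum_{k\geq 0}\rho_k = \mathrm{tr}(P_{V_a}) = \dim V_a \leq S$. Combined with the monotonicity just established, this gives $(K+1)\rho_K \leq \sum_{k=0}^K \rho_k \leq S$, so $\rho_K \leq \frac{S}{K+1}$, which is exactly the desired bound. The hard part is the monotonicity $\rho_{k+1}\leq\rho_k$: it crucially combines the $T$-invariance of $V_a$ (inherited from the eigenvector structure of the $v_s$) with the contractivity of $T$. All other ingredients — the projection reformulation, the Riesz identification, the trace identity, and the telescoping argument — are standard.
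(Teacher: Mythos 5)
Your proposal is correct, and it takes a genuinely different route from the paper's proof. The paper works with the explicit optimal $b = C^{-1}\bar a^K$ for the Cauchy-like matrix $C_{ss'} = \frac{1}{1-a_s\bar a_{s'}}$, exploits its displacement structure to get a closed form for $C^{-1}$, telescopes $F_K = \sum_{L\ge K}(F_L - F_{L+1})$ with each difference an explicit perfect square, and finally evaluates $\sum_L L|w_L|^2$ by a Parseval/Blaschke-product integral to land on $S$. Your argument reaches the same three milestones by structural means: your $\rho_K = \|P_{V_a}\delta_K\|^2$ is exactly the paper's $F_K$; the paper's nonnegativity of $F_K - F_{K+1}$ is your monotonicity $\rho_{K+1}\le\rho_K$, which you obtain from the $T$-invariance of $V_a$ plus contractivity of the backward shift instead of from the Cauchy inverse; and the Blaschke-product integral computing $\sum_K F_K$ is replaced by the one-line trace identity $\sum_k\rho_k = \operatorname{tr}(P_{V_a}) = \dim V_a \le S$. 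What your approach buys is substantial: no matrix inversion, no complex analysis, and no implicit genericity assumptions --- the paper's closed-form inverse requires the $a_s$ distinct and nonzero, whereas $\dim V_a \le S$ holds verbatim for repeated or vanishing poles. One cosmetic point: rather than ``discarding'' the case where some $|a_s|\ge 1$, it is cleaner to note that any $c$ with finite loss must have zero coefficient on those modes (an $\ell^2$ exponential sum cannot involve a root of modulus $\ge 1$, e.g.\ by a Ces\`aro argument), so one may replace $V_a$ by the span of the $v_s$ with $|a_s|<1$, which only lowers the dimension and preserves the bound.
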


\begin{proof}(Sketch, see full proof in Appendix~\ref{appendix subsection white noise loss}).
Given the form of $c$ as $c_k = \sum_{s=1}^S b_s a_s^k$, the loss in Eq.~\eqref{white noise time domain loss} has an explicit expression by summing geometric series over $k$, leading to:
\begin{equation}
    \mathcal{L}_\text{time}(c, d)= 1 + \sum_{s, s'=1}^S\frac{b_s\bar{b}_{s'}}{1-a_s\bar{a}_{s'}} - 2{\textnormal{ Re}}\Big(\sum_{s=1}^Sb_sa_s^K\Big).
    \label{good as and bs loss}
\end{equation}
 We thus want to maximize with respect to $a_s$, $b_s$, $s\in\llbracket 1, S\rrbracket$, the following quantity:
\begin{equation*}
 2{\textnormal{Re}}\big(\sum_{s=1}^Sb_sa_s^K\big) - \sum_{s,s'=1}^S\frac{b_s\bar{b}_{s'}}{1-a_s\bar{a}_{s'}},
\end{equation*}
which is equal to 
\begin{equation}
\label{Q}
\langle \bar{b}, a^K\rangle + \langle a^K, \bar{b}\rangle - \langle \bar{b}, C\bar{b}\rangle,
\end{equation}
where $C$ is an $S \times S$ matrix with entries $\displaystyle C_{ss'}=\frac{1}{1-a_s\bar{a}_{s'}}$, and $\langle \cdot,\cdot\rangle$ is the standard Hermitian product. This is a quadratic form in $b$, and thus we can maximize with respect to $b$ in closed form, leading to the performance criterion  $ \mathcal{L}_\text{time} = 1 - F_K$, with 
\begin{equation}
\label{eq:FK}
F_K = \langle a^K, C^{-1}a^K \rangle = \sum_{s,s'=1}^{S} \bar{a}_s^K (C^{-1})_{ss'} a_{s'}^K.
\end{equation}
This is a function of the $a_s$'s only since we have maximized out the $b_s$'s. This function is rational but has a complicated expression.
In order to bound it, we notice that the matrix $C$ has some ``displacement structure'' similar to Cauchy matrices~\citep{yang2003generalized},
that is, 
$$
C - \Diag( {a}) C \Diag(\bar{a})  = 1_S 1_S^\top,
$$
which leads to, after some manipulations, to a ``closed form'' expression for the inverse $C^{-1}$:
$$
(C^{-1})_{ss'} \big( \frac{1}{ \bar{a}_s a_{s'}}-1 \big) = u_s \bar{v}_{s'},
$$
with $ u = C^{-1} 1_S$ and $v = \Diag( \bar{a})^{-1}  C^{-1} \Diag(a)^{-1} 1_S \propto u$. Moreover, the vector $u$ happens to have a simple characterization through rational functions as
$$
\sum_{s=1}^S \frac{ u_{s}}{1 - z \bar{a}_{s}} = 1 - \prod_{s=1}^S \bar{a}_{s} \prod_{s=1}^S \frac{ a_{s} - z}{1 - z \bar{a}_{s}}.
$$
This allows to characterize the Fourier series of $F_K$ and get an explicit bound using properties from rational approximations on the unit circle~\citep{baratchart2016minimax}. See details in Appendix~\ref{appendix subsection white noise loss}.
\end{proof}

The lower bound established in Theorem~\ref{lower bound white noise} demonstrates that the approximation error remains close to 1 when the ratio \(S/K\) is small. This highlights the inherent difficulty of approximating shift-$K$ filter using linear recurrences in this regime. Nevertheless, by increasing the dimension of the parameters $S$, with fixed $K$, we can hope to achieve a better loss. This shows a fundamental tradeoff in the linear model's ability to solve the copy task, in the context of white noise -- connected to our uncertainty principle. Allowing auto-correlated signals gives a finer picture, this is explored in the next subsection.

\subsection{Autoregressive Autocorrelation}

In this context, we consider a non-zero correlation factor defined as \(\gamma(k) = \rho^{\vert k\vert}\) to account for the temporal structure in the data. This approach with $\rho>0$ simulates situations with real-life data, whose autocorrelation is often modeled this way.~\citep{brockwell2002introduction}. The loss function writes in this case:
\begin{equation}
    \mathcal{L}_\text{time}(c, d) = \sum_{k,k'=0}^{+\infty}(c_k-d_k)(\bar{c}_{k'}-\bar{d}_{k'})\rho^{\vert k-k'\vert},
    \label{correlated time domain loss}
\end{equation}
where \(c_k\) is defined as in Eq.~\eqref{eq:filter_new}, and \((d_k)\) is given by $d_k = 1_{k=K}$. The following theorem extends Theorem~\ref{lower bound white noise}
to all $\rho$'s. We use the notation $(y)_+=\max(y, 0)$.

\begin{theorem}[Lower bound of the approximation error---auto-correlated noise]\label{theorem 
autocorrelated lower bound}
Let $S$ and $K$ be two integers. The approximation error $\mathcal{L}_\text{time}(c, d)$ of the shift-$K$ filter $d$ by a filter $c$ of the form in Eq.~\eqref{eq:cab} is lower bounded by, for the autoregressive autocorrelation $\Big(1-\frac{S}{K}\frac{3}{1-\rho}\Big)_+$.
\end{theorem}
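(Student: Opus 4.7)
The plan is to reduce the autoregressive case to the white-noise lower bound of Theorem~\ref{lower bound white noise} via a moving-average whitening of the signal. The AR(1) process with autocorrelation $\gamma(k)=\rho^{|k|}$ admits the representation $u_n=\sqrt{1-\rho^2}\sum_{j\ge 0}\rho^j\epsilon_{n-j}$ with $(\epsilon_n)$ white noise of unit variance. Substituting this into $\mathbb{E}[|((c-d)\ast u)_0|^2]$ and using orthogonality of the $\epsilon_n$ yields
\[
\mathcal{L}_{\mathrm{time}}(c,d)=\|\tilde g\ast(c-d)\|_{\ell^2}^2,
\]
where $\tilde g_j=\sqrt{1-\rho^2}\,\rho^j\mathbf{1}_{j\ge 0}$. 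This converts the correlated loss into a white-noise $\ell^2$ discrepancy between the whitened filters $\tilde c:=\tilde g\ast c$ and $\tilde d:=\tilde g\ast\delta_K$. A direct geometric-series computation gives $\|\tilde d\|_{\ell^2}^2=1$ and, assuming $a_s\ne\rho$ for all $s$, shows that $\tilde c$ lies in the $(S+1)$-dimensional subspace $\tilde V:=\mathrm{span}\{(a_s^n)_{s=1}^S,(\rho^n)\}$. Consequently,
\[
\mathcal{L}_{\mathrm{time}}(c,d)\ \ge\ 1-\|P_{\tilde V}\tilde d\|_{\ell^2}^2,
\]
so the task reduces to upper bounding this projection norm.

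To bound $\|P_{\tilde V}\tilde d\|_{\ell^2}$, I decompose $\tilde d=\sqrt{1-\rho^2}\sum_{m\ge 0}\rho^m\,\delta_{K+m}$ and apply the triangle inequality to the linear operator $P_{\tilde V}$. For each shifted Dirac, Theorem~\ref{lower bound white noise} applied with $S+1$ free poles yields $\|P_{\tilde V}\delta_{K+m}\|_{\ell^2}^2\le(S+1)/(K+m+1)\le(S+1)/(K+1)$. Summing the geometric series $\sum_{m\ge 0}\rho^m=1/(1-\rho)$ and squaring,
\[
\|P_{\tilde V}\tilde d\|_{\ell^2}^2\ \le\ \frac{(S+1)(1+\rho)}{(K+1)(1-\rho)}\ \le\ \frac{3S}{K(1-\rho)},
\]
where the last step uses $S\ge 2$; in the small-$S$ or small-$K(1-\rho)$ regime the claimed bound is vacuous thanks to the $(\cdot)_+$. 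This gives the lower bound $\bigl(1-3S/(K(1-\rho))\bigr)_+$.

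The main obstacle is the two sources of slack in this argument. First, $\tilde V$ is in fact only $S$-dimensional (the coefficient of $(\rho^n)$ in $\tilde c$ is a specific linear function of the $b_s$'s), yet it is not of the ``free $S$ poles'' form required by Theorem~\ref{lower bound white noise}, forcing me to enlarge it to its $(S+1)$-dimensional super-space and invoke the theorem there. Second, the triangle inequality applied to the sum of projected Diracs absorbs the factor $\sum_m\rho^m=1/(1-\rho)$, producing the $1/(1-\rho)$ dependence and the constant $3$. A sharper version would presumably require a direct displacement-structure analysis of the $\rho$-Gram matrix $G_{ss'}=\frac{1-a_s\bar a_{s'}\rho^2}{(1-a_s\bar a_{s'})(1-\rho a_s)(1-\rho\bar a_{s'})}$, paralleling the Cauchy-matrix manipulation in the white-noise proof, but the coarser estimates above already deliver the advertised constant.
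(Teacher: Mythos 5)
Your route is genuinely different from the paper's in its second half, and it is essentially sound. The first half is in fact structurally the same as the paper's argument, just phrased in the time domain: the paper also augments the pole set with $a_{S+1}=\rho$ (your basis vector $(\rho^n)$), observes that the admissible whitened filters form an $S$-dimensional slice of the $(S+1)$-dimensional span (your remark that the coefficient of $(\rho^n)$ is a specific linear function of the $b_s$'s is exactly the paper's constraint $\sum_{s=1}^{S+1}w_s a_s^{-1}=0$), and relaxes to the unconstrained problem over the full span. Where you diverge is in bounding $\|P_{\tilde V}\tilde d\|^2$: the paper redoes the full displacement-structure computation on the weighted $(S+1)\times(S+1)$ Gram matrix (telescoping $H_{K+1}-H_K$, Blaschke products, explicit integrals), whereas you decompose $\tilde d$ into shifted Diracs and invoke Theorem~\ref{lower bound white noise} as a black box through the triangle inequality. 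Your version is more modular and shorter; the price is the extra factor $1+\rho$ from squaring the geometric sum, but the paper's direct computation pays a comparable price when it bounds $|\rho-e^{-i\omega}|^{-2}\le(1-\rho)^{-2}$, so both land on the same $\frac{S}{K(1-\rho)}$ scaling.

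Two concrete issues remain. First, the final numerical step fails for $S=1$: after bounding $1+\rho\le 2$ you need $\frac{2(S+1)}{K+1}\le\frac{3S}{K}$, i.e.\ $2K\le S(K+3)$, which holds for all $S\ge2$ but for $S=1$ forces $K\le3$ (and retaining the factor $1+\rho$ does not rescue the case $\rho>\tfrac12$). Your fallback that the claimed bound is ``vacuous'' for small $S$ is incorrect: at $S=1$, $K=100$, $\rho=0.9$ the theorem asserts a lower bound of $0.7$, while your chain only delivers about $1-\frac{2(1.9)}{101\times 0.1}\approx0.62$. So, as written, the argument does not establish the stated theorem at $S=1$; the paper avoids this by bounding $\rho+2S\le 3S$, which uses only $S\ge1$. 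Second, you assume $a_s\ne\rho$, and the white-noise machinery you import additionally needs the $S+1$ poles to be distinct and nonzero (the rational-function identity requires distinctness, and the proof divides by $\prod_s|a_s|^2$). These degeneracies are harmless but should be dispatched explicitly by a continuity/density argument in $(a,b)$, as they occur on a closed null set of configurations.
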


\begin{proof}(Sketch, see full proof in Appendix~\ref{proof auto}).
Let \((c_k)\) be a linear-time filter such that \(c_k = \sum_{s=1}^S b_s a_s^k\), and let \((d_k)\) be defined as $d_k =  1_{k=K}$. Let $w_s = b_s a_s / ( a_s - \rho)$, for $s \in \llbracket 1,S \rrbracket$. We can compute $\mathcal{L}_\text{time}(c, d)$ in closed form by explicit summation leading to
\[
\mathcal{L}_\text{time}(c, d)=
1 - 2(1 - \rho^2)\operatorname{Re}\Big(\sum_{s=1}^{S+1}\frac{w_s a_s^K}{1-a_s\rho}\Big) + (1-\rho^2)\sum_{s,s'=1}^{S+1}\frac{w_s \bar{w}_{s'}}{1-a_s\bar{a}_{s'}},
\]
where we have used the specificity of the auto-correlation function to create a new pole, defined as \(a_{S+1} = \rho\) and the constraint \(\sum_{s=1}^{S+1} w_s a_s^{-1} = 0\) holds for some vector $w$ to be optimized. The minimum with respect to $w$ with the constraint is greater than the unconstrained minimizer, equal to (using the fact that this is a quadratic problem):
$ \displaystyle 
H_K = 1 - (1-\rho^2)\sum_{s, s'=1}^{S+1}\frac{\bar{a}_s^K}{1-\bar{a}_s\rho}\frac{a_{s'}^K}{1-a_{s'}\rho}(C^{-1})_{ss'},
$
where $C_{ss'} = \frac{1}{1-a_s\bar{a}_{s'}}$, is a matrix of a similar form as in the proof of Theorem~\ref{lower bound white noise}. The proof then follows similarly by using explicit expressions of matrix inverses.
\end{proof}

Therefore, in the autocorrelated case, $\mathcal{L}_\text{time}$ exhibits a lower bound that depends on the ratio \(\frac{S}{K}\). The error diminishes as \(\rho\) approaches 1, indicating that memorization may become more effective in the limit of strong autocorrelation. This behavior also suggests that memorization performance is intrinsically linked to the spectral characteristics of the data. Specifically, linear filters are incapable of precisely solving the copy task for time horizons~\(K\) larger than \(S\) in the presence of poorly correlated data, as in white noise. However, reducing the spectral domain, by imposing autocorrelation in the data, concentrates the signal's energy within specific frequency bands and significantly improves performance. Next, we further investigate this behavior by designing an explicit filter.

\section{Upper Bound}\label{section upper bound}

In this section, we complement the previous results, which showed that the lower bound of $\mathcal{L}_\text{time}(c, d)$ for the copy task using linear systems depends on the ratio \(\frac{S}{K}\). We present a closed-form parameterization of the filter that achieves a similar performance differing only by a constant factor. This parameterization serves as an upper bound on the achievable approximation accuracy of linear RNNs on the shift-$K$ task. In particular, we provide explicit expressions for the learnable parameters \(a_s\) and \(b_s\), accompanied by a theoretical analysis of their performance. This formulation establishes a theoretical upper limit for the smallest attainable error and highlights the behavior of a ``good'' filter. Since this upper limit also depends on the ratio \(\frac{S}{K}\), we can infer conclusions about the optimal behavior of the filter, establishing our uncertainty principle and particularly its relation to the spectral width of the data. 
To present our intuitions and results, we convert the time-domain loss $\mathcal{L}_\text{time}$ in Eq.~\eqref{Time domain loss} into its frequency-domain counterpart $\mathcal{L}_\text{freq}$ in Eq.~\eqref{Frequential loss copy task}.

\subsection{Parameterization of the Filter}

Here, we introduce a new filter inspired by the frequency representation of the problem, which achieves promising results on the copy task. $\mathcal{L}_\text{freq}(C, D)$ in Eq.~\eqref{Frequential loss copy task} suggests that a good filter $(c_k)$, denoted by $C(e^{i\omega})$ in the frequency domain, should approximate as best as possible the complex exponential~$e^{-iK\omega}$ for~$\omega~\in~[-\pi,\pi]$.
Additionally, Sec.~\ref{section lower bound} demonstrated that it is not possible to achieve an error smaller than \(1-\frac{S}{K}\) when solving the copy task using linear models on white noise.  Based on these results, we build a greedy approach, where each individual term~\(\frac{b_s}{1-a_se^{-i\omega}}\) of~$C(e^{i\omega})$ captures a single oscillation of the complex exponential. This should result in an error that depends on \(\frac{S}{K}\), and motivates the following representation.

\paragraph{Parameterization of the $\boldsymbol{a_s}$.} 
The parameters \(a_s\) govern the filter's ability to refer to earlier time steps, making them the most critical components of the recurrence. To provide finer control around  the complex unit circle, we employ an exponential parameterization, for $S$ odd:
\begin{equation}
    a_u = \exp\left(-\frac{\alpha}{K}\right)\exp\left(i\frac{\pi s}{K}\right), \quad s \in \llbracket -T, T\rrbracket, \text{ with } S = 2T+1,
    \label{Param_of_the_as}
\end{equation}
where \(0<\alpha\ll K\) so that $\vert a_s\vert<1$ (for stability of the system). The \(a_s\)'s have a constant modulus defined by the parameter \(\alpha\), while their phases are uniformly distributed around the unit circle, separated by an angular distance of \(\frac{\pi}{K}\).

\paragraph{Remark:} This representation in Eq.~\eqref{Param_of_the_as} ensures that the majority of the weight in each individual term \(\frac{b_s}{1-a_se^{-i\omega}}\) is concentrated around the frequency \(\frac{\pi s}{K}\), effectively capturing a single oscillation of the complex exponential. Our goal is to fit \(S\) oscillations of \(e^{-iK\omega}\), which would result in a loss proportional to \(\frac{S}{K}\).

\paragraph{Parameterization of the $\boldsymbol{b_s}$.} We can obtain the $b_s$'s by an approximate minimization as follows:
\begin{lemma}\label{Lemma param of bs}
    Let the parameters $a_s$ of the filter be defined as in Eq.~\eqref{Param_of_the_as}, where $\alpha$ is a positive real number. The asymptotic optimal parameters (when $K\rightarrow+\infty$) $b_s$ that minimize $\mathcal{L}_\text{freq}$ are given by: 
    \begin{equation}
        b_s = \frac{e^{-\alpha}(e^{2\alpha} - e^{-2\alpha})}{2K}(-1)^s, \quad s \in \llbracket -T, T\rrbracket.
        \label{Param_of_the_bs}
    \end{equation}
\end{lemma}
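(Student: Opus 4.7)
The plan is to reduce the problem to a Cauchy-type linear system by writing out the optimality conditions for the quadratic loss in $b$, then take the leading order as $K\to\infty$, propose the symmetric ansatz suggested by the form of the right-hand side, and evaluate the resulting sum in closed form via a classical Mittag--Leffler identity.

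First, since $\mathcal{L}_\text{freq}(C,D)$ is a quadratic form in $b$ for fixed $a$, and by Parseval's identity equals $\mathcal{L}_\text{time}$, I would work in the time domain where, for the white-noise weight, the normal equations obtained by differentiating with respect to $\bar b_s$ read
\[
\sum_{s'=-T}^{T}\frac{b_{s'}}{1-a_{s'}\bar a_s}=\bar a_s^{K},\qquad s\in\llbracket -T,T\rrbracket.
\]
(An identical computation, with the extra pole at $\rho$ handled as in the proof of Theorem~\ref{theorem
autocorrelated lower bound}, covers the autocorrelated case; I will focus on $\rho=0$ since the formula for $b_s$ is identical at leading order.)

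Second, I would substitute the prescribed $a_s=e^{-\alpha/K}e^{i\pi s/K}$. The denominator expands as
\[
1-a_{s'}\bar a_s \;=\; 1-e^{-2\alpha/K+i\pi(s'-s)/K}\;=\;\frac{2\alpha-i\pi(s'-s)}{K}\,\bigl(1+O(1/K)\bigr),
\]
while $\bar a_s^K=e^{-\alpha}(-1)^s$ exactly. The system therefore becomes, to leading order,
\[
\sum_{s'=-T}^{T}\frac{K\,b_{s'}}{2\alpha-i\pi(s'-s)}\;=\;e^{-\alpha}(-1)^s+O(1/K).
\]
Motivated by the $(-1)^s$ appearing on both sides (and by the Toeplitz form of the leading matrix), I would try the ansatz $b_s=\beta(-1)^s/K$ for a constant $\beta$ to be determined. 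Setting $m=s'-s$ and using $(-1)^{s'}=(-1)^s(-1)^m$, the system reduces to the single scalar equation
\[
\beta\sum_{m}\frac{(-1)^{m}}{2\alpha-i\pi m}\;=\;e^{-\alpha},
\]
where, asymptotically, the finite sum over $m\in\llbracket s-T,s+T\rrbracket$ is replaced by the full sum over $m\in\mathbb Z$.

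Third, I would evaluate the infinite sum in closed form. Writing $1/(2\alpha-i\pi m)=(i/\pi)/(m+2i\alpha/\pi)$ and invoking the classical partial-fraction identity $\sum_{n\in\mathbb Z}(-1)^n/(n+z)=\pi/\sin(\pi z)$ with $z=2i\alpha/\pi$ gives
\[
\sum_{m\in\mathbb Z}\frac{(-1)^{m}}{2\alpha-i\pi m}\;=\;\frac{i}{\sin(2i\alpha)}\;=\;\frac{1}{\sinh(2\alpha)},
\]
so $\beta=e^{-\alpha}\sinh(2\alpha)=e^{-\alpha}(e^{2\alpha}-e^{-2\alpha})/2$, which is exactly the expression in the lemma.

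The main obstacle I anticipate is justifying the passage from the truncated sum $\sum_{m=s-T}^{s+T}$ to the full integer sum: the series is only conditionally convergent, with tails of size $O(1/T)$, so the ansatz solves the finite system only up to edge corrections near $s=\pm T$. To make the ``asymptotic'' claim rigorous I would either work in a regime where $T\to\infty$ with $T/K\to 0$ (so that both the large-$K$ expansion of the matrix entries and the large-$T$ evaluation of the sum are simultaneously valid), or quantify the edge corrections explicitly by symmetry and by the $1/|m|$ decay of the alternating tail; either route controls the bulk $b_s$ by the stated formula up to $o(1/K)$, which is what the lemma asserts.
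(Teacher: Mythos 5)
Your proposal is correct and follows essentially the same route as the paper: both reduce to the normal equations $b=C^{-1}\bar a^{K}$ with $C_{ss'}=\frac{1}{1-a_s\bar a_{s'}}$, expand $C_{ss'}\sim \frac{K}{2\alpha-i\pi(s-s')}$, and identify $z=((-1)^s)_s$ as an approximate eigenvector whose eigenvalue is governed by the alternating series $\sum_{m\in\mathbb Z}\frac{(-1)^m}{2\alpha-i\pi m}=\frac{2}{e^{2\alpha}-e^{-2\alpha}}$ --- the paper evaluates this via Fourier coefficients of $e^{-\frac{2\alpha}{\pi}(\omega-\pi)}$ and Dirichlet's theorem rather than your $\pi/\sin(\pi z)$ partial-fraction identity, but that is the same classical fact. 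The truncation and edge-correction issue you flag is precisely what the paper addresses in the regime $T\to\infty$, $T/K\to 0$, where it additionally invokes the bounded condition number of the Toeplitz-like matrix $C$ to pass from the approximate eigenvector relation $Cz=\frac{2K}{e^{2\alpha}-e^{-2\alpha}}z+O(K/T)$ to the stated asymptotics of the actual minimizer.
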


\begin{proof}
As highlighted in Eq.~\eqref{Q}, the approximation error, defined in terms of $a_s$ and $b_s$, is quadratic and convex with respect to $b_s$. Hence, the optimal solution is given by:
\begin{equation}
b = C^{-1}\bar{a}^K,
\label{bs linear system inversion}    
\end{equation}
where the matrix $C$ is defined as $C_{ss'} = \frac{1}{1 - a_s \bar{a}_{s'}}$. Using asymptotic expansions for large $K$, the eigenvector of $C$ corresponds to $z = \left((-1)^s\right)_s \in \mathbb{R}^S$ associated to the eigenvalue $\frac{2}{e^{2\alpha}-e^{-2\alpha}}$, yielding the result, thanks to the asymptotic expansion of $(a_s)$. See full proof in Appendix~\ref{appendix subsection asymptotic bs}.
\end{proof}

Note that $(a_s)$ and $(b_s)$ from complex conjugate pairs; this allows to obtain a real filter.

\begin{figure}[!htb]
    \centering
    \includegraphics[width=1\linewidth]{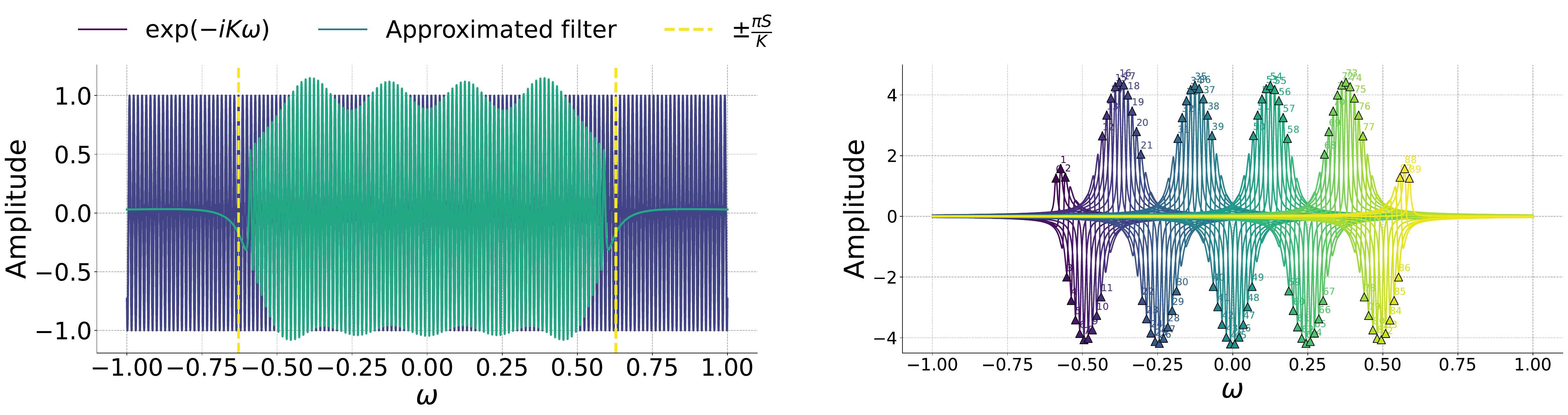}

    \vspace*{-.3cm}
    
    \caption{\textit{Poor performance of the filter for white noise data is due to its approximation of the complex exponential over a limited frequency window of size $\frac{\pi S}{K}$. Left: The target filter $\exp(-iK\omega)$ (blue) for $K=450$, and the approximated filter using linear recurrences (green) for $S=90$. The approximation is reasonably accurate within the frequency window of size $\frac{\pi S}{K}$, indicated by the dashed yellow  lines. Outside this window, the filter is zero, demonstrating the inability of filters based on linear recurrences to perfectly memorize long-range data with broad spectra. Right: Contributions from all individual terms $\frac{b_s}{1 - a_s e^{-i\omega}}$ for $s\in\llbracket -45, 45 \rrbracket$. Each individual term captures one oscillation of the complex exponential, making their contributions highly localized. This design reflects the structure of the filter's parameters.}}
    \label{figure spectral analysis}
\end{figure}

\begin{lemma}\label{definition new filter}
    Let $K$ and $S$ be two large integers such that $S \ll K$. Consider the parameters $(a_s)_{s \in \llbracket -T, T \rrbracket}$ from Eq.~\eqref{Param_of_the_as} and $(b_s)_{s \in \llbracket -T, T \rrbracket}$ from Eq.~\eqref{Param_of_the_bs}. Then the spectral representation of the filter is given by
    \begin{equation*}
        C(e^{i \omega}) = \sum_{s=-T}^{T} \frac{b_s}{1 - a_s e^{-i\omega}} = \sum_{s=-T}^{T} \frac{(-1)^s e^{-\alpha}(e^{2\alpha} - e^{-2\alpha})}{2K \big(1 - e^{-\frac{\alpha}{K}} e^{i(\frac{\pi s}{K}-\omega)}\big)}.
    \end{equation*}
\end{lemma}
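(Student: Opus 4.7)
The plan is to proceed by direct substitution. The spectral representation $C(e^{i\omega}) = \sum_{s} b_s/(1 - a_s e^{-i\omega})$ has already been established in the passage around Eq.~\eqref{Frequential loss copy task}: it is the discrete-time Fourier transform of the kernel $c_k = \sum_s b_s a_s^k$ obtained by exchanging sum and integral and applying the geometric series identity $\sum_{k=0}^{\infty}(a_s e^{-i\omega})^k = 1/(1 - a_s e^{-i\omega})$. I would first verify that this identity applies, i.e.~that the stability condition $|a_s|<1$ holds under Eq.~\eqref{Param_of_the_as}: since $a_s = e^{-\alpha/K}e^{i\pi s/K}$, we have $|a_s| = e^{-\alpha/K} < 1$ as soon as $\alpha > 0$, so each geometric series converges and the pole expansion is legitimate.

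Next, I would plug the parameterization $a_s = e^{-\alpha/K}e^{i\pi s/K}$ from Eq.~\eqref{Param_of_the_as} into the denominator $1 - a_s e^{-i\omega}$, grouping the complex exponentials to obtain $1 - e^{-\alpha/K}\,e^{i(\pi s/K - \omega)}$. Then I would substitute the value $b_s = \frac{e^{-\alpha}(e^{2\alpha}-e^{-2\alpha})}{2K}(-1)^s$ from Eq.~\eqref{Param_of_the_bs} into the numerator. Pulling the constants $e^{-\alpha}(e^{2\alpha}-e^{-2\alpha})/(2K)$ outside the $s$-dependent factors and reassembling, one recovers exactly the claimed closed form.

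Because the lemma is essentially a bookkeeping restatement, obtained by inserting Eqs.~\eqref{Param_of_the_as} and~\eqref{Param_of_the_bs} into the pole expansion of $C$ that was derived earlier, there is no real technical obstacle, and the hypothesis $S \ll K$ plays no role in the algebraic identity itself (it only matters for the asymptotic analysis of the resulting expression). The substantive content of this section lies downstream, in using the explicit spectral formula to prove that $C(e^{i\omega})/D(e^{i\omega})$ converges to the indicator of the window $[-\pi T/K, \pi T/K]$ (Theorem~\ref{convergence to window}) and hence matches the lower bound of Section~\ref{section lower bound} up to a constant. The present lemma serves only to record the precise expression that these subsequent arguments will manipulate.
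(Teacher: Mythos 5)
Your proposal is correct and matches the paper's treatment: the lemma is a direct substitution of Eqs.~\eqref{Param_of_the_as} and~\eqref{Param_of_the_bs} into the pole expansion $C(e^{i\omega}) = \sum_s b_s/(1-a_s e^{-i\omega})$ established via the geometric series in Appendix~\ref{appendix subsection frequency loss}, and the paper accordingly gives no separate proof. Your added check that $|a_s| = e^{-\alpha/K} < 1$ justifies the series convergence, and your observation that $S \ll K$ is irrelevant to the algebraic identity itself is accurate.
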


In the time domain, this filter is approximately equivalent to a shifted sine cardinal, see Fig.~\ref{fig:time domain filter}, highlighting its inherent smoothness and symmetry. The positions of its parameters on the complex plane are strongly influenced by the ratio $S/K$, which corresponds to the horizon of the copy task relative to the order of the linear recurrence. This dependency captures the trade-off between long-term memory and the granularity of the recurrence structure.

\begin{theorem}[Upper bound of the error]\label{thm upper bound}
    Consider $c_k$ the filter defined in Lemma~\ref{definition new filter}, and $(d_k) = (1_{k=K})$ the shift-$K$ filter. Then, for $S,K \to +\infty$ with $S/K \to 0$, we have 
    \begin{equation}
        \mathcal{L}_\text{time}(c, d) \sim 1 - \frac{e^{-2\alpha}(e^{2\alpha} - e^{-2\alpha})}{2} \times \frac{S}{K}.
        \label{Upper bound as and bs}
    \end{equation}
\end{theorem}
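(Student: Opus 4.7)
My plan is to pass to the frequency domain via Parseval's identity, build a sharp local asymptotic of $C(e^{i\omega})$ using the Mittag-Leffler expansion of the cosecant, and then evaluate two elementary trigonometric integrals. In the white-noise regime relevant here (implicit in Lemma~\ref{Lemma param of bs}, where $\Gamma \equiv 1$), Parseval rewrites the loss as $\mathcal{L}_\text{time}(c, d) = 1 + \frac{1}{2\pi}\int_{-\pi}^{\pi}|C(e^{i\omega})|^2 d\omega - \frac{1}{\pi}\int_{-\pi}^{\pi}\mathrm{Re}\bigl(C(e^{i\omega})e^{iK\omega}\bigr)d\omega$, and both integrals are asymptotically dominated by the active window $W = [-\pi T/K, \pi T/K]$ where $C$ is non-negligible.

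Inside $W$, I would parametrize $\omega = (\pi s_0 + x)/K$ with $s_0 \in \llbracket -T, T\rrbracket$ and $x \in [-\pi/2, \pi/2]$, so that a first-order Taylor expansion gives $1 - a_s e^{-i\omega} = (\alpha + i(x - \pi(s - s_0)))/K + O(1/K^2)$. Writing $t = s - s_0$ and $\beta = \frac{e^{-\alpha}(e^{2\alpha} - e^{-2\alpha})}{2}$, the filter becomes $C(e^{i\omega}) = \beta(-1)^{s_0}\sum_{t} \frac{(-1)^t}{\alpha + i(x - \pi t)} + o(1)$; extending the sum from $t \in \llbracket -T - s_0, T - s_0\rrbracket$ to $\mathbb{Z}$ and applying the Mittag-Leffler formula $\csc z = \sum_{n \in \mathbb{Z}}\frac{(-1)^n}{z - n\pi}$ collapses it to the compact form
\[ C(e^{i\omega}) \sim \frac{-i\beta(-1)^{s_0}}{\sin(x - i\alpha)}. \]

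The identity $|\sin(x - i\alpha)|^2 = \sin^2 x + \sinh^2\alpha$ then reduces everything to elementary integrals. Substituting $u = \tan x$ gives $\int_{-\pi/2}^{\pi/2}\frac{dx}{\sin^2 x + \sinh^2\alpha} = \frac{2\pi}{\sinh(2\alpha)}$, and a direct calculation yields $\int_{-\pi/2}^{\pi/2}\mathrm{Re}\bigl(\frac{-ie^{ix}}{\sin(x - i\alpha)}\bigr)dx = 2\pi e^{-\alpha}$. The $(-1)^{s_0}$ phase cancels in both $|C|^2$ and in $Ce^{iK\omega} = (-1)^{s_0} e^{ix} C$, so summing over the $S = 2T+1$ values of $s_0$ with the Jacobian $d\omega = dx/K$ produces $\frac{1}{2\pi}\int_W |C|^2 d\omega \sim \frac{S\sinh(2\alpha) e^{-2\alpha}}{K}$ and $-\frac{1}{\pi}\int_W \mathrm{Re}(Ce^{iK\omega}) d\omega \sim -\frac{2S\sinh(2\alpha) e^{-2\alpha}}{K}$. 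Their sum with $1$ is exactly $1 - \frac{S}{K} \cdot \frac{e^{-2\alpha}(e^{2\alpha} - e^{-2\alpha})}{2}$, the claimed equivalent.

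The hard part will be error control, which splits into three distinct pieces. The Taylor linearization of each denominator is $O(1/K)$ and integrates to $o(S/K)$; the tail of the Mittag-Leffler extension is an alternating series of size $O(1/(T - |s_0|))$, uniformly negligible for $s_0$ well inside $\llbracket -T, T\rrbracket$ and treated separately by a cruder bound near the boundary. The most delicate piece is showing that the mass of $C$ outside $W$ contributes $o(S/K)$ to both integrals; for this I would use the closed-form Dirichlet sum $\sum_s (-1)^s a_s^k = e^{-\alpha k/K} \cdot \frac{(-1)^T \cos((T + \tfrac{1}{2})\pi k/K)}{\cos(\pi k/(2K))}$ to bound the Fourier coefficients of $C$ outside the window, exploiting the destructive interference enforced by the $(-1)^s$ signs. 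Stitching these three estimates to the pointwise asymptotic above converts the formal equivalence into the asymptotic equivalence $\sim$ stated in the theorem.
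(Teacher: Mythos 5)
Your proposal is correct in its main computation and reaches the right constant, but it takes a genuinely different route from the paper. The paper never leaves the exact closed form $\mathcal{L}_\text{time} = 1 + \sum_{s,s'}\frac{b_s\bar b_{s'}}{1-a_s\bar a_{s'}} - 2\operatorname{Re}\sum_s b_s a_s^K$: the cross term is evaluated exactly (it equals $e^{-2\alpha}(e^{2\alpha}-e^{-2\alpha})S/K$ with no asymptotics at all, since $a_s^K = e^{-\alpha}(-1)^s$), and the double sum is reduced, via the change of variable $n=s-s'$ and a term-by-term comparison with $\frac{K}{2\alpha - i\pi n}$, to the identity $\sum_{n\in\mathbb{Z}}\frac{(-1)^n}{2\alpha-i\pi n} = \frac{2}{e^{2\alpha}-e^{-2\alpha}}$ plus alternating-series remainder bounds (Lemmas~\ref{appendix lemma 12.1}, \ref{appendix lemma 12.2}, \ref{appendix lemma fourier series}). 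Your route instead goes through Parseval and a decomposition of the frequency axis into $S$ cells of width $\pi/K$, with the local cosecant asymptotic $C\sim -i\beta(-1)^{s_0}/\sin(x-i\alpha)$ (which I checked is consistent with Theorem~\ref{convergence to window}, since $2\sinh(\alpha+ix) = 2i\sin(x-i\alpha)$) and two explicit integrals, both of which are correct. Note that your Mittag--Leffler identity $\csc z = \sum_n (-1)^n/(z-n\pi)$ evaluated at $z = x - i\alpha$ is exactly the paper's Fourier-series identity in disguise, so the two proofs share the same analytic kernel; what differs is the bookkeeping. What your approach buys is the geometric window picture; what it costs is the extra obligation to bound the mass of $C$ outside $[-\pi T/K,\pi T/K]$, which the paper's route avoids entirely. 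For $\int|C|^2$ that outside mass is $O(1/K)=o(S/K)$ (the boundary transition region has width $O(1/K)$ in $\omega$ and $|C|=O(1)$ there, and beyond it $|C|$ decays like the reciprocal of the distance in units of $\pi/K$), so your plan works; but for the cross term the naive absolute bound outside the window is only $O(\log K/K)$, which is \emph{not} $o(S/K)$ when $S$ grows slower than $\log K$. The clean fix is to not window-decompose the cross term at all: $\frac{1}{\pi}\int\operatorname{Re}(Ce^{iK\omega})\,d\omega = 2\operatorname{Re}(c_K) = 2\sum_s b_s a_s^K$ is exact, so only the $\|c\|_2^2$ term needs asymptotics. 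With that adjustment, and with the boundary-$s_0$ tail handled as you sketch (sacrificing $o(T)$ cells near $|s_0|=T$), your argument closes.
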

\paragraph{Remark:} Note that the relation of $\mathcal{L}_\text{time}$ to $\alpha$ in Theorem~\ref{thm upper bound} incites to take $\alpha$ very large. Nevertheless, this would cause the value of the norm of $b_s$ in Eq.~\eqref{Param_of_the_bs} to explode. In practice, we always chose $\alpha =1$, a choice which also leads to a closer match with HiPPO initialization~(see Section~\ref{sec:hippo}). We further note that our bound on the $b_s$ in Lemma~\ref{Lemma param of bs} is strictly related to the discussion around benefits of complex parametrization in~\citet{ran2024provable}: as $a_s$ become closer to reals~(magnitude increases at equal phase), approximating arbitrary filters requires exploding coefficients~(cf. their Theorem 2). \ \\

When $S \ll K$, the term $\frac{S}{K}$ becomes very small, causing the error in Eq.~\eqref{Upper bound as and bs} to approach 1. We recover the result of Section~\ref{section lower bound}, obtaining a loss that is similar up to a constant factor. This approximation error for our filter serves as an upper bound for the approximation of shift-$K$ filter by linear recurrences.

This behavior can be attributed to the inherent properties of the filter, as illustrated in Fig.~\ref{figure spectral analysis}. The filter approximates reasonably well all the oscillations of $e^{-iK\omega}$ over the frequency window $[\frac{-\pi T}{K}, \frac{\pi T}{K}]$ and vanishes outside this window. Each individual term of the partial fraction decomposition is responsible for capturing a peak of the complex exponential. Therefore, data exhibiting large frequency spectrum like white noise cannot be memorized properly, explaining the poor performance of the filter on our copy task. This is how we designed it, to catch up with the lower bound. This is made precise in the following theorem.

\begin{theorem}\label{convergence to window}
     For $\alpha$ real and positive and $\Omega=\frac{K\omega }{\pi}$, 
     \[C(e^{i\omega}) = 
\sum_{s=-T}^{T} \frac{(-1)^u e^{-\alpha}(e^{2\alpha} - e^{-2\alpha})}{2K \big(1 - e^{-\frac{\alpha}{K}} e^{i(\frac{\pi u}{K}-\frac{\pi\Omega}{K})}\big)} \underset{S/K\rightarrow 0}{\underset{S\rightarrow+\infty}{\sim} }
\begin{cases} 
\frac{e^{-\alpha}(e^{2\alpha}-e^{-2\alpha})}{2}\times\frac{i(-1)^{T+1}\times 2\lfloor\Omega\rfloor}{2\pi(\lfloor\Omega\rfloor-T)(\lfloor\Omega\rfloor+T)} & \text{if } \vert\Omega\vert > T, \\
\frac{e^{-\alpha}(e^{2\alpha}-e^{-2\alpha})}{e^{\alpha}e^{i\pi\Omega}-e^{-\alpha}e^{-i\pi\Omega}} & \text{if } \vert\Omega\vert < T.
\end{cases}
\]
\end{theorem}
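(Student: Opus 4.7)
The plan is to Taylor-expand the denominators in $1/K$ to reduce the statement to the asymptotics of a cleaner discrete sum $S_T$, then handle the two regimes separately according to whether the near-singular index $s\approx\Omega$ sits inside or outside the summation window $[-T,T]$.

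\emph{Step 1 (reduction).} Using $e^{-\alpha/K}=1-\alpha/K+O(K^{-2})$ and $e^{i\pi(s-\Omega)/K}=1+i\pi(s-\Omega)/K+O(K^{-2})$, each denominator satisfies $1 - e^{-\alpha/K}e^{i\pi(s-\Omega)/K}=(\alpha - i\pi(s-\Omega))/K + O(K^{-2})$. The prefactor $1/(2K)$ cancels the leading $1/K$, so
\[
C(e^{i\omega})\;\sim\;\frac{e^{-\alpha}(e^{2\alpha}-e^{-2\alpha})}{2}\,S_T,\qquad S_T:=\sum_{s=-T}^{T}\frac{(-1)^s}{\alpha - i\pi(s-\Omega)}.
\]
The accumulated error is controlled because the assumption $T/K\to 0$ forces $(2T+1)\cdot O(K^{-2})\to 0$.

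\emph{Step 2a (case $|\Omega|<T$).} Set $\beta=\alpha+i\pi\Omega$. Pairing $s$ with $-s$ and isolating $s=0$ gives $S_T = \beta^{-1}+2\beta\sum_{s=1}^{T}(-1)^s/(\beta^2+\pi^2 s^2)$. Since the near-singular index $s\approx\Omega$ lies strictly inside $\{-T,\dots,T\}$, the truncation tail is an alternating series whose terms decay like $1/s^2$ and is thus $o(1)$ as $T\to\infty$. Passing to the infinite sum and invoking the classical partial-fraction identity $\pi\csc(\pi z)=\sum_{n\in\mathbb{Z}}(-1)^n/(z-n)$ with $z=i\beta/\pi$ yields, via $\csc(i\beta)=-i/\sinh(\beta)$, the closed form $S_\infty=1/\sinh(\beta)$. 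Writing $2\sinh(\beta)=e^{\alpha}e^{i\pi\Omega}-e^{-\alpha}e^{-i\pi\Omega}$ and multiplying by the prefactor delivers the lower branch of the claimed equivalence.

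\emph{Step 2b (case $|\Omega|>T$) --- the main obstacle.} Here the infinite-sum trick fails: $\pi\csc(\pi w)$ is $O(1)$ while $S_T=O(1/\Omega)$, so the discarded tail almost entirely cancels the $\csc$ sum, and one must estimate $S_T$ directly. From the same paired identity, together with the expansion $\beta^2+\pi^2 s^2\sim -\pi^2(\Omega^2-s^2)$ valid for $|\Omega|\gg T$, the endpoint pair $s=\pm T$ alone contributes $2i(-1)^{T+1}\Omega/[\pi(\Omega^2-T^2)]$; a delicate cancellation with the interior terms $s=0,\pm1,\dots,\pm(T-1)$ halves this, producing
\[
S_T\;\sim\;\frac{i(-1)^{T+1}\lfloor\Omega\rfloor}{\pi(\lfloor\Omega\rfloor^2-T^2)}.
\]
Making the cancellation precise is the technical heart of the proof, since every term of the alternating finite sum has comparable magnitude: the argument proceeds either by Abel summation across consecutive $s$-pairs or by an explicit manipulation of the rational function $\sum_{s=-T}^T(-1)^s/(w-s)$ in $w=\Omega-i\alpha/\pi$. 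The floor $\lfloor\Omega\rfloor$ (rather than $\Omega$) arises because the discrete phase is locked to the integer index nearest $\Omega$. Multiplying by the prefactor delivers the upper branch.
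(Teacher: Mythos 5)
Your proposal is correct and takes essentially the same route as the paper's proof: your Step 1 is the paper's lemma replacing each denominator $1-e^{-\alpha/K}e^{i\pi(s-\Omega)/K}$ by $(\alpha-i\pi(s-\Omega))/K$ up to an $O(1/K)$ total error; your Step 2a matches the paper's treatment of $\vert\Omega\vert<T$ (the paper derives the value $1/\sinh$ of the completed bilateral sum via the Fourier series of $\omega\mapsto e^{-2\alpha(\omega-\pi)/\pi}$ and Dirichlet's theorem rather than quoting the cosecant partial-fraction expansion, but it is the same identity); and the ``delicate cancellation'' you defer in Step 2b is precisely the paper's alternating-series remainder lemma $\sum_{n\geq N}(-1)^n/n=\tfrac{(-1)^N}{2N}+\tfrac12\sum_{n\geq N}\tfrac{(-1)^n}{n(n+1)}$ (and its complex analogue), proved by grouping consecutive terms --- exactly the Abel-summation route you name, applied to the two semi-infinite tails whose difference is the finite sum. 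The only substantive difference is that you state rather than carry out that last elementary estimate, and, like the paper, you implicitly assume $\Omega$ stays away from the boundary $\pm T$ so that the $O(1/(T-\lfloor\Omega\rfloor))$ remainders are genuinely lower order.
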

In particular, we obtain that $C(e^{i\omega})$ tends to $0$ if $\Omega$ is out of the window $[-T, T]$, while  inside the window, we obtain some oscillations around 1 whose magnitude depends on $\Omega$ (see Fig.~\ref{figure spectral analysis}).

When both $S$ and $K$ tend to infinity, $K$ being significantly larger than $S$, the filter converges to a rectangular window function on the frequency interval $[-\frac{\pi T}{K}, \frac{\pi T}{K}]$, taking the value oscillating around 1 within this interval and 0 outside it. See an illustration in Fig.~\ref{fig:window}. This limitation highlights why the filter performs poorly on white noise, as the uniform spectral density of white noise extends far beyond this narrow frequency window. Conversely, as the frequency window narrows (determined by the autocorrelation $\Gamma(e^{i\omega}))$, the filter becomes better aligned with the target response, leading to improved performance. 

\subsection{Performance in the Autocorrelated case}

The autocorrelation factor \(\Gamma(e^{i\omega})\) exerts a narrowing effect in the frequency domain, reducing the bandwidth of frequencies over which $\mathcal{L}_\text{freq}(C, D)$ is evaluated. See Appendix~\ref{appendix subsection natural pair} for more details. Since our filter is specifically designed to accurately approximate the oscillations of the complex exponential over a frequency window of size \(\frac{\pi S}{K}\), it follows logically that the loss decreases as the autocorrelation factor \(\rho\) approaches 1.

We can compute the loss of the idealized filter in the frequency domain $1_{|\omega| \leqslant \frac{2\pi S}{K}} e^{-i K \omega }$: 
$$
1 - \frac{1}{2\pi}\int_{-\frac{\pi S}{2K}}^\frac{\pi S}{2K} \Gamma(e^{i\omega}) = 
1 - \frac{2}{\pi}
\arctan \Big(
\frac{1+\rho}{1-\rho}\tan  \frac{\pi S}{K}
\Big) \sim 1 -\frac{2}{\pi}
\frac{1+\rho}{1-\rho}  \frac{\pi S}{K},
$$
when $S/K$ goes to zero, which is corresponding to the lower bound in Theorem~\ref{theorem 
autocorrelated lower bound}.

\subsection{Connection with HiPPO Initialization}
\label{sec:hippo}

HiPPO theory~\citep{gu2020hippo} was crucial for the development of modern recurrent models. The main result of this theory is that linear continuous-time ODEs~(linear RNNs, when discretized) can perform online compression of smooth input signals by storing projection onto an $S$-dimensional ~($S$ is the dimension of $x$ in Eq.~\eqref{eq:1}) polynomial basis. Starting from a \textit{dense} HiPPO-inspired $A$ matrix,~\citet{gupta2022diagonal} first proposed to initialize the $A$ matrix in Eq.~\eqref{eq:1} as the diagonal part of its ``diagonal plus low rank'' approximation. \citet{gu2022parameterization} additionally simplified this expression conjecturing~(see their Conjecture 5) a simplified closed-form solution that works well in practice: $a_s = \exp\left(-\frac{\Delta}{2}\right)  \exp\left(i\pi s\Delta\right)$~(S4D-Lin). The parameter~$\Delta$ here is a learnable coefficient resulting from discretization of the approximate HiPPO system. There is no theory indicating how to initialize this coefficient, though further studies~\citep{gu2023how} suggest initializing near $1/K$~($K$ being the sequence length) yields good results. 
Our theory gives grounding to this initialization practice, as well as to the S4D-Lin approximation, using a different viewpoint: our closed-form approximation for the filter $\delta_K$ in Eq.~\eqref{Param_of_the_as} is $a_s = \exp\left(-\frac{\alpha}{K}\right)\exp\left(i\frac{\pi s}{K}\right)$, with $s \in \llbracket -T, T\rrbracket$ and $S = 2T+1$. According to Lemma~\ref{Lemma param of bs}, for numerical stability $\alpha$ should be a small scalar. For $\alpha=1/2$, we get $\exp\left(-\frac{1}{2K}\right)\exp\left(i\frac{\pi s}{K}\right)$, i.e., exactly S4DLin with $\Delta =1/K$. We believe this connection to be a piece of evidence motivating correlation between magnitude and phase in modern variants of S4.

\section{Experiments}

We conclude our analysis of the copy task problem with numerical experiments illustrating the potential benefits of initialization of the parameters of linear models using representations from Eq.~\eqref{Param_of_the_as} and Eq.~\eqref{Param_of_the_bs}. We consider the following task: Given a dataset of autoregressive sequences $U = (u_1, u_2, \dots, u_N)$ of length $N$, generated as: \(u_n = \rho u_{n-1} + \varepsilon_n, \  \varepsilon_n \sim \mathcal{N}(0, \sigma^2), \ u_1 \sim \mathcal{U}(0,1),\) where $\rho \in [0,1)$ is the correlation factor and $\sigma^2 = 1 - \rho^2$, the task is for the model to restore the output \(Y = u_{t^*}\) for a fixed index $t^*$ in the sequence. This boils down to learning a shift of $K^* = N-t^*$ with a finite number of samples. We use an input-independent linear model as in Eq.~\eqref{eq:filter_new}, where the vector $a$ is initialized with Eq.~\eqref{Param_of_the_as}, and vector $b$ with Eq.~\eqref{Param_of_the_bs}. In an initial set of experiments, we demonstrate the advantages of initializing with linearly-spaced phases for tasks with a large horizon, compared to random initialization with phases sampled across the entire disk. Subsequently, we assess the robustness of gridded initialization to variations in $K_\text{init}$, highlighting its flexibility—a crucial property for real-world applications where the task horizon is typically uncertain. See results in Fig.~\ref{fig:xps}, where (left plot) we see that our filter yields increasing benefits as $\rho$ grows compared to initialization with random phases, and (right plot) the optimal performance is obtained with the correct $K_\text{init} = K^*$ for initialization, yet the method remains robust even when initialized far from the optimal value. In all experiments, we took $S=128$.
 \begin{figure}[h]
     \centering
     \begin{minipage}{0.55\linewidth}
         \centering
        \includegraphics[width=\linewidth]{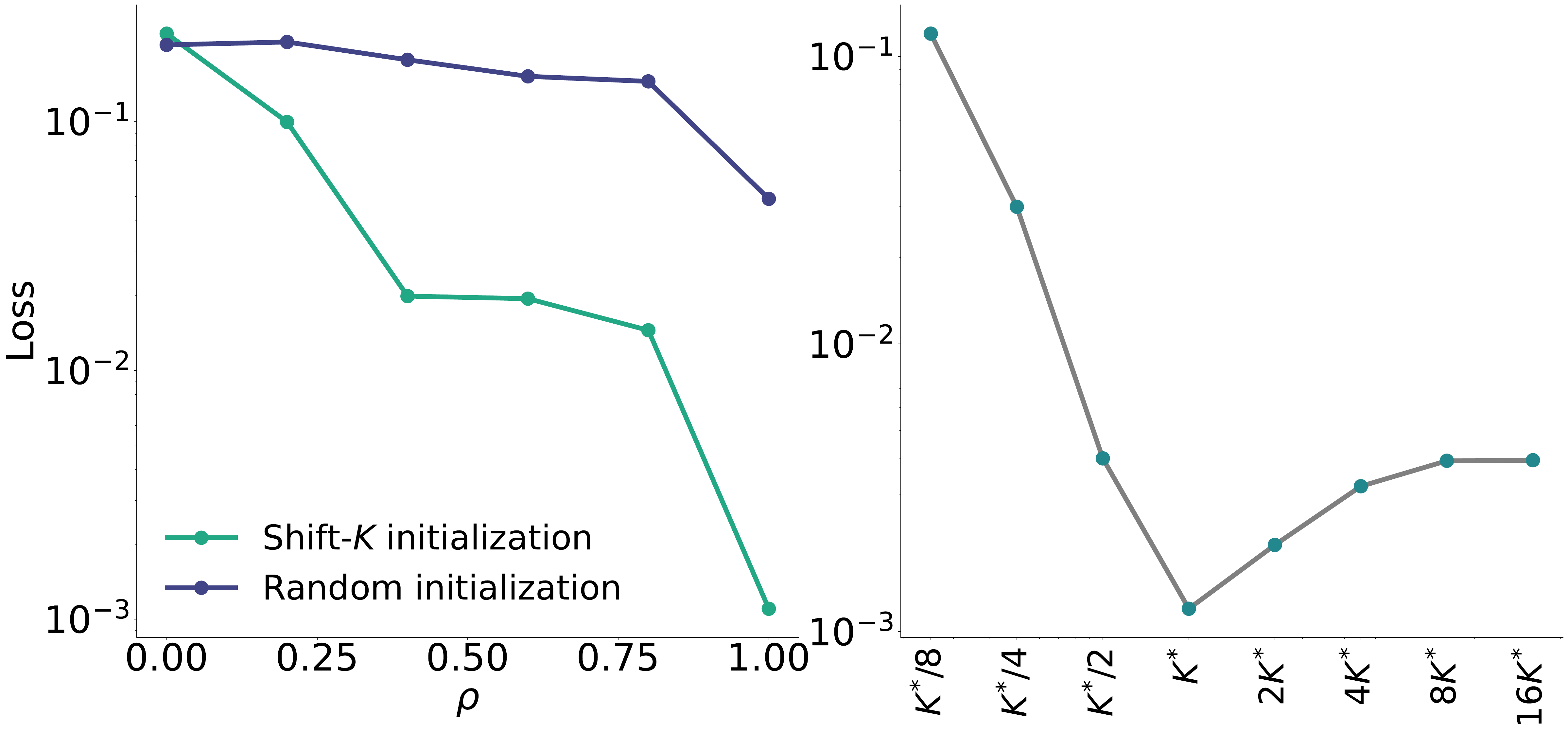}
     \end{minipage}
     \hspace{1cm}
     \begin{minipage}{0.35\linewidth}
         \vspace{0pt}
         \caption{{Initialization with regularly spaces phases enhances robustness and outperforms random initialization near the unit disk.}  
        {Left.} For $N=1500$ and $t^* = 200$, initialization using our filter defined in Eq.~\eqref{Param_of_the_as}  and Eq.~\eqref{Param_of_the_bs}.
        {Right.} For $N=2250$ and $t^*=250$, the task consists of learning a shift-$K$ filter with $K^*=2000$. Here, $\rho = 0.7$.  }
         \label{fig:xps}
         \vspace{0pt}
     \end{minipage}
 \end{figure}
\vspace{-5mm}
\section{Conclusion}\label{section discussion}

We demonstrated that the performance of linear models on a simplified copy task, when applied to stationary input sequences, depends on the ratio $\frac{S}{K}$, where $S$ denotes the size of the state space, and $K$ represents the lag of the copy task. This analysis revealed a form of uncertainty principle governing the resolution of our copy task with linear recurrences. To explain this trade-off between memory capacity and filter performance, we introduced a new filter that achieves the same performance on our copy task up to constants. This representation offers fresh insights into the filter's behavior, particularly in the spectral domain. As highlighted by \cite{orvieto2023resurrecting} and further elaborated by \cite{gu2022efficiently}, the initialization of the recurrence matrix's entries plays a crucial role in achieving high performance. Specifically, these studies constrain both the magnitudes and phases of the diagonal entries to depend on $\frac{1}{\Delta}$, where $\Delta$ has an order of magnitude similar to the sequence length. In this paper, we aim to provide an explanation for the efficacy of this specific initialization: it arises from the linear model's endeavor to retain certain elements of the sequence, thereby approximating the shifted Dirac function.

\section*{Acknowledgements}
The authors would like to thank Laurent Baratchart and Sylvain Chevillard for their helpful discussion on rational approximations of the complex exponential. We also thank Sajad Movahedi and Felix Sarnthein for their helpful comments on this manuscript.
This work has received support from the French government, managed by the National Research Agency, under the France 2030 program with the reference "PR[AI]RIE-PSAI" (ANR-23-IACL-0008). 
Antonio Orvieto is supported by the Hector Foundation.

\bibliography{main.bib}

\newpage
\setcounter{section}{0}

\begin{appendices}

In this Appendix, we provide a detailed proof for all our theoretical results. We start in Appendix~\ref{RNN basics} with an equivalence of various representations of linear RNNs, then in Appendix~\ref{review} with a review of fundamentals of signal processing.
\listofappendices

\counterwithin{figure}{section}
\counterwithin{table}{section}

\newpage

\section{Recurrent Neural Networks and Diagonal forms}

\label{RNN basics}

Linear recurrent networks such as SSMs, in their simplest form, are causal models acting on a $d$ dimensional input sequence with $L$ elements $U\in\mathbb{R}^{d\times L}$, producing an output sequence $Y\in\mathbb{R}^{d\times L}$ through a filtering process parametrized by variables $A\in\mathbb{R}^{N\times N}$, $B\in\mathbb{R}^{N\times d}, P\in\mathbb{R}^{d\times N}$. Let $U_n\in\mathbb{R}^{d}$ denote the $n$-th timestamp data contained in $U$, a linear RNN processes the inputs as follows~\citep{gu2022parameterization,orvieto2023resurrecting}
\begin{equation}
    X_{n} = A X_{n-1} + B U_n,\qquad Y_{n} = PX_n.
    \label{eq:appendix linear_RNN}
\end{equation}

\begin{proposition}[Linear RNNs and convolution form]
    Let $A\in\mathbb{R}^{S\times S}$ such that $A$ is diagonal, $B\in\mathbb{R}^{S\times 1}, P\in\mathbb{R}^{1\times S}$, and $u = (u_n)_{n\in\mathbb{Z}}$ be a univariate input signal. The output signal $(y_n)_{n\in\mathbb{Z}}$ can write 
    \[
    y_n = \sum_{k=0}^\infty c_ku_{n-k}
    \]
    with $c_k = \sum_{s=1}^Sa_s^kb_s$.
\end{proposition}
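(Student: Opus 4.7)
The plan is to unroll the recurrence in time, then exploit the diagonal structure of $A$ to identify the convolution coefficients explicitly.

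First I would iterate the recurrence $X_n = A X_{n-1} + B U_n$ backwards in $n$. Applying the relation $k$ times gives $X_n = A^k X_{n-k} + \sum_{j=0}^{k-1} A^j B U_{n-j}$. Under the implicit assumption that the system is stable (all eigenvalues of $A$ inside the unit disk, i.e., $|a_s|<1$ for all $s$, which is the setting used throughout the paper) and that the input $(u_n)$ is uniformly bounded (or more generally that the tail $A^k X_{n-k}$ vanishes as $k\to\infty$), I would pass to the limit $k\to\infty$ and obtain the absolutely convergent series $X_n = \sum_{k=0}^\infty A^k B u_{n-k}$. A brief remark justifies absolute convergence: $\|A^k B\|$ decays geometrically because $A$ is diagonal with entries of modulus strictly less than one.

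Next I would multiply on the left by $P$ and use linearity of the sum to write $y_n = P X_n = \sum_{k=0}^\infty \bigl(P A^k B\bigr) u_{n-k}$. This already gives the convolution form $y_n = \sum_{k=0}^\infty c_k u_{n-k}$ with $c_k = P A^k B$.

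Finally, I would use the diagonal structure. Writing $A = \mathop{\textnormal{diag}}(a_1,\dots,a_S)$ yields $A^k = \mathop{\textnormal{diag}}(a_1^k,\dots,a_S^k)$, hence
\[
c_k = P A^k B = \sum_{s=1}^S P_s\, a_s^k\, B_s = \sum_{s=1}^S a_s^k\, b_s,
\]
where I set $b_s := P_s B_s$ (a scalar, since $P$ is a row and $B$ is a column). This is exactly the claimed expression and matches Eq.~\eqref{eq:cab}.

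There is no real obstacle here: the only subtlety is the passage to the infinite sum, which is just a matter of stating the stability/integrability hypothesis that is standing throughout the paper. Everything else is a direct consequence of linearity and of the fact that powers of a diagonal matrix act entrywise.
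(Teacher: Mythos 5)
Your proof is correct and follows essentially the same route as the paper: unroll the recurrence, use that powers of a diagonal matrix act entrywise, and absorb the readout into the coefficients (you set $b_s := P_s B_s$ where the paper instead takes $P = (1,\dots,1)$ without loss of generality). If anything, you are slightly more careful than the paper about the passage from the finite unrolling to the infinite sum, which the paper handles implicitly via zero initialization.
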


\begin{proof}
    We have $A = \begin{pmatrix}
        a_1 & \dots & \\
        & \ddots & \\
        & \dots & a_S
    \end{pmatrix}, B = \begin{pmatrix}
        b_1\\
        \vdots\\
        b_S
    \end{pmatrix}, P = \begin{pmatrix}
        p_1 & \dots & p_S
    \end{pmatrix}$.

\begin{align*}
    X_n &= AX_{n-1} + Bu_n\\
    &= A(AX_{n-2} + Bu_{n-1}) + Bu_n  = \dots = \sum_{k=0}^nA^kBu_{n-k}\\
    &= \sum_{k=0}^n\begin{pmatrix}
        a_1^k & \dots & \\
        & \ddots & \\
        & \dots & a_S^k
    \end{pmatrix}\begin{pmatrix}
        b_1\\\vdots\\b_S
    \end{pmatrix}u_{n-k} = \sum_{k=0}^n\begin{pmatrix}
        a_1^kb_1\\\vdots \\a_S^kb_s
    \end{pmatrix}u_{n-k}.
\end{align*}
Finally,
\begin{align*}
    y_n = \begin{pmatrix}
        p_1 & \dots & p_S
    \end{pmatrix}X_n = \sum_{k=0}^n\begin{pmatrix}
        p_1 & \dots & p_S
    \end{pmatrix}\begin{pmatrix}
        a_1^kb_1\\\vdots\\a_N^kb_N
    \end{pmatrix}u_{n-k} & = \sum_{s=1}^S\sum_{k=0}^np_sa_s^kb_su_{n-k}\\
    &=\sum_{k=0}^nu_{n-k}\sum_{s=1}^Sa_s^kb_sp_s = \sum_{k=0}^nu_{n-k}c_k,
\end{align*}
with $c_k = \sum_{s=1}^Sa_s^kb_sp_s$. In this paper, we consider without loss of generality $\begin{pmatrix}
p_1 \dots p_s
\end{pmatrix} = \begin{pmatrix}
    1 \dots 1
\end{pmatrix}.$
\end{proof}
\section{Some fundamentals of signal processing}\label{section fundamentals} 
\label{review}

In this section, we will recall some fundamentals definitions and results in signal processing. We will only look at discrete-time signals. Throughout this section, we denote $(x_n)_{n\in\mathbb{Z}}$ or $x_n$ a discrete time signal, and $x_k$ the value taken by the signal at time $k$. For example, let us denote $(e_n)$ the impulse signal such that 
\begin{equation}
e_n =
\begin{cases}
    1, n = 0\\
    0, n\neq 0.
\end{cases}  
\label{appendix impulse signal}
\end{equation}
This signal is  useful because the response of a system to a impulse signal gives a lot of insights. In particular it fully describes a linear time-invariant system. For more on signal processing, we refer the reader to \cite{oppenheim1996signals}.

\subsection{Linear Time-invariant systems}

A system is said to be \textit{time-invariant} if its response to a certain input signal does not depend on time. It is said to be \textit{linear} if its output response to a linear combinations of inputs is the same linear combinations of the output responses of the individual inputs. A system is said to be \textit{causal} if the output at a present time depends on the input up the present time only. 

There exist several ways to represent the input-output behavior of LTI system. We will only look at the impulse response representation (convolution).

\begin{proposition}[Convolution]
    Let $h_n$ be the impulse response of an LTI system $H$ (i.e., the output of system $H$ subject to input $e_n$), and $x_n$ be an input signal. In this case, the output signal of the system $y_n$ writes 
    \begin{equation}
        y_n = \sum_{k=-\infty}^{+\infty}x_kh_{n-k}.
        \label{appendix conv LTI}
    \end{equation}
\end{proposition}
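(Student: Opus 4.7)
The plan is to exploit two facts: the sifting property of the discrete impulse, and the two defining axioms of $H$---linearity and time-invariance. The key observation is that any discrete-time signal admits a trivial expansion as a superposition of shifted impulses, namely
\begin{equation*}
x_n = \sum_{k=-\infty}^{+\infty} x_k\, e_{n-k},
\end{equation*}
since for each fixed $n$ the right-hand side reduces to the single nonzero term corresponding to $k=n$, by the definition of $(e_n)$ in Eq.~\eqref{appendix impulse signal}. This is really just a tautology, but it is exactly the identity that will let $H$ be pushed down to the level of impulses.

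First, I would apply the linearity of $H$ to this decomposition. Linearity allows pulling the signal-independent coefficients $x_k$ outside $H$ and commuting $H$ with the (formal) sum, giving $y_n = H[x]_n = \sum_{k=-\infty}^{+\infty} x_k \, H[e_{\,\cdot\, - k}]_n$, where $e_{\,\cdot\, - k}$ denotes the signal sending $m \mapsto e_{m-k}$, i.e.\ an impulse shifted by $k$ samples.

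Next, I would invoke time-invariance. By assumption, the response of $H$ to the impulse $(e_n)$ is $(h_n)$; time-invariance states that shifting the input by $k$ samples shifts the output by the same $k$, so the response to $e_{\,\cdot\, - k}$ is the shifted impulse response $h_{\,\cdot\, - k}$, i.e.\ $H[e_{\,\cdot\, - k}]_n = h_{n-k}$. Substituting this into the previous expression yields precisely the claimed convolution identity $y_n = \sum_{k=-\infty}^{+\infty} x_k\, h_{n-k}$, which is Eq.~\eqref{appendix conv LTI}.

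The only delicate step is the commutation of $H$ with the infinite sum used in the linearity step, which strictly speaking requires a continuity or stability hypothesis on $H$ (for instance, that $H$ commutes with pointwise limits of finite truncations of its input). In the present LTI setting this is part of the standard implicit regularity requirement on admissible inputs, and I would either state such an assumption explicitly or restrict to compactly supported $x$ and extend by a density argument; for the introductory-appendix purposes of the statement this is routine and need not be dwelt upon.
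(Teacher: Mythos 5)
Your proof is correct: the impulse decomposition $x_n = \sum_k x_k e_{n-k}$, followed by linearity and then time-invariance, is the classical argument, and you rightly flag the only genuinely delicate point (commuting $H$ with the infinite sum, which needs a continuity/stability hypothesis or a restriction to suitably summable inputs). Note that the paper itself offers no proof of this proposition --- it is recalled as a standard fact with a pointer to the signal-processing literature --- so there is nothing to compare against; your argument is exactly the standard one that such references give, and it is complete for the purposes of this appendix.
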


\textit{Causal systems.} The output $y_n$ of a causal system depends only on past or present values of the input. This forces $h_k=0$ for $k<0$ and the convolution sum is rewritten 
\[
y_n = \sum_{k=0}^{+\infty}h_kx_{n-k}.
\]

\textit{Stable systems.} A system is stable if the output is guaranteed to be bounded for every bounded input.

\subsection{Discrete-Time Fourier Transform}

In this section, we denote $x_n$ a complex-valued discrete-time signal.

\begin{definition}
    The discrete-time Fourier transform of signal $x_n$ is given by
    \[
    X(\omega) = \sum_{n=-\infty}^{+\infty}x_ne^{-i\omega n}.
    \]
    This function takes values in the frequency space.
    The inverse discrete-time Fourier transform is given by 
    \[
    x_n = \frac{1}{2\pi}\int_0^{2\pi}X(\omega)e^{i\omega n}d\omega.
    \]
\end{definition}

The Discrete-Time Fourier transform presents some notable properties that we recall in Table~\ref{table:dtft-properties}.

\begin{table}[h!]
\centering
\renewcommand{\arraystretch}{1.5}
\begin{tabular}{|c|c|}
\hline
\textbf{Property} & \textbf{Relation} \\ \hline
Time Shifting & 
$x_{n-k} \overset{DTFT}{\longleftrightarrow} e^{-i\omega k} X(\omega)$ \\ \hline
Convolution in Time & 
$x_n * y_n \overset{DTFT}{\longleftrightarrow} X(\omega) Y(\omega)$ \\ \hline
Frequency Differentiation & 
$j \frac{d}{d\omega} X(\omega) \overset{DTFT}{\longleftrightarrow} -n x_n$ \\ \hline
Differencing in Time & 
$x_n - x{n-1} \overset{DTFT}{\longleftrightarrow} \left(1 - e^{-i\omega}\right) X(\omega)$ \\ \hline
\end{tabular}
\caption{Properties of the Discrete-Time Fourier Transform (DTFT). For each property, assume $x_n\overset{DTFT}{\longleftrightarrow} X(\omega)$ and $y_n\overset{DTFT}{\longleftrightarrow} Y(\omega)$.}
\label{table:dtft-properties}
\end{table}

We recall Parseval's theorem that establishes a fundamental equivalence between the inner product of two signals in the time domain and their corresponding representation in the frequency domain.

\begin{theorem}[Parseval]
    For two complex-valued discrete-time signals \((x_n)\) and \((y_n)\) with discrete-time Fourier transforms \(X(e^{i\omega})\) and \(Y(e^{i\omega})\), Parseval's theorem yields:
    \begin{equation}
        \sum_{n=-\infty}^{+\infty}x_n\overline{y_n} = \frac{1}{2\pi}\int_0^{2\pi}X(e^{i\omega})\overline{Y(e^{i\omega})}d\omega.
        \label{Parseval thm}
    \end{equation}
In particular, Parseval's theorem yields an energy conservation result:
    $$
        \sum_{n=-\infty}^{+\infty}\vert x_n\vert^2 = \frac{1}{2\pi}\int_0^{2\pi}\vert X(e^{i\omega})\vert^2 d\omega.
    $$
\end{theorem}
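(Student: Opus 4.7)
The plan is to expand the right-hand side using the definitions of the discrete-time Fourier transforms and then collapse the resulting double sum via the standard orthogonality relation on the unit circle. Concretely, I would substitute $X(e^{i\omega}) = \sum_{n\in\mathbb{Z}} x_n e^{-i\omega n}$ and $\overline{Y(e^{i\omega})} = \sum_{m\in\mathbb{Z}} \overline{y_m}\, e^{i\omega m}$ into the right-hand side, obtaining
$$
\frac{1}{2\pi}\int_0^{2\pi} X(e^{i\omega})\overline{Y(e^{i\omega})}\,d\omega = \frac{1}{2\pi}\int_0^{2\pi}\Big(\sum_{n} x_n e^{-i\omega n}\Big)\Big(\sum_{m} \overline{y_m}\, e^{i\omega m}\Big)\,d\omega.
$$

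Next, assuming the interchange of integration and summation is legitimate, I would swap orders and invoke the elementary orthogonality identity
$$
\frac{1}{2\pi}\int_0^{2\pi} e^{i\omega(m-n)}\,d\omega = \delta_{m,n},
$$
which is immediate because for $m=n$ the integrand equals $1$, and for $m\neq n$ the function $e^{i\omega(m-n)}/[i(m-n)]$ is a $2\pi$-periodic primitive whose boundary values cancel. The double sum $\sum_{n,m} x_n \overline{y_m}\cdot\delta_{m,n}$ then collapses onto the diagonal $m=n$, producing $\sum_{n\in\mathbb{Z}} x_n \overline{y_n}$, which is the left-hand side. The stated energy-conservation corollary then follows at once by specializing $y_n = x_n$.

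The main obstacle is justifying the interchange of the integral with the double sum, which requires a regularity hypothesis on the signals. The cleanest setting is $(x_n),(y_n)\in\ell^2(\mathbb{Z})$: by the Riesz-Fischer theorem, the sequences $\{e^{-i\omega n}\}_{n\in\mathbb{Z}}$ form a complete orthonormal family in $L^2([0,2\pi],d\omega/2\pi)$, and the DTFT is then an isometric isomorphism between $\ell^2(\mathbb{Z})$ and $L^2([0,2\pi],d\omega/2\pi)$, of which Parseval's identity is exactly the inner-product-preservation statement. To make this fully concrete, I would proceed by truncation: define $X_N(e^{i\omega}) = \sum_{|n|\leq N} x_n e^{-i\omega n}$ and $Y_N$ analogously, apply Fubini on the finite range (trivially valid, since only finitely many terms appear), conclude Parseval for truncated sequences via the orthogonality identity above, and then pass to the limit using the $L^2$-convergences $X_N \to X$ and $Y_N \to Y$ together with Cauchy-Schwarz to control the error in the inner products. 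An alternative shortcut, sufficient for every application in the main text, is to assume $(x_n),(y_n)\in\ell^1$: then $\sum_{n,m} |x_n \overline{y_m}|<+\infty$, both DTFTs are continuous and bounded, and Fubini's theorem directly legitimizes the swap without any limiting argument.
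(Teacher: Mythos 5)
Your proposal is correct, but there is nothing in the paper to compare it against: the paper states Parseval's theorem as recalled background (in its signal-processing fundamentals appendix, with a pointer to standard references) and gives no proof of it. Your argument is the classical one --- expand both DTFTs, interchange integral and double sum, and collapse the sum via the orthogonality relation $\frac{1}{2\pi}\int_0^{2\pi} e^{i\omega(m-n)}\,d\omega = \delta_{m,n}$ --- and your handling of the interchange is sound: the $\ell^2$ route via truncation, $L^2$ convergence, and Cauchy--Schwarz is the standard rigorous treatment, while the $\ell^1$/Fubini shortcut indeed covers every use of the identity in this paper, since the sequences to which it is applied (e.g.\ $c_k - d_k$ with $c_k = \sum_s b_s a_s^k$, $|a_s|<1$, and $\gamma(k)=\rho^{|k|}$, $\rho<1$) all have geometrically decaying tails and are therefore absolutely summable.
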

The following proposition will be useful in our lower bound proof in Appendix~\ref{appendix subsection white noise loss}.
\begin{proposition}\label{proposition semi parseval}
    Let $w_n$ be a causal discrete-time complex-valued signal with Fourier transform $W(\omega)$. We have the following equality:
    \[
    \sum_{L=0}^{+\infty}L\vert w_l\vert^2 = \frac{i}{2\pi}\int_0^{2\pi}\frac{dW(\omega)}{d\omega}\overline{W}(\omega)d\omega.
    \]
\end{proposition}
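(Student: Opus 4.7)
The plan is to reduce the identity directly to Parseval's theorem by rewriting the summand $n|w_n|^2$ as a Hermitian pairing of two signals whose DTFTs we already know. Concretely, I interpret the left-hand side (taking the apparent $L$ vs.\ $l$ mismatch as a typo) as $\sum_{n=0}^{+\infty} n |w_n|^2$, which, since $w_n$ is causal, equals $\sum_{n=-\infty}^{+\infty} n \, w_n \, \overline{w_n}$. This pairs the signal $x_n := n w_n$ with $y_n := w_n$, so Parseval applied to $(x_n, y_n)$ converts the whole sum into a single integral on the unit circle.

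The first step is to compute the DTFT of $(n w_n)_{n\in\mathbb{Z}}$. Differentiating $W(\omega) = \sum_{n} w_n e^{-i\omega n}$ term by term gives $\frac{dW(\omega)}{d\omega} = -i \sum_{n} (n w_n) e^{-i\omega n}$, so that the DTFT of $n w_n$ is exactly $i \frac{dW(\omega)}{d\omega}$. This is just a restatement of the frequency-differentiation entry of Table~\ref{table:dtft-properties}, but I would redo the one-line calculation to make sure the sign and factor of $i$ come out unambiguously.

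The second step is to apply Parseval's theorem Eq.~\eqref{Parseval thm}:
\[
\sum_{n=-\infty}^{+\infty} (n w_n)\,\overline{w_n} \;=\; \frac{1}{2\pi}\int_0^{2\pi}\Bigl(i\,\frac{dW(\omega)}{d\omega}\Bigr)\overline{W(\omega)}\,d\omega.
\]
Pulling the constant $i$ out of the integral and using causality of $w$ on the left-hand side to restrict the sum to $n \geq 0$ yields exactly the claimed formula. As a consistency check, the left-hand side is manifestly real and non-negative, which forces the integral on the right to be purely imaginary; one can verify this by integration by parts on the boundary-free interval $[0,2\pi]$ (the boundary terms cancel by $2\pi$-periodicity of $W$), which shows that $\int \frac{dW}{d\omega}\overline{W}\,d\omega + \int W \overline{\frac{dW}{d\omega}}\,d\omega = 0$, i.e.\ the integrand's symmetric part vanishes.

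I do not expect any serious obstacle: the only delicate points are tracking signs/factors of $i$ in the DTFT differentiation rule and justifying the termwise differentiation and Parseval identity under a mild decay assumption on $w_n$ (e.g.\ $w \in \ell^2$ with $\sum n |w_n|^2 < \infty$, which is implicit in the statement since otherwise the left-hand side diverges). These can be handled by standard density arguments, or simply left as a regularity hypothesis on $w$.
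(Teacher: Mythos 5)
Your proof is correct and follows essentially the same route as the paper: both identify $i\,\frac{dW}{d\omega}$ as the DTFT of $(n w_n)$ and pair it against $w_n$ via Parseval, the only difference being that the paper re-derives the Parseval pairing inline through the orthogonality relation $\frac{1}{2\pi}\int_0^{2\pi} e^{-i\omega(L-L')}\,d\omega = 1_{L=L'}$ rather than citing Theorem (Parseval) as a black box. Your added reality check and the summability hypothesis on $(n w_n)$ match the paper's own caveat and are fine.
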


\begin{proof}
    By definition of the DTFT, $W(\omega) = \sum_{L=0}^{+\infty}w_Le^{-i\omega L}$. Therefore, 
    \begin{align*}
        \sum_{L=0}^{+\infty}L\vert w_L\vert^2 &= \sum_{L=0}^{+\infty}Lw_L\bar{w}_L = \frac{1}{2\pi}\sum_{L=0}^{+\infty}\sum_{L'=0}^{+\infty}Lw_L\bar{w}_{L'}\int_0^{2\pi}e^{-i\omega(L-L')}d\omega\\
        &= \frac{i}{2\pi}\int_0^{2\pi}\sum_{L=0}^{+\infty}-iL\omega_Le^{-iL\omega}\sum_{L'=0}^{+\infty}\bar{w}_{L'}e^{iL'\omega}d\omega.
    \end{align*}
    Provided that the sequence $(Lw_L)_{L\geq 0}$ is summable, $\frac{dW(\omega)}{d\omega}=\sum_{L=0}^{+\infty}-iLw_Le^{-i\omega L}$, which proves the result.
\end{proof}

\subsection{Fourier series}\label{appendix subsection Fourier series}
We recall basics of Fourier Series. For more about Fourier series and their applications, we refer the reader to \cite{serov2017fourier}.

\begin{definition}[Fourier series]
    Let $f: \mathbb{R}\rightarrow \mathbb{R}$ be a piecewise continuous and $2\pi$-periodic function. The Fourier series of $f$ is the series of functions 
    \[
    S(f) = \sum_{n=-\infty}^{+\infty}
c_n(f)e^{int},  
\]
where $c_n(f)$ are the Fourier coefficients of $f$, such that 
\[
c_n(f) = \frac{1}{2\pi}\int_{-\pi}^\pi f(t)e^{-int}dt.
\]
The partial sums of these series write
\[
S_n(f)(t) = \sum_{k=-n}^nc_k(f)e^{ikt}
\]
\end{definition}

\begin{theorem}[Dirichlet]
    Let $f$ be piecewise $\mathcal{C}^1$ and $2\pi$-periodic. Therefore, for every $x\in\mathbb{R}$, $S_n(f)(x)$ converges to 
    \[
    \frac{f(x+0) + f(x-0)}{2},
    \]
    where $f(x+0)$ (resp. $f(x-0)$) denotes the right-hand (resp. left-hand) limit of $f$ at $x$.
\end{theorem}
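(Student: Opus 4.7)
The plan is to use the Dirichlet kernel representation of the partial sums and reduce the pointwise convergence to the Riemann--Lebesgue lemma applied to a single integrable auxiliary function.

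First, I would introduce the Dirichlet kernel
\[
D_n(t) = \sum_{k=-n}^n e^{ikt} = \frac{\sin\!\big((n+\tfrac12)t\big)}{\sin(t/2)},
\]
which, after substituting the definition of $c_k(f)$ into $S_n(f)(x)$, exchanging the finite sum with the integral, and using the $2\pi$-periodicity of $f$ together with the change of variable $s = x-t$, yields the convolution representation
\[
S_n(f)(x) = \frac{1}{2\pi}\int_{-\pi}^{\pi} f(x-t)\, D_n(t)\,dt.
\]
Using that $D_n$ is even and that $\frac{1}{2\pi}\int_{-\pi}^{\pi} D_n(t)\,dt = 1$ (only the $k=0$ term of the exponential sum survives the integration), I can fold the integral onto $[0,\pi]$ and write, after subtracting the target limit,
\[
S_n(f)(x) - \frac{f(x+0)+f(x-0)}{2} \;=\; \frac{1}{2\pi}\int_0^{\pi} g_x(t)\, \sin\!\big((n+\tfrac12)t\big)\,dt,
\]
where $g_x(t) := \dfrac{[f(x+t)-f(x+0)] + [f(x-t)-f(x-0)]}{\sin(t/2)}$.

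The main obstacle is to verify that $g_x \in L^1(0,\pi)$; this is exactly where the piecewise $\mathcal{C}^1$ hypothesis enters. Near $t = 0^+$, the one-sided first-order expansions give $f(x+t)-f(x+0) = f'(x+0)\, t + o(t)$ and $f(x-t)-f(x-0) = -f'(x-0)\, t + o(t)$, while $\sin(t/2) \sim t/2$, so $g_x$ extends continuously to $0$ with limit $2\bigl(f'(x+0) - f'(x-0)\bigr)$ and is in particular bounded near the origin. Away from $0$, the denominator $\sin(t/2)$ is bounded below by a positive constant and $f$ is bounded on a compact interval (a piecewise $\mathcal{C}^1$ and $2\pi$-periodic function has at most finitely many jumps per period), so $g_x$ is bounded and thus integrable on all of $(0,\pi)$.

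Finally, applying the Riemann--Lebesgue lemma to $g_x$ on $(0,\pi)$ forces $\int_0^\pi g_x(t)\sin((n+\tfrac12)t)\,dt \to 0$ as $n\to\infty$, which is exactly the claimed pointwise convergence. As a corollary, whenever $f$ is continuous at $x$ one has $f(x+0) = f(x-0) = f(x)$ and $S_n(f)(x) \to f(x)$, recovering the usual pointwise form of the theorem.
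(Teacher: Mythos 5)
Your proof is correct, and it is the classical Dirichlet-kernel argument: write $S_n(f)(x)$ as a convolution with $D_n$, fold onto $[0,\pi]$, subtract the candidate limit using $\frac{1}{2\pi}\int_0^\pi D_n = \tfrac12$, check that the auxiliary function $g_x$ is integrable (indeed bounded) thanks to the one-sided first-order expansions supplied by the piecewise $\mathcal{C}^1$ hypothesis, and conclude by Riemann--Lebesgue. The paper itself offers no proof of this statement --- it appears in the background-review appendix and is delegated to the cited reference --- so there is nothing to compare against; your argument is the standard one found in that literature. Two minor points you may wish to make explicit: (i) the Riemann--Lebesgue lemma is usually stated for integer frequencies, so for $\sin\bigl((n+\tfrac12)t\bigr)$ you should expand it as $\sin(nt)\cos(t/2)+\cos(nt)\sin(t/2)$ and apply the lemma to the two $L^1$ functions $g_x(t)\cos(t/2)$ and $g_x(t)\sin(t/2)$; (ii) at a breakpoint $x$ of the piecewise structure the one-sided derivatives $f'(x\pm 0)$ still exist by hypothesis, which is exactly what your expansion uses, so the claim does hold for \emph{every} $x\in\mathbb{R}$ as stated.
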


\paragraph{Remark:}If the function \( f \) is not \( 2\pi \)-periodic, its graph on the interval \([0, 2\pi]\) can be extended periodically over \(\mathbb{R}\). In this case, Dirichlet's theorem is applicable at potential discontinuities at \( 0 \) and \( 2\pi \).

\subsection{A natural pair for autocorrelation}\label{appendix subsection natural pair}

A natural parametrization is to represent autocorrelation with $\gamma(k) = \rho^{\vert k\vert}$ with $\vert\rho\vert < 1$, as done in the main paper. This models exponentially decreasing autocorrelation between data. The natural associated time-frequency pair to represent is 
\[
(\gamma(k), \Gamma(e^{i\omega})) = (\rho^{\vert k\vert}, \frac{1-\rho^2}{\vert 1-\rho e^{-i\omega}\vert^2})
.\]
Indeed, as $\vert\rho\vert<1$, the sequence \((\rho^{\vert k\vert}e^{ik\omega})_{k\in\mathbb{Z}}\) is summable, \(\gamma\) admits a Fourier transform that we denote~$\Gamma$. For $\omega\in\mathbb{R}$.
\begin{align*}
    \Gamma(e^{i\omega}) &= \sum_{k=-\infty}^{+\infty}\rho^{\vert k\vert}e^{-i\omega k} =\sum_{k=1}^{+\infty}\rho^k e^{i\omega k} + \sum_{k=0}^{+\infty}\rho^ke^{-i\omega k}\\
    &= \frac{1}{1-\rho e^{i\omega k}} -1 + \frac{1}{1-\rho e^{-i\omega k}} =\frac{1-\rho^2}{\vert 1-\rho e^{-i\omega}\vert^2}.
\end{align*}

\begin{figure}[h]
    \centering
    \includegraphics[width=1\linewidth]{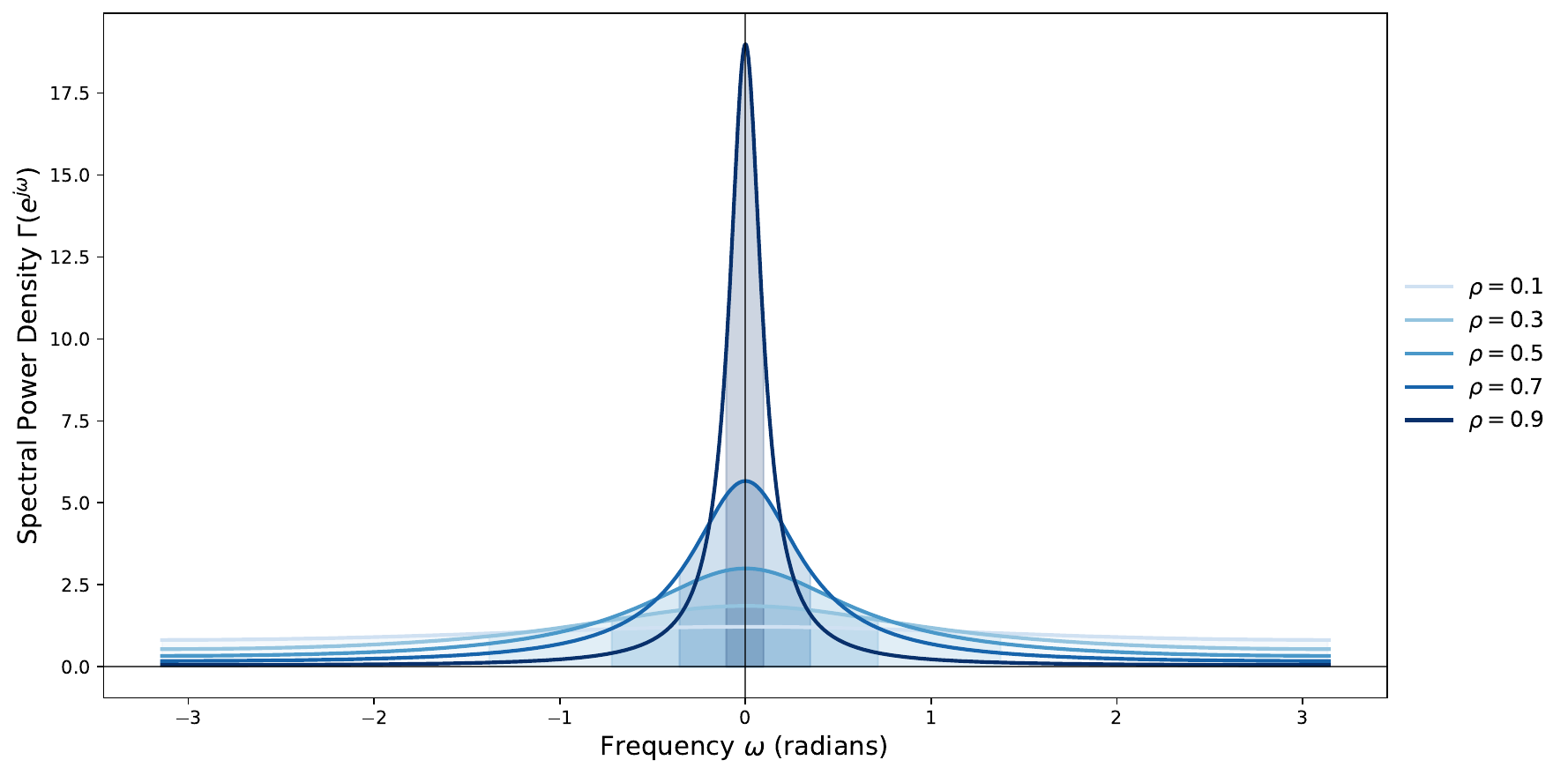}
    \caption{\textit{The autocorrelation factor $\rho$ determines the width of the spectral power density $\Gamma(e^{i\omega})$. The larger $\rho$, the narrower the spectral power density. This means that increasing $\rho$ in $\mathcal{L}_\text{freq}(c, d)$ narrows the bandwidth over which we evaluate the difference $\vert C(e^{i\omega}) - D(e^{i\omega})\vert^2$, leading to improved performance.}}
    \label{figure spectral power density}
\end{figure}

\section{Lower bound}

In this section, we provide the proofs of the two lower bounds.

\subsection{White noise case (Theorem~\ref{lower bound white noise})}\label{appendix subsection white noise loss}
We start by the representation of our loss function as a quadratic form.
\begin{proposition} \label{prop white noise loss}
    In the white noise case, the correlation factor $\rho$ is null. The loss $\mathcal{L}_\text{time}(c, d)$ writes 
    \[
    \mathcal{L}_\text{time}(c, d)= 1 + \sum_{k=0}^{+\infty}\vert c_k\vert^2 - 2\textnormal{Re}\big(\sum_{k=0}^{+\infty}c_kd_k\big),
    \]
    where $c_k=\sum_{s=1}^Sa_s^kb_s$. Therefore, the loss writes 
    \[
    \mathcal{L}_\text{time}(c, d) = 1 + \sum_{s, s'}^S\frac{b_s\bar{b}_{s'}}{1-a_s\bar{a}_{s'}} - 2\textnormal{Re}\big(\sum_{s=1}^Sb_sa_s^K\big).
    \]
\end{proposition}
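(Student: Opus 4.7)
\textbf{Proof plan for Proposition~\ref{prop white noise loss}.}
The plan is to specialize the general time-domain loss in Eq.~\eqref{Time domain loss} to the white-noise autocorrelation $\gamma(k-k')=\rho^{|k-k'|}$ at $\rho=0$, and then insert the exponential-sum form of $c_k$ to collapse the doubly infinite sum to a finite bilinear expression in $(a,b)$.

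First I would observe that at $\rho=0$ the autocorrelation reduces to $\gamma(k-k')=\mathbf{1}_{k=k'}$, so Eq.~\eqref{Time domain loss} collapses onto its diagonal $k=k'$, giving $\mathcal{L}_\text{time}(c,d)=\sum_{k=0}^{\infty}|c_k-d_k|^2$. Expanding the square and using that the shift-$K$ target $d_k=\mathbf{1}_{k=K}$ is real and has unit $\ell^2$-norm, I obtain
\[
\mathcal{L}_\text{time}(c,d)=\sum_{k=0}^{\infty}|c_k|^2-2\,\mathrm{Re}\Bigl(\sum_{k=0}^{\infty}c_k\bar d_k\Bigr)+\sum_{k=0}^{\infty}|d_k|^2
=1+\sum_{k=0}^{\infty}|c_k|^2-2\,\mathrm{Re}\Bigl(\sum_{k=0}^{\infty}c_k d_k\Bigr),
\]
which is exactly the first claim.

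Next I substitute $c_k=\sum_{s=1}^{S}b_s a_s^k$ into the two remaining terms. For the cross term, only $k=K$ contributes, yielding $\sum_{k}c_k d_k=c_K=\sum_{s=1}^{S}b_s a_s^K$. For the quadratic term I write
\[
\sum_{k=0}^{\infty}|c_k|^2=\sum_{k=0}^{\infty}\Bigl(\sum_{s}b_s a_s^k\Bigr)\Bigl(\sum_{s'}\bar b_{s'}\bar a_{s'}^{\,k}\Bigr)=\sum_{s,s'=1}^{S}b_s\bar b_{s'}\sum_{k=0}^{\infty}(a_s\bar a_{s'})^k,
\]
and close the geometric series to $\frac{1}{1-a_s\bar a_{s'}}$. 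The interchange of sums and the convergence of the geometric series both rely on the stability hypothesis $|a_s|<1$ (so $|a_s\bar a_{s'}|<1$), which I would state explicitly as a standing assumption.

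There is no real obstacle here: the argument is two lines of algebra plus a Fubini-style exchange of finite and convergent series. The only point deserving a short sentence is justifying absolute summability of the double series $\sum_{k,s,s'}|b_s\bar b_{s'}(a_s\bar a_{s'})^k|$, which follows from $|a_s|<1$ and boundedness of $(b_s)$; once that is in place, the exchange of summation order is automatic and the closed form follows immediately.
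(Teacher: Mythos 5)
Your proposal is correct and follows essentially the same route as the paper: expand the squared $\ell_2$ error using $\|d\|_2^2=1$, isolate the cross term at $k=K$, and collapse $\sum_k|c_k|^2$ via the geometric series $\sum_k(a_s\bar a_{s'})^k=\frac{1}{1-a_s\bar a_{s'}}$. Your explicit remarks on the stability assumption $|a_s|<1$ and the absolute summability justifying the exchange of sums are a small added touch of rigor that the paper leaves implicit.
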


\begin{proof}
    On the one hand, 
    \begin{align*}
        \sum_{k=0}^{+\infty}\vert c_k\vert^2 &= \sum_{k=0}^{+\infty}\big\vert\sum_{s=1}^Sa_s^kb_s\big\vert 
        =\sum_{k=0}^{+\infty}\sum_{s=1}^S\sum_{s'=1}^Sa_s^k\bar{a}_{s'}^kb_sb_{s'} \\
        &= \sum_{s=1}^S\sum_{s'=1}^Sb_sb_{s'}\sum_{k=0}^{+\infty}a_s^k\bar{a}_{s'}^k =\sum_{s=1}^S\sum_{s'=1}^Sb_sb_{s'}\frac{1}{1-a_s\bar{a}_{s'}}.
    \end{align*}
    On the other hand,
    \begin{align*}
        \textnormal{Re}\big(\sum_{k=0}^{+\infty}c_kd_k\big) &= c_Kd_K
        =\sum_{s=1}^Sb_sa_s^K.
    \end{align*}
    Hence the result.
\end{proof}

\begin{proposition}[Performance criterion]
    Minimizing the loss in Proposition \ref{prop white noise loss} boils down to maximizing the following performance criterion
    \[
    F_K = \sum_{s,s'=1}^S\bar{a}_s^K(C^{-1})_{ss'}a_{s'}^K,
    \]
    where $C_{ss'} = \frac{1}{1-a_s\bar{a}_{s'}}$.
\end{proposition}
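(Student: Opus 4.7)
The expression for $\mathcal{L}_\text{time}(c,d)$ established in Proposition~\ref{prop white noise loss} is an inhomogeneous quadratic form in the complex vector $b = (b_s)_{s=1}^S$, with Hermitian ``Cauchy-type'' kernel $C_{ss'} = (1 - a_s\bar{a}_{s'})^{-1}$, a constant term $1$, and a linear term $-2\,\textnormal{Re}(b^\top a^K)$. The plan is to minimize explicitly in $b$, show that the minimum equals $1 - F_K$, and then conclude that minimization over the full pair $(a,b)$ reduces to maximization of $F_K$ over $a$.

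First, I would verify that $C$ is Hermitian positive definite, hence invertible. Hermiticity is immediate from $\overline{C_{ss'}} = (1-\bar{a}_s a_{s'})^{-1} = C_{s's}$. Positive definiteness follows by expanding each entry as a convergent geometric series $C_{ss'} = \sum_{k \geq 0} a_s^k \bar{a}_{s'}^k$, which exhibits $C$ as the Gram matrix in $\ell^2(\mathbb{N})$ of the sequences $(a_s^k)_{k \geq 0}$; these sequences are linearly independent whenever the poles $a_s$ are distinct, an assumption one may make without loss of generality since repeated poles only reduce the effective hidden dimension.

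Second, I would carry out the complex quadratic minimization using Wirtinger calculus, treating $b$ and $\bar{b}$ as independent. Differentiating with respect to each $\bar{b}_{s'}$ yields the stationarity condition $\sum_s C_{ss'} b_s = \bar{a}_{s'}^K$, which by Hermiticity of $C$ reads $C \bar{b} = a^K$, giving the unique minimizer $\bar{b}^\star = C^{-1} a^K$. This is the only critical point, and it is a global minimum because the Hessian (in Wirtinger form) is precisely the positive definite matrix $C$.

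Third, I would substitute $\bar{b}^\star$ back into the loss. Using the stationarity relation, the quadratic term $\sum_{s,s'} C_{ss'} b_s \bar{b}_{s'} = b^\top C \bar{b}$ collapses to $b^\top a^K$, which cancels exactly one half of the linear term; the remaining half evaluates to $-\bar{b}^{\star\top} \bar{a}^K = -\sum_{s,s'} \bar{a}_s^K (C^{-1})_{ss'} a_{s'}^K = -F_K$, after renaming indices and using Hermiticity of $C^{-1}$. Since $F_K$ is real (again by Hermiticity), we obtain $\min_b \mathcal{L}_\text{time}(c,d) = 1 - F_K$, so that the residual minimization over $a$ is equivalent to maximizing the performance criterion $F_K$. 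The only real subtlety here is the careful bookkeeping of complex conjugates when substituting $b^\star$ back; once the cancellations are tracked, the reduction to maximizing $F_K$ is immediate and the remaining (genuinely hard) work—bounding $F_K$ from above—is deferred to the analysis of the displacement structure of $C^{-1}$ outlined in the sketch accompanying Theorem~\ref{lower bound white noise}.
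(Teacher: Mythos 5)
Your proof is correct and takes essentially the same route as the paper's: recognize the loss as an inhomogeneous convex quadratic in $b$, solve the stationarity condition to get $\bar{b}^\star = C^{-1}a^K$, and substitute back to obtain $\min_b \mathcal{L}_\text{time} = 1 - F_K$. The only addition is your explicit verification that $C$ is Hermitian positive definite via the Gram-matrix expansion $C_{ss'} = \sum_{k\geq 0} a_s^k \bar{a}_{s'}^k$, a point the paper leaves implicit when asserting convexity.
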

\begin{proof}
    The loss $\mathcal{L}_\text{time}$ writes 
    \[
    1 + \langle\bar{b}, C\bar{b}\rangle - \langle\bar{b}, a^K\rangle -\langle a^K, \bar{b}\rangle.
    \]
    We thus want to maximize with respect to $a_s$ and $b_s$ the quantity
    \[
    \langle\bar{b}, a^K\rangle +\langle a^K, \bar{b}\rangle - \langle\bar{b}, C\bar{b}\rangle.
    \]
    This is convex and quadratic with respect to $b$, and the minimizer $\bar{b}^*$ is $C^{-1}a^K$, leading to the performance criterion
    \[
    F_K = \langle a^K, C^{-1}a^K\rangle = \sum_{s, s'=1}^S\bar{a}_s^K(C^{-1})_{ss'}a_{s'}^K.
    \]
\end{proof}

We can now move to the proof of Theorem \ref{lower bound white noise}, by first analyzing properties of the matrix $C$.

\paragraph{Linear algebra preview.} We use the similarities with Cauchy matrices and their so-called displacement structure~\citep{yang2003generalized,calvetti1996solution}.

Starting from
$$
C - \Diag( {a}) C \Diag(\bar{a})  = 1_S 1_S^\top,
$$
we get by  post multiplying by $\Diag(\bar{a})^{-1}$,
$$
C\Diag(\bar{a})^{-1}  - \Diag( {a}) C  = 1_S 1_S^\top\Diag(\bar{a})^{-1}
$$
and thus, by pre and post multiplying by $C^{-1}$:
$$
\Diag(\bar{a})^{-1}C^{-1}  - C^{-1}\Diag( {a})    = C^{-1}1_S 1_S^\top\Diag(\bar{a})^{-1}C^{-1},
$$
leading to
$$
\Diag(\bar{a})^{-1}C^{-1}\Diag( {a}) ^{-1}   - C^{-1} = C^{-1}1_S 1_S^\top\Diag(\bar{a})^{-1}C^{-1}\Diag( {a})^{-1}
= u v^*,
$$
with $ u = C^{-1} 1_S$ and $v = \Diag( \bar{a})^{-1}  C^{-1} \Diag(a)^{-1} 1_S$. This leads to a closed form expression for the inverse:
$$
(C^{-1})_{ss'} \big( \frac{1}{ \bar{a}_s a_{s'}}-1 \big) = u_s \bar{v}_{s'}. 
$$
We get
$$
v-u = \big[ \Diag( \bar{a})^{-1}  C^{-1} \Diag(a)^{-1} - C^{-1} \big] 1_S
= u v^\ast 1_S = u 1_S^\top\Diag(\bar{a})^{-1}C^{-1}\Diag( {a})^{-1}1_S,
$$
which leads to $ \ds v  = u ( 1 +1_S^\top\Diag(\bar{a})^{-1}C^{-1}\Diag( {a})^{-1}1_S )$. Moreover we can write
\BEAS
1_S^\top\Diag(\bar{a})^{-1}C^{-1}\Diag( {a})^{-1}1_S 
& = & 1_S^\top ( C^{-1} + u v^\ast) 1_S=  1_S^\top ( C^{-1} + C^{-1} 1_S v^\ast) 1_S\\
& = & 1_S^\top C^{-1} 1_S \cdot (1 + v^\ast 1_S )
\\
& = &  1_S^\top C^{-1} 1_S \cdot (1 + 1_S^\top\Diag(\bar{a})^{-1}C^{-1}\Diag( {a})^{-1}1_S ) , \EEAS
which leads to
$1_S^\top\Diag(\bar{a})^{-1}C^{-1}\Diag( {a})^{-1}1_S   = \frac{ 1_S^\top C^{-1} 1_S }{1- 1_S^\top C^{-1} 1_S }$, and thus
 $$ 1 +1_S^\top\Diag(\bar{a})^{-1}C^{-1}\Diag( {a})^{-1}1_S  = \frac{1}{1-1_S^\top C^{-1} 1_S} =
\frac{1}{1 - u^\top 1_S}.$$
Moreover, we have for any $z \in \mathbb{C}$, if all $a_s$ are distinct:
$$
\sum_{s'=1}^S \frac{ u_{s'}}{1 - z \bar{a}_{s'}} = 1 - \prod_{s'=1}^S \bar{a}_{s'} \prod_{s'=1}^S \frac{ a_{s'} - z}{1 - z \bar{a}_{s'}}
$$
(the two rational functions have the same degrees, the same poles and are equal for $z=a_1,\dots,a_S$),
which leads to for $z=0$,
$$
\sum_{s'=1}^S  { u_{s'}}=1_S^\top C^{-1} 1_S = \sum_{s'=1}^S   u_{s'} = 1 - \prod_{s'=1}^S | {a}_{s'}|^2,
$$
and thus $\ds 1 - 1_S^\top C^{-1} 1 = \prod_{s'=1}^S | {a}_{s'}|^2$.

We have, if $|z|=1$,
$$
\Big|\prod_{s'=1}^S \frac{ a_{s'} - z}{1 - z \bar{a}_{s'}}\Big|
= 1,
$$
which will be used in the bound (such expressions are typically referred to as Blaschke products~\citep{baratchart2016minimax}, and are known to have unit magnitude).

\paragraph{Proof of the lower bound (by upper bounding $F_K$).}
We have, using our linear algebra preview,
$$
F_K = \langle a^K, C^{-1} a^K \rangle
= \sum_{s,s'=1}^S \bar{a}_s^K (C^{-1})_{ss'} a_{s'}^K
= \sum_{s,s'=1}^S (\bar{a}_s  a_{s'})^{K+1} \frac{u_s \bar{v}_{s'} }{1 -  \bar{a}_s a_{s'}}.
$$
We get, using our linear algebra results,
$$
F_K - F_{K+1} = 
\sum_{s,s'=1}^S (\bar{a}_s  a_{s'})^{K+1} (1 -  \bar{a}_s a_{s'})\frac{u_s \bar{v}_{s'} }{1 -  \bar{a}_s a_{s'}}
=  \frac{1}{\prod_{s'=1}^S | {a}_{s'}|^2} \Big| \sum_{s=1}^S \bar{a}_s^{K+1}  u_s \Big|^2.
$$
This leads to 
\BEAS
F_K & = &  \sum_{L=K}^{+\infty}
( F_L - F_{L+1}) = 
\sum_{L=K+1}^{+\infty} \Big| \sum_{s=1}^S \bar{a}_s^L  u_s \Big|^2   \frac{1}{\prod_{s'=1}^S | {a}_{s'}|^2} 
.\EEAS
We have:
\BEAS
\sum_{L=K+1}^{+\infty} \Big| \sum_{s=1}^S \bar{a}_s^L  u_s \Big|^2 
& \leqslant &  \frac{1}{K+1} 
\sum_{L=0}^{+\infty} L \Big| \sum_{s'=1}^S \bar{a}_{s'}^L  u_{s'} \Big|^2 \mbox{ since } 1_{L \geqslant K+1} \leqslant \frac{L}{K+1}.
 \EEAS
 We consider the sequence $\ds w_L =  \sum_{s=1}^S \bar{a}_s^L  u_s$, with Fourier series
 $$
 W(\omega) = \sum_{L=0}^{+\infty} w_L e^{-i \omega L} 
 =  \sum_{s=1}^S \frac {u_s}{1-\bar{a}_s e^{-i \omega }}   =
  1 - \prod_{s'=1}^S \bar{a}_{s'} \prod_{s'=1}^S \frac{ a_{s'} - e^{-i\omega}}{1 -  e^{-i\omega} \bar{a}_{s'}}.
 $$
 We then use Proposition \ref{proposition semi parseval} to write:
 $$
 \sum_{L = 0 }^{+\infty}
 L | w_L|^2 =   \frac{i}{2\pi} \int_0^{2\pi} W'(\omega) \overline{W(\omega)}d\omega
, $$
 leading to
 \BEAS
&&\sum_{L=K+1}^{+\infty} \Big| \sum_{s=1}^S \bar{a}_s^L  u_s \Big|^2\\
& \leqslant &  \frac{1}{K+1} 
  \frac{i}{2\pi   }  \int_0^{2\pi} 
\frac{d}{d\omega} \Big[    - \prod_{s=1}^S \bar{a}_{s} \prod_{s=1}^S \frac{ a_{s} - e^{i\omega}}{1 - e^{i\omega} \bar{a}_{s}}
 \Big]
\overline{ \Big(
 1 - \prod_{s'=1}^S \bar{a}_{s'} \prod_{s'=1}^S \frac{ a_{s'} - e^{i\omega}}{1 - e^{i\omega} \bar{a}_{s'}}
 \Big)}
      d\omega
\\
& = &  \frac{1}{K+1} 
  \frac{i}{2\pi   }  \int_0^{2\pi} 
\frac{d}{d\omega} \Big[      \prod_{s=1}^S \bar{a}_{s} \prod_{s=1}^S \frac{ a_{s} - e^{i\omega}}{1 - e^{i\omega} \bar{a}_{s}}
 \Big]
\overline{ \Big(
    \prod_{s'=1}^S \bar{a}_{s'} \prod_{s'=1}^S \frac{ a_{s'} - e^{i\omega}}{1 - e^{i\omega} \bar{a}_{s'}}
 \Big)}
      d\omega.
\\
      \EEAS
      We now have, by taking derivatives of the product:
     \BEAS
     \frac{d}{d\omega} \Big[       \prod_{s=1}^S \frac{ a_{s} - e^{i\omega}}{1 - e^{i\omega} \bar{a}_{s}}
 \Big] & = & 
  \prod_{s=1}^S \frac{ a_{s} - e^{i\omega}}{1 - e^{i\omega} \bar{a}_{s}}
  \sum_{s=1}^S \frac{1 - e^{i\omega} \bar{a}_{s}}{ a_{s} - e^{i\omega}} 
   \frac{d}{d\omega} \Big[      \frac{ a_{s} - e^{i\omega}}{1 - e^{i\omega} \bar{a}_{s}}\Big]
\\
 & = & 
  \prod_{s=1}^S \frac{ a_{s} - e^{i\omega}}{1 - e^{i\omega} \bar{a}_{s}}
  \sum_{s=1}^S \frac{1 - e^{i\omega} \bar{a}_{s}}{ a_{s} - e^{i\omega}} 
   \frac{d}{d\omega} \Big[     \frac{1}{\bar{a}_s} +   \frac{a_s - \frac{1}{\bar{a}_s}}{1 - e^{i\omega} \bar{a}_{s}}\Big]
\\
 & = & 
  \prod_{s=1}^S \frac{ a_{s} - e^{i\omega}}{1 - e^{i\omega} \bar{a}_{s}}
  \sum_{s=1}^S \frac{1 - e^{i\omega} \bar{a}_{s}}{ a_{s} - e^{i\omega}} 
  \Big[
(1-|a_s|^2) \frac{ -i e^{i \omega}}{ (1 - e^{i\omega} \bar{a}_{s})^2}\Big]
\\
 & = & 
  \prod_{s=1}^S \frac{ a_{s} - e^{i\omega}}{1 - e^{i\omega} \bar{a}_{s}}
  \sum_{s=1}^S (1-|a_s|^2) \frac{-i  }{ |e^{-i\omega}  - \bar{a}_{s}|^2}
. \EEAS 
      This leads to, using the unit magnitude of $\frac{ a_{s} - e^{i\omega}}{1 - e^{i\omega} \bar{a}_{s}}$,
\BEAS
      F_K
      & \leqslant &  \frac{1}{K+1} 
  \frac{1}{2\pi   } 
   \sum_{s=1}^S ( 1- |a_s|^2) 
  \int_0^{2\pi}  \frac{1}{|a_s - e^{i\omega}|^2}
      d\omega =  \frac{S}{K+1},
\EEAS
using an explicit integration $\ds \frac{1}{2\pi   } 
    \int_0^{2\pi}  \frac{1}{|a_s - e^{i\omega}|^2}
      d\omega = \frac{1}{1-|a_s|^2}$.
    
 The approximation error $\mathcal{L}_\text{time}(c, d)$ is  thus 
$
1 - F_K$, 
which leads to the desired result.

\subsection{Autocorrelated case (Theorem~\ref{theorem 
autocorrelated lower bound})}
\label{proof auto}
We follow the same proof technique as for Theorem~\ref{lower bound white noise}, and compute first an explicit expression of the loss, this time, by introducing a new $a_s$, equal to $\rho$, with the introduction of new weights $w_s = b_s a_s / ( a_s - \rho)$ for $s \in \{1,\dots,S\}$, the weight $w_{S+1}$ being determined by the linear constraint.
\begin{lemma}
    In the autocorrelated case ($\rho \neq 0$), $\mathcal{L}_\text{time}(c, d)$ as in Eq.~\eqref{correlated time domain loss} writes 
    \begin{equation}
    1 - 2(1-\rho^2)\textnormal{Re}\big(\sum_{s=1}^{S+1}\frac{w_sa_s^k}{1-a_s\rho}\big) + (1-\rho^2)\sum_{s, s'}^{S+1}\frac{w_s\bar{w}_{s'}}{1-a_s\bar{a}_{s'}},
    \label{appendix constrained autocorrelated loss}
    \end{equation}    
    where $a_{S+1}=\rho$ and the constraint $\sum_{s=1}^{S+1}w_sa_s^{-1}=0$ holds. 
    \end{lemma}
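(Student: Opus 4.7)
The plan is to move to the frequency domain via the standard identity
\begin{equation*}
\rho^{|k-k'|} \;=\; \frac{1-\rho^{2}}{2\pi}\int_{-\pi}^{\pi}\frac{e^{i(k-k')\omega}}{|1-\rho e^{-i\omega}|^{2}}\,d\omega,
\end{equation*}
which, plugged into Eq.~\eqref{correlated time domain loss} and combined with the definitions $C(e^{i\omega})=\sum_{s=1}^{S}b_s/(1-a_s e^{-i\omega})$ and $D(e^{i\omega})=e^{-iK\omega}$, rewrites the loss as a weighted squared norm
\begin{equation*}
\mathcal{L}_\text{time}(c,d) \;=\; \frac{1-\rho^{2}}{2\pi}\int_{-\pi}^{\pi}\left|\frac{C(e^{i\omega})-D(e^{i\omega})}{1-\rho e^{-i\omega}}\right|^{2}d\omega.
\end{equation*}
This is just the frequency expression $\mathcal{L}_\text{freq}$ of Eq.~\eqref{Frequential loss copy task} with the weight $\Gamma$ factored as $(1-\rho^{2})/|1-\rho e^{-i\omega}|^{2}$ and absorbed into the numerator.

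The crucial step is a partial fraction decomposition that absorbs the extra factor $1/(1-\rho e^{-i\omega})$ into an $(S+1)$-th simple pole located at $\rho$. Explicitly I would verify
\begin{equation*}
\frac{b_s}{(1-a_s e^{-i\omega})(1-\rho e^{-i\omega})} \;=\; \frac{w_s}{1-a_s e^{-i\omega}} - \frac{w_s\rho/a_s}{1-\rho e^{-i\omega}},\qquad w_s \;=\; \frac{b_s a_s}{a_s-\rho},
\end{equation*}
for $s=1,\dots,S$. Summing over $s$ and collecting all $\rho$-residues into a single new coefficient $w_{S+1}=-\rho\sum_{s=1}^{S}w_s/a_s$ attached to a new pole $a_{S+1}=\rho$, one obtains
\begin{equation*}
\frac{C(e^{i\omega})}{1-\rho e^{-i\omega}} \;=\; \sum_{s=1}^{S+1}\frac{w_s}{1-a_s e^{-i\omega}}.
\end{equation*}
The very definition of $w_{S+1}$ is equivalent to the single linear constraint $\sum_{s=1}^{S+1}w_s/a_s=0$, which reduces the $S+1$ coefficients to exactly $S$ free parameters, matching the original $(b_s)_{s=1}^S$.

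It then remains to expand $|A-B|^{2}=|A|^{2}-2\operatorname{Re}(A\bar B)+|B|^{2}$ with $A=\sum_{s=1}^{S+1}w_s/(1-a_s e^{-i\omega})$ and $B=e^{-iK\omega}/(1-\rho e^{-i\omega})$, and to evaluate the three resulting integrals by Parseval applied to the geometric expansion of each simple pole. The identity $\frac{1}{2\pi}\int_{-\pi}^{\pi}[(1-a_se^{-i\omega})(1-\bar a_{s'}e^{i\omega})]^{-1}d\omega=1/(1-a_s\bar a_{s'})$ gives the quadratic term $\sum_{s,s'}w_s\bar w_{s'}/(1-a_s\bar a_{s'})$; expanding $B$ and selecting the index $k=K+k'$ gives the cross term $\sum_{s}w_s a_s^{K}/(1-a_s\rho)$; and $\frac{1}{2\pi}\int_{-\pi}^{\pi}|1-\rho e^{-i\omega}|^{-2}d\omega=1/(1-\rho^{2})$ produces the constant $1$ after multiplication by $1-\rho^{2}$. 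Assembling the three pieces yields Eq.~\eqref{appendix constrained autocorrelated loss}.

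The only conceptually nontrivial step is guessing the right change of variables $w_s=b_s a_s/(a_s-\rho)$ and recognising that the extra pole at $\rho$ together with the linear constraint is exactly what encodes the coloured-noise weight; once this is set up the remaining computations are routine contour/Parseval integrals and I expect no serious obstacle beyond bookkeeping.
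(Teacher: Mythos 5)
Your proposal is correct and follows essentially the same route as the paper's proof: pass to the frequency domain via Parseval, absorb the weight $\Gamma(e^{i\omega})=(1-\rho^2)/|1-\rho e^{-i\omega}|^2$ into the squared difference, use the partial-fraction identity with $w_s=b_s a_s/(a_s-\rho)$ to create the extra pole $a_{S+1}=\rho$ subject to the linear constraint, and then evaluate the three resulting Parseval integrals. The only difference is that you spell out the final integrations that the paper dismisses as ``straightforward computations.''
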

\begin{proof}
   We aim to minimize 
    \[
    \sum_{k, k'}(c_k-d_k)(c_{k'}-d_{k'})\gamma(k-k'),
    \]
    where $\gamma(k-k')=\rho^{\vert k-k'\vert}$.
    Denoting $C(e^{i\omega}), D(e^{i\omega})$ and $\Gamma(e^{i\omega})$ the Fourier transforms of $(c_n), (d_n)$ and $(\gamma_n)$ respectively, Parseval's theorem yields 
    \[
    \sum_{k, k'}(c_k-d_k)(c_{k'}-d_{k'})\gamma(k-k') = \frac{1}{2\pi}\int_{-\pi}^\pi\big\vert C(e^{i\omega}) - D(e^{i\omega})\big\vert^2\Gamma(e^{i\omega})d\omega.
    \]

    We have $D(e^{i\omega})=e^{-iK\omega}$ (Fourier transform of a shifted Dirac at timestep K), and 
    \begin{align*}
        C(e^{i\omega}) &= \sum_{k=0}^{+\infty}\sum_{s=1}^Sb_sa_s^ke^{-i\omega k} = \sum_{s=1}^S\frac{b_s}{1-a_se^{-i\omega}},\\
        \Gamma(e^{i\omega})&=\sum_{k=-\infty}^{+\infty}\gamma(k)e^{-i\omega k} = \frac{1}{1 - \rho e^{-i\omega}}\frac{1-\rho^2}{1 - \rho e^{i\omega}}.
    \end{align*}
    The criterion becomes (with an error of $1$ if $C=0$):
\BEAS
&&\frac{1}{2\pi} \int_0^{2\pi} | D(e^{i\omega}) - C(e^{i\omega})|^2 \Gamma(e^{i\omega}) d\omega\\
& = & 
\frac{ 1-\rho^2}{2\pi} \int_0^{2\pi} \Big| D(e^{i\omega})\frac{1}{1 - \rho e^{-i\omega}} - C(e^{i\omega}) \frac{1}{1 - \rho e^{-i\omega}}\Big|^2  d\omega \\
& = & 1 - 
\frac{1-\rho^2}{2\pi} 2 {\textnormal{ Re}} \Big(\int_0^{2\pi} 
\overline{D(e^{i\omega})\frac{1}{1 - \rho e^{-i\omega}}}
C(e^{i\omega}) \frac{1}{1 - \rho e^{-i\omega}}
\Big) d\omega   \\
& & \hspace*{2cm} + \frac{1-\rho^2}{2\pi}  \int_0^{2\pi} \Big|C(e^{i\omega}) \frac{1}{1 - \rho e^{-i\omega}}\Big|^2  d\omega.
\EEAS 
We have 
$$
\frac{1}{1- a_se^{-i\omega}}\frac{1}{1 - \rho e^{-i\omega}}
= \frac{1}{a_s-\rho} \Big( \frac{a_s}{1 - a_s e^{-i\omega}} - \frac{\rho}{1 - \rho e^{-i\omega}} \Big),
$$
and thus
\BEAS
C(e^{i\omega}) \frac{1}{1 - \rho e^{-i\omega}}  & = &  \sum_{s=1}^{S}  
\frac{b_s}{a_s-\rho} \Big( \frac{a_s}{1 - a_s e^{-i\omega}} - \frac{\rho}{1 - \rho e^{-i\omega}} \Big) \\
 & = & \sum_{s=1}^{S+1} \frac{w_s  }{1- a_se ^{-i\omega}},
\EEAS
with $w_s = b_s a_s / ( a_s - \rho)$, $a_{S+1} = \rho$, and the constraint $\ds \sum_{s=1}^{S+1} w_sa_s^{-1}  = 0$.
The criterion becomes
\BEAS
& & 1 - 
(1-\rho^2)\sum_{s=1}^{S+1} 2 {\textnormal{Re}} \Big( \frac{w_s a_s^K}{1 - a_s \rho}
 \Big)   +(1-\rho^2) \sum_{s,s'=1}^{S+1} \frac{\bar{w}_s w_{s'}  }{1- a_s \bar{a}_s'},
\EEAS 
after straightforward computations.
\end{proof}

\paragraph{Proof of Theorem \ref{theorem autocorrelated lower bound}.}

The minimum with respect to $w$ in Eq.~\eqref{appendix constrained autocorrelated loss} with the constraint is greater than the unconstrained minimizer, equal to
\BEAS
H_K & =  & 1 - (1-\rho^2)\sum_{s,s'=1}^{S+1} 
\frac{ \bar{a}_s^K}{1 - \bar{a}_s \rho}\frac{ a_{s'}^K}{1 - a_{s'} \rho} (C^{-1})_{ss'},
\EEAS
where we recall that $C_{ss'} = \frac{1}{1-a_s\bar{a}_{s'}}$.

Using linear algebra properties from above with $S+1$ zeros and poles, we get
\BEAS
H_K&= & 1 -  (1-\rho^2)\sum_{s,s'=1}^{S+1}
\frac{ 1}{1 - \bar{a}_s \rho}\frac{1}{1 - a_{s'} \rho}  \frac{(\bar{a}_s a_{s'})^{K+1}u_s \bar{v}_{s'}}{1 - \bar{a}_s a_{s'}} 
\\
& = & 1 - (1-\rho^2)\frac{1}{\prod_{s=1}^{S+1} |a_s|^2 }
\sum_{s,s'=1}^{S+1} 
\frac{ 1}{1 - \bar{a}_s \rho}\frac{1}{1 - a_{s'} \rho}  \frac{(\bar{a}_s a_{s'})^{K+1}u_s \bar{u}_{s'}}{1 - \bar{a}_s a_{s'}} ,
\EEAS
where we recall that $u = C^{-1}1_S$ and $v = \text{Diag}(\bar{a})^{-1}C^{-1}\text{Diag}(a)^{-1}1_S$.

We have
\BEAS
H_{K+1} - H_K 
& = & \frac{1-\rho^2}{\prod_{s=1}^{S+1} |a_s|^2 }
\sum_{s,s'=1}^{S+1} 
\frac{ 1}{1 - \bar{a}_s \rho}\frac{1}{1 - a_{s'} \rho}   (\bar{a}_s a_{s'})^{K+1}u_s \bar{u}_{s'} 
\\
& = & \frac{1-\rho^2}{\prod_{s=1}^{S+1} |a_s|^2 }
\Big| \sum_{s=1}^{S+1}
\frac{ 1}{1 - \bar{a}_s \rho}   \bar{a}_s  ^{K+1}u_s  
\Big|^2,
 \EEAS
 leading to
 \BEAS
 H_K & = & \sum_{L=K}^{+\infty} ( H_L - H_{L+1} ) + 1 \\
 & = & 1 - \frac{1-\rho^2}{\prod_{s=1}^{S+1} |a_s|^2 }
 \sum_{L = K}^{+\infty}
 \Big| \sum_{s=1}^{S+1} 
\frac{ 1}{1 - \bar{a}_s \rho}   \bar{a}_s  ^{L} \bar{a}_su_s  
\Big|^2 \\
& \geqslant & 
1 - \frac{1}{K} \frac{1-\rho^2}{\prod_{s=1}^{S+1} |a_s|^2 }
 \sum_{L = 0 }^{+\infty}
 L\Big| \sum_{s=1}^{S+1}
\frac{ 1}{1 - \bar{a}_s \rho}   \bar{a}_s  ^{L} \bar{a}_s u_s  
\Big|^2 ,
 \EEAS
 using $1_{L \geqslant K} \leqslant \frac{L}{K}$.
 
 The sequence $\ds w_L = \sum_{s=1}^{S+1} 
\frac{ 1}{1 - \bar{a}_s \rho}   \bar{a}_s  ^{L} \bar{a}_su_s  $, has Fourier series
\BEAS
W(\omega) & = &  \sum_{L=0}^{+\infty} w_L e^{-i\omega L}
= \sum_{L=0}^{+\infty}   e^{-i\omega L}\sum_{s=1}^{S+1} 
\frac{ 1}{1 - \bar{a}_s \rho}   \bar{a}_s  ^{L}\bar{a}_s u_s \\
& = & \sum_{s=1}^{S+1}
\frac{ 1}{1 - \bar{a}_s \rho}   \frac{\bar{a}_s u_s}{1 - \bar{a}_s e^{-i\omega}} 
= \sum_{s=1}^{S+1} u_s \Big( 
\frac{ 1}{1 - \bar{a}_s \rho}   - \frac{1}{1 - \bar{a}_s e^{-i\omega}} \Big)\frac{1}{  \rho - e^{-i\omega}} \\
& = & 
\frac{1}{  \rho - e^{-i\omega}} \Big( \prod_{s=1}^{S+1} \bar{a}_s\Big) \Big( \prod_{s=1}^{S+1} \frac{a_s - e^{-i\omega}}{1-e^{-i\omega} \bar{a}_s}
-\prod_{s=1}^{S+1} \frac{a_s - \rho }{1- \rho \bar{a}_s}
\Big)
\\
 & = & 
\frac{1}{  \rho - e^{-i\omega}} \Big(\prod_{s=1}^{S+1} \bar{a}_s \Big) \prod_{s=1}^{S+1} \frac{a_s - e^{-i\omega}}{1-e^{-i\omega} \bar{a}_s},
 \EEAS
 because of the link between $u,C$ and rational functions.
 
We have:
\BEAS
 1 - H_K
 & \leqslant & 
 \frac{1-\rho^2}{K} \frac{1}{\prod_{s=1}^{S+1} |a_s|^2 }
 \sum_{L = 0 }^{+\infty}
 L | w_L|^2 \\
 & = & \frac{1-\rho^2}{K} \frac{1}{\prod_{s=1}^{S+1} |a_s|^2 }
\frac{i}{2\pi} \int_0^{2\pi} W'(\omega) \overline{W(\omega)}d\omega
\ \mbox{ using properties of Fourier Series,} \\
 & = & \frac{1-\rho^2}{K}  
\frac{i}{2\pi} \int_0^{2\pi} \frac{d}{d\omega} \Big(
\frac{1}{  \rho - e^{-i\omega}}   \prod_{s=1}^{S+1} \frac{a_s - e^{-i\omega}}{1-e^{-i\omega} \bar{a}_s}
\Big)
\overline{\frac{1}{  \rho - e^{-i\omega}}   \prod_{s=1}^{S+1} \frac{a_s - e^{-i\omega}}{1-e^{-i\omega} \bar{a}_s}}d\omega
\\
 & = & \frac{1-\rho^2}{K}  
\frac{i}{2\pi} \int_0^{2\pi}  
\frac{-i e^{-i\omega}}{  (\rho - e^{-i\omega})^2}   \prod_{s=1}^{S+1} \frac{a_s - e^{-i\omega}}{1-e^{-i\omega} \bar{a}_s}
\overline{\frac{1}{  \rho - e^{-i\omega}}   \prod_{s=1}^{S+1} \frac{a_s - e^{-i\omega}}{1-e^{-i\omega} \bar{a}_s}}d\omega
\\
& & \hspace*{1cm} + 
\frac{1-\rho^2}{K}  
\frac{i}{2\pi} \int_0^{2\pi} 
\frac{1}{  \rho - e^{-i\omega}} \frac{d}{d\omega} \Big(   \prod_{s=1}^{S+1} \frac{a_s - e^{-i\omega}}{1-e^{-i\omega} \bar{a}_s}
\Big)
\overline{\frac{1}{  \rho - e^{-i\omega}}   \prod_{s=1}^{S+1} \frac{a_s - e^{-i\omega}}{1-e^{-i\omega} \bar{a}_s}}d\omega.
\EEAS
Using the following identities,
\BEAS
 \frac{a_s - e^{-i\omega}}{1-e^{-i\omega} \bar{a}_s}
& = & \frac{1}{\bar{a}_s} + \frac{a_s - 1/ \bar{a}_s}{1-e^{-i\omega} \bar{a}_s}, \\
\frac{d}{d\omega} \Big( \frac{a_s - e^{-i\omega}}{1-e^{-i\omega} \bar{a}_s}
\Big) & = & \frac{a_s - 1/ \bar{a}_s}{(1-e^{-i\omega} \bar{a}_s)^2} \bar{a}_s (-i e^{-i\omega})
=  i e^{-i\omega}\frac{1-|a_s|^2 }{(1-e^{-i\omega} \bar{a}_s)^2}, \\
\Big| \frac{a_s - e^{-i\omega}}{1-e^{-i\omega} \bar{a}_s}
\Big|  & = & 1, \EEAS
we get
\BEAS
 1 - H_K & \leqslant & \frac{1-\rho^2}{K}  
\frac{1}{2\pi} \int_0^{2\pi}  
\frac{e^{-i\omega}}{  (\rho - e^{-i\omega})^2 (\rho - e^{i\omega})}     d\omega
\\
& & \hspace*{4cm} +
\frac{1-\rho^2}{K}  
\sum_{s=1}^{S+1}  ( 1- |a_s|^2) 
\frac{1}{2\pi} \int_0^{2\pi} 
\frac{1}{  |\rho - e^{-i\omega}|^2}  
 \frac{1}{|a_s - e^{-i\omega}|^2} d\omega
\\
& = & \frac{1-\rho^2}{K}  
\frac{1}{2\pi} \int_0^{2\pi}  
\frac{e^{-i\omega}}{  (\rho - e^{-i\omega})^2 (\rho - e^{i\omega})}     d\omega
\\
& & \hspace*{-1cm}
+
\frac{(1-\rho^2)^2}{K}  
\frac{1}{2\pi} \int_0^{2\pi} 
\frac{1}{  |\rho - e^{-i\omega}|^4}  d\omega
 +
\frac{1-\rho^2}{K}  
\sum_{s=1}^{S}  ( 1- |a_s|^2) 
\frac{1}{2\pi} \int_0^{2\pi} 
\frac{1}{  |\rho - e^{-i\omega}|^2}  
 \frac{1}{|a_s - e^{-i\omega}|^2} d\omega
\\
& = & - \frac{1}{K} \frac{1}{1-\rho^2}   +
 \frac{1}{K} \frac{1+\rho^2}{1-\rho^2}   + 
\frac{1-\rho^2}{K}  
\sum_{s=1}^{S}  ( 1- |a_s|^2) 
\frac{1}{2\pi} \int_0^{2\pi} 
\frac{1}{  |\rho - e^{-i\omega}|^2}  
 \frac{1}{|a_s - e^{-i\omega}|^2} d\omega  
 \EEAS
 by exact integration. Then, using $\ds\frac{1}{  |\rho - e^{-i\omega}|^2}   \leqslant \frac{1}{(1-\rho)^2}$, 
 and $\ds\frac{1}{2\pi} \int_0^{2\pi} 
  \frac{1}{|a_s - e^{-i\omega}|^2} d\omega  = \frac{1}{1-|a_s|^2}$, we get
  \BEAS
1-H_K
& \leqslant &  
 \frac{1}{K} \frac{\rho^2}{1-\rho^2}   + 
\frac{1+\rho}{1-\rho}\frac{S}{K}  \leqslant  \frac{1}{K} \frac{\rho}{1-\rho}   + 
\frac{2}{1-\rho}\frac{S}{K}
= \frac{1}{K} \frac{1}{1-\rho} ( \rho + 2 S).
 \EEAS
Thus, we get an approximation error greater than
$\displaystyle
\Big( 1 - \frac{1}{K} \frac{3S}{1-\rho} \Big)_+.
$ (since it is always nonnegative).

\section{Upper bound}

Here, we prove the results of Section~\ref{section upper bound} of the paper. We begin by proving the expression of $\mathcal{L}_\text{freq}$, to then justify the parametrization of the optimal $b_s$ in Eq.~\eqref{Param_of_the_as}. We finally compute the asymptotic loss in  Eq.~\eqref{Upper bound as and bs} and Theorem~\ref{convergence to window}.

\subsection{Loss in frequency domain}\label{appendix subsection frequency loss}

Here, we prove the expression of the counterpart of $\mathcal{L}_\text{time}$ in the frequency domain, $\mathcal{L}_\text{freq}$. More explicitly, we give a proof of Eq.~\eqref{Frequential loss copy task}.

\begin{proof}
    Denote $(z_k)$ the discrete-time filter such that 
    \[
    z_k = \sum_{k'=0}^{+\infty}(c_{k'} - d_{k'})\gamma(k-k').
    \]
    Therefore, $(z_k)$ is a convolution between $(c_k-d_k)$ and $\gamma_k$, 
    \[
    \sum_{k, k'=0}^{+\infty}(c_k - d_k)(c_{k'} - d_{k'})\gamma(k-k) = \sum_{k=0}^{+\infty}(c_k - d_k)z_k.
    \]
    According to Parseval's theorem and denoting $C(e^{i\omega}), D(e^{i\omega})$ and $\Gamma(e^{i\omega})$ the respective Fourier transforms, we have: 
    \begin{align*}
        \sum_{k=0}^{+\infty}(c_k - d_k)z_k &= \frac{1}{2\pi}\int_0^{2\pi}Z(\omega)\overline{(C(e^{i\omega}) - D(e^{i\omega}))}d\omega\\
        &= \frac{1}{2\pi}\int_0^{2\pi}\Gamma(e^{i\omega})(C(e^{i\omega}) - D(e^{i\omega}))\overline{(C(e^{i\omega}) - D(e^{i\omega}))}d\omega\\
        &= \frac{1}{2\pi}\int_0^{2\pi}\big\vert C(e^{i\omega}) - D(e^{i\omega})\big\vert^2\Gamma(e^{i\omega})d\omega,
    \end{align*}
    by the convolution property of the DTFT. Finally, $(d_k)$ being the shifted impulse filter, its Fourier transform is $D(e^{i\omega})=e^{-iK\omega}$. The Fourier transform $C(e^{i\omega})$ of $(c_k)$ is given by 
        \[
    C(e^{i\omega}) = \sum_{s=1}^S b_s \sum_{k=-\infty}^\infty \left(a_s e^{-i\omega}\right)^k = \sum_{s=1}^S \frac{b_s}{1 - a_s e^{-i\omega}}.
    \]
\end{proof}

\subsection{Parametrization of the optimal \texorpdfstring{$\boldsymbol{b_s}$}{bs}}

\label{appendix subsection asymptotic bs}

For the sake of conciseness, we sometimes denote $a$ and $b$ for the vectors $(a_s)$ and $(b_s)$ respectively. Before proving Lemma~\ref{Lemma param of bs}, we show three general lemmas on Fourier series and remainder of series. We will use them later in the proof of Lemma~\ref{Lemma param of bs}.

\begin{lemma}\label{lemma eigenvector Toeplitz}
Let \(\alpha \in \mathbb{C}\) and \(S \in \mathbb{N}\). Consider the infinite Toeplitz matrix \(T\) defined by $$T(s, s')~=~\frac{1}{2\alpha - \mathrm{i}(s-s')\pi}.$$ Then, \(\frac{2}{e^{2\alpha} - e^{-2\alpha}}\) is an asymptotic eigenvalue of  \(T\), associated with the eigenvector \(z = ((-1)^s)_{s \in \mathbb{Z}}\).
\end{lemma}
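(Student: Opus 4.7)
My plan is to verify that $z$ is an exact eigenvector of the full infinite symmetric Toeplitz operator with eigenvalue $\lambda = \frac{2}{e^{2\alpha}-e^{-2\alpha}}$, and then observe that the finite $(2T{+}1)\times(2T{+}1)$ truncation differs from this only by the tails of a conditionally convergent series that vanish as $T\to\infty$; this is what makes the eigenvalue \emph{asymptotic} in the sense of the statement. Concretely, the computation I would do is
\[
(Tz)_s = \sum_{s' \in \mathbb{Z}} \frac{(-1)^{s'}}{2\alpha - i(s-s')\pi},
\]
followed by the substitution $n = s - s'$, using $(-1)^{s-n} = (-1)^s (-1)^n$ to factor out $z_s = (-1)^s$. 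This reduces the whole lemma to the single numerical identity
\[
\sum_{n \in \mathbb{Z}} \frac{(-1)^n}{2\alpha - in\pi} = \frac{2}{e^{2\alpha} - e^{-2\alpha}},
\]
to be understood as a symmetric (principal-value) limit.

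Next, I would handle the conditional convergence by pairing the $n$-th and $(-n)$-th terms, which collapses the series to an absolutely convergent one:
\[
\sum_{n \in \mathbb{Z}} \frac{(-1)^n}{2\alpha - in\pi} = \frac{1}{2\alpha} + \sum_{n=1}^{\infty} (-1)^n \left(\frac{1}{2\alpha - in\pi} + \frac{1}{2\alpha + in\pi}\right) = \frac{1}{2\alpha} + 4\alpha \sum_{n=1}^{\infty} \frac{(-1)^n}{4\alpha^2 + n^2\pi^2}.
\]
I would then invoke the classical Mittag-Leffler partial-fraction expansion of the hyperbolic cosecant,
\[
\operatorname{csch}(w) = \frac{1}{w} + 2w \sum_{n=1}^{\infty} \frac{(-1)^n}{w^2 + n^2\pi^2},
\]
evaluated at $w = 2\alpha$, which gives exactly $\operatorname{csch}(2\alpha) = \frac{2}{e^{2\alpha}-e^{-2\alpha}}$, closing the identity.

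To justify the word \emph{asymptotic}, I would finally note that truncating $s' \in \llbracket -T, T\rrbracket$ amounts to truncating the $n$-sum to the window $[s-T, s+T]$ about $s$; for fixed $s$ this symmetric window exhausts $\mathbb{Z}$ as $T\to\infty$, and the remaining tail of the paired series is $O(1/T^2)$ by the alternating-series estimate applied to the paired terms $(-1)^n \tfrac{4\alpha}{4\alpha^2 + n^2\pi^2}$, so the relative eigenvalue error vanishes uniformly on any fixed $s$. The main (and only) subtlety is correctly interpreting the conditionally convergent series as a symmetric sum; once the pairing is performed, the Mittag-Leffler step is essentially a table lookup and the verification is immediate.
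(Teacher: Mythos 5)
Your proposal is correct, and it reaches the key identity $\sum_{n\in\mathbb{Z}}\frac{(-1)^n}{2\alpha-in\pi}=\frac{2}{e^{2\alpha}-e^{-2\alpha}}$ by a genuinely different route than the paper. After the same reduction (factoring $(-1)^s$ out of $(Tz)_s$ via the substitution $n=s-s'$), you pair the $\pm n$ terms to get an absolutely convergent series and then quote the Mittag--Leffler expansion $\operatorname{csch}(w)=\frac{1}{w}+2w\sum_{n\ge1}\frac{(-1)^n}{w^2+n^2\pi^2}$ at $w=2\alpha$. The paper instead derives this identity from scratch: it recognizes $\frac{1}{2\alpha+ik\pi}$ (up to the factor $\frac{2}{e^{2\alpha}-e^{-2\alpha}}$) as the $k$-th Fourier coefficient of $\omega\mapsto e^{-\frac{2\alpha}{\pi}(\omega-\pi)}$ on $[0,2\pi]$ and evaluates the Fourier series at the jump point $\omega=\pi$ via Dirichlet's theorem. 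These are two standard proofs of the same classical expansion, so the mathematical content coincides; your version is more of a table lookup, while the paper's Fourier-series derivation has the side benefit of producing Eq.~\eqref{eq:FS}, which is reused later (e.g.\ in the condition-number argument for $C$ in the proof of Lemma~\ref{Lemma param of bs}). Your explicit treatment of the conditional convergence by symmetric pairing is a point the paper glosses over, and is welcome. One small inaccuracy: the truncated window $n\in[s-T,s+T]$ is symmetric about $n=s$, not $n=0$, so after pairing about $0$ there remain $O(|s|)$ unpaired boundary terms; the resulting error is $O(1/T)$ (consistent with the paper's Lemma~\ref{appendix lemma 12.2}) rather than the $O(1/T^2)$ you claim, but this does not affect the conclusion that the eigenvalue is asymptotic.
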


\begin{proof}
We compute the action of \(T\) on \(z\):
\[
(Tz)_s= \sum_{s'} T(s, s')(-1)^{s'} = \sum_{s'} \frac{e^{\mathrm{i}\pi s'}}{2\alpha - \mathrm{i}\pi s + \mathrm{i}\pi s'}.
\]

This expression resembles a Fourier series evaluated at \(\omega = \pi\). For \(k \in \mathbb{Z}\), consider:
\[
\frac{1}{2\pi} \int_0^{2\pi} e^{-\frac{2\alpha}{\pi}(\omega-\pi)} e^{-\mathrm{i}k\omega} \, d\omega = \frac{1}{2} \cdot \frac{e^{2\alpha} - e^{-2\alpha}}{2\alpha + \mathrm{i}k\pi}.
\]

Thus:
\begin{equation}
\label{eq:FS}
\frac{1}{2\alpha + \mathrm{i}k\pi} = \frac{1}{2\pi} \cdot \frac{2}{e^{2\alpha} - e^{-2\alpha}} \int_0^{2\pi} e^{-\frac{2\alpha}{\pi}(\omega-\pi)} e^{-\mathrm{i}k\omega} \, d\omega.
\end{equation}

We recognize this as the Fourier coefficient of the function \(f_\alpha(\omega) = \frac{2}{e^{2\alpha} - e^{-2\alpha}} e^{-\frac{2\alpha}{\pi}(\omega-\pi)}\). Therefore, according to Dirichlet's theorem, for \(s \in \mathbb{Z}\):
\[
f_{2\alpha - \mathrm{i}\pi s}(\pi) = \sum_{s'} \frac{e^{\mathrm{i}\pi s'}}{2\alpha - \mathrm{i}\pi s + \mathrm{i}\pi s'} = \sum_{s'} \frac{(-1)^{s'}}{2\alpha - \mathrm{i}\pi(s-s')}.
\]

Simplifying, we find:
\( \displaystyle
\frac{2}{e^{2\alpha - \mathrm{i}\pi s} - e^{-2\alpha + \mathrm{i}\pi s}} = \sum_{s'} \frac{(-1)^{s'}}{2\alpha - \mathrm{i}\pi(s-s')}.
\)
Finally, we obtain
\( \displaystyle
(Tz)_s= \frac{2}{e^{2\alpha} - e^{-2\alpha}} z_s,
\)
and thus:
\( \displaystyle
Tz = \frac{2}{e^{2\alpha} - e^{-2\alpha}} z,
\)
proving that \(\frac{2}{e^{2\alpha} - e^{-2\alpha}}\) is an eigenvalue associated with \(z = ((-1)^s)_{s \in \mathbb{Z}}\).

\end{proof}

Now, we prove two general results on remainders of alternating series. 

\begin{lemma}\label{appendix lemma 12.1}
We have:
    \begin{equation*}
        R_N = \sum_{n=N}^{+\infty}\frac{(-1)^n}{n} = \frac{(-1)^N}{2N} + \frac{1}{2}\sum_{n=N}^{+\infty}\frac{(-1)^n}{n(n+1)}.
    \end{equation*}
\end{lemma}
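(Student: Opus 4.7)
The identity is purely algebraic/telescopic: it relates the tail of the standard alternating harmonic series $R_N$ to the (absolutely convergent) tail of the ``accelerated'' series $\sum (-1)^n/[n(n+1)]$. My plan is to use the partial fraction decomposition
\[
\frac{1}{n(n+1)} = \frac{1}{n} - \frac{1}{n+1},
\]
which is the classical first step used to accelerate alternating series (Euler's transformation in embryonic form).

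First, I would multiply by $(-1)^n$ and sum from $n = N$ to $+\infty$, splitting into two convergent tails (the absolute convergence of $\sum (-1)^n/[n(n+1)]$ makes the splitting legitimate once one verifies that each half-sum converges---both do, by Leibniz's criterion):
\[
\sum_{n=N}^{+\infty}\frac{(-1)^n}{n(n+1)} = \sum_{n=N}^{+\infty}\frac{(-1)^n}{n} - \sum_{n=N}^{+\infty}\frac{(-1)^n}{n+1}.
\]
The first sum is $R_N$ by definition. For the second, I perform the index shift $m = n+1$, which gives
\[
\sum_{n=N}^{+\infty}\frac{(-1)^n}{n+1} = -\sum_{m=N+1}^{+\infty}\frac{(-1)^m}{m} = -\bigl(R_N - \tfrac{(-1)^N}{N}\bigr).
\]

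Plugging back yields
\[
\sum_{n=N}^{+\infty}\frac{(-1)^n}{n(n+1)} = R_N + R_N - \frac{(-1)^N}{N} = 2R_N - \frac{(-1)^N}{N},
\]
and solving for $R_N$ gives exactly the claimed formula. There is no real obstacle here; the only point requiring minor care is justifying that one may split the series on the right-hand side into two tails (both converge, by the alternating series test), so that the index-shift manipulation is valid. This lemma is essentially a one-step instance of Euler's series transformation, and I would present it in the three lines above.
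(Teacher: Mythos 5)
Your proof is correct and is essentially the same argument as the paper's: both rest on the telescoping identity $\frac{(-1)^n}{n}-\frac{(-1)^n}{n+1}=\frac{(-1)^n}{n(n+1)}$ combined with the relation $R_{N+1}=R_N-\frac{(-1)^N}{N}$, the only difference being that the paper starts from $R_N+R_{N+1}$ and groups terms, whereas you start from the accelerated series and split it. No gap to report.
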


\begin{proof}
    On the one side,  by grouping two consecutive terms,
    \begin{align*}
    R_N + R_{N+1} &= \sum_{n=N}^{+\infty}\frac{(-1)^n}{n} + \sum_{n=N+1}^{+\infty}\frac{(-1)^n}{n}  = \sum_{n=N}^{+\infty} \Big\{ \frac{(-1)^n}{n} - \frac{(-1)^n}{n+1} \Big\}= \sum_{n=N}^{+\infty}\frac{(-1)^n}{n(n+1)},
    \end{align*}
and on the other side, 
\[
R_{N+1} = R_N - \frac{(-1)^N}{N}.
\]
Therefore, \( \displaystyle
2R_N = \frac{(-1)^N}{N} + \sum_{n=N}^{+\infty}\frac{(-1)^n}{n(n+1)}.
\)
\end{proof}

\begin{lemma}\label{appendix lemma 12.2}
For $\alpha\in\mathbb{C}$ such that Re$(\alpha)>0$,
we have
\begin{equation*}
    \sum_{n=N}^{+\infty} \frac{(-1)^n}{\alpha + i\pi n} = \frac{i}{\pi}\times\frac{(-1)^N}{2N} + o\big(\frac{1}{N}\big).
\end{equation*}
\end{lemma}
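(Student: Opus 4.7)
I will mirror the proof of Lemma~\ref{appendix lemma 12.1}: group consecutive tails, solve for $R_N$, and extract the leading asymptotic. Let $R_N = \sum_{n=N}^{+\infty}(-1)^n/(\alpha + i\pi n)$. Re-indexing the sum in $R_{N+1}$ so that both series start at $n=N$ allows one to pair terms:
$$R_N + R_{N+1} = \sum_{n=N}^{+\infty}(-1)^n\!\left[\frac{1}{\alpha + i\pi n} - \frac{1}{\alpha + i\pi(n+1)}\right] = i\pi\sum_{n=N}^{+\infty}\frac{(-1)^n}{(\alpha + i\pi n)(\alpha + i\pi(n+1))}.$$
On the other hand, $R_{N+1} = R_N - (-1)^N/(\alpha + i\pi N)$, so combining these two identities gives
$$2R_N = \frac{(-1)^N}{\alpha + i\pi N} + i\pi\sum_{n=N}^{+\infty}\frac{(-1)^n}{(\alpha + i\pi n)(\alpha + i\pi(n+1))}.$$

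Since $\mathrm{Re}(\alpha) > 0$, a Taylor expansion gives $\frac{1}{\alpha + i\pi N} = \frac{1}{i\pi N}\bigl(1 + O(1/N)\bigr)$, so the first term on the right-hand side contributes $\frac{(-1)^N}{2 i\pi N} + O(1/N^2)$ to $R_N$, which matches the claimed $(-1)^N/(2N)$ scaling up to rewriting the constant prefactor. What remains is to control the tail
$T_N := \sum_{n\geq N}(-1)^n/\bigl[(\alpha + i\pi n)(\alpha + i\pi(n+1))\bigr]$ and show that it contributes only $o(1/N)$ to $R_N$.

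The main obstacle is precisely this last step. The summand of $T_N$ has magnitude $\sim 1/(\pi^2 n^2)$, so a naive triangle-inequality bound only yields $T_N = O(1/N)$, which is of the same order as the leading term and would merge into it. To gain the missing power of $1/N$, I would exploit the alternating sign $(-1)^n$: either apply the pairing trick once more to $T_N$ (pairing $T_N$ with $T_{N+1}$ produces a telescoping series whose summand decays like $1/n^3$), or use Abel summation-by-parts, noting that $|a_{n+1} - a_n| = O(1/n^3)$ for $a_n = 1/[(\alpha + i\pi n)(\alpha + i\pi(n+1))]$ while the partial sums of $(-1)^n$ are bounded by one. Either route upgrades the bound to $T_N = O(1/N^2) = o(1/N)$, and substituting back into the identity for $2R_N$ yields the claimed asymptotic.
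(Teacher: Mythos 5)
Your proof is correct and reaches the asymptotic by a slightly different decomposition than the paper's. The paper first subtracts the reference series $S_N=\sum_{n\ge N}\frac{(-1)^n}{-i\pi n}$, observes that $R_N-S_N=\alpha\sum_{n\ge N}\frac{(-1)^n}{\alpha i\pi n+\pi^2 n^2}$ has summands of size $O(1/n^2)$, evaluates $S_N$ via Lemma~\ref{appendix lemma 12.1}, and disposes of the remainder series by appealing to ``classical results on the remainder of alternating series.'' You instead apply the pairing identity of Lemma~\ref{appendix lemma 12.1} directly to $R_N$ itself, obtaining $2R_N=\frac{(-1)^N}{\alpha+i\pi N}+i\pi T_N$ with $T_N$ an alternating series whose terms decay like $1/n^2$. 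The two routes use the same underlying trick at different places, but yours makes explicit, and handles rigorously, exactly the step the paper glosses over: you correctly observe that the triangle inequality only gives $T_N=O(1/N)$, which would contaminate the leading term, and that a second pairing (or Abel summation, using $|a_{n+1}-a_n|=O(1/n^3)$ and bounded partial sums of $(-1)^n$) is needed to upgrade this to $O(1/N^2)=o(1/N)$. This is genuinely the delicate point: the paper's footnoted alternating-series criterion is stated for real, positive, decreasing terms, whereas the summands appearing in its own remainder are complex, so the pairing/Abel argument you supply is what is really required there as well. One caveat: your leading term evaluates to $\frac{(-1)^N}{2i\pi N}=-\frac{i}{\pi}\cdot\frac{(-1)^N}{2N}$, which differs by a sign from the displayed claim $+\frac{i}{\pi}\cdot\frac{(-1)^N}{2N}$. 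This is not an error in your argument but a sign inconsistency in the paper itself (the statement writes $\alpha+i\pi n$ while its proof, and the downstream uses of the lemma, work with $\alpha-i\pi n$, for which $+i/\pi$ is correct); still, you should not wave this away as ``rewriting the constant prefactor''---state explicitly which sign convention you adopt and carry it through.
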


\begin{proof}
    We denote $R_N = \sum_{n=N}^{+\infty}\frac{(-1)^n}{\alpha - i\pi n}$ and $S_N = \sum_{n=N}^{+\infty}\frac{(-1)^n}{-i\pi n}$. Consequently, 
    \begin{align*}
        R_N - S_N &= \sum_{n=N}^{+\infty}(-1)^n\big(\frac{1}{\alpha - i\pi n} + \frac{1}{i\pi n}\big) = \alpha\sum_{n=N}^{+\infty}\frac{(-1)^n}{\alpha i\pi n + \pi^2 n^2},
        \end{align*}
        leading to
        \begin{align*} R_N &= S_N + \alpha\sum_{n=N}^{+\infty}\frac{(-1)^n}{\alpha i\pi n + \pi^2n^2}= -\frac{i}{\pi}\sum_{n=N}^{+\infty}\frac{(-1)^n}{n} + \alpha\sum_{n=N}^{+\infty}\frac{(-1)^n}{\alpha i\pi n + \pi^2n^2}\\
        &= -\frac{i}{\pi}\times\frac{(-1)^N}{2N} - \frac{i}{\pi}\sum_{n=N}^{+\infty}\frac{(-1)^n}{n(n+1)} + \frac{\alpha}{\pi}\sum_{n=N}^{+\infty}\frac{(-1)^n}{\alpha in + \pi n^2}, 
    \end{align*}
    where the last line stems from Lemma~\ref{appendix lemma 12.1}. The result then follows by classical results on the remainder of alternating series.\footnote{Alternating series' criterion: Let $(a_n)$ be a positive and decreasing sequence such that $a_n\rightarrow 0$. Then, the series $\sum_n(-1)^na_n$ converges. Denoting $R_n = \sum_{k=n+1}^{+\infty}(-1)^ka_k$, we have $\vert R_n\vert\leq a_{n+1}$.}
\end{proof}

\begin{proof}\textbf{(Lemma~\ref{Lemma param of bs})} \hspace*{.15cm}
We adopt the parametrization in Eq.~\eqref{Param_of_the_as} for the poles \((a_s)\) and aim to determine the optimal parameters \((b_s)\) under this configuration. Since $\mathcal{L}_\text{time}(c, d)$ is convex with respect to \((b_s)\), setting the gradient to zero yields:
\[
b = C^{-1} \bar{a}^K,
\]
where \(C_{ss'} = \frac{1}{1 - a_s \bar{a}_{s'}}\) for $s, s'$ in $\llbracket -T, T\rrbracket$ where we recall that $S = 2T+1$. 
Let's denote the matrix $M$ such that $M(s, s') = \frac{K}{2\alpha - i\pi(s-s')} + \frac{1}{2}$ for $s, s'$ in $\llbracket -T, T\rrbracket$. 

First, we remark that, for $s\in\llbracket-T, T\rrbracket$, 
\[
(a^K)_s= (-1)^se^{-\alpha}\in\rb.
\]
We derive the asymptotic expansion for the optimal $b$, using a coordinate-wise approach. We recall that we place ourselves in the case $1\ll S \ll K$. Let us denote $z=((-1)^s)_{s\in\llbracket -T, T\rrbracket}$. For $s\in\llbracket-T, T\rrbracket$,

\begin{align}
    \!\!(Cz)_s \!\! &= \!\!(Mz)_s + (Cz)_s - (Mz)_s\notag\\
    &=\!\!\sum_{s'=-T}^T\frac{(-1)^{s'}K}{2\alpha - i\pi(s-s')}\notag\\
    &\quad + \sum_{s'=-T}^T\frac{(-1)^{s'}}{2} + \sum_{s'=-T}^T(-1)^{s'}\big[\frac{1}{1-e^{-\frac{2\alpha}{K}}e^{\frac{i\pi(s-s')}{K}}}-\frac{K}{2\alpha - i\pi(s-s')}-\frac{1}{2}\big]\notag\\
    &=\!\!\sum_{s'=-T}^T\frac{(-1)^{s'}K}{2\alpha - i\pi(s-s')}  + \frac{1}{2} + \sum_{s'=-T}^T(-1)^{s'}\big[\frac{2\alpha-i\pi(s-s')-K(1-e^{-\frac{2\alpha}{K}}e^{\frac{i\pi(s-s')}{K}})}{(1-e^{-\frac{2\alpha}{K}}e^{i\pi(s-s')})(2\alpha - i\pi(s-s'))} - \frac{1}{2}\big]\notag\\
    &=\!\!\sum_{s'=-T}^T\frac{(-1)^{s'}K}{2\alpha - i\pi(s-s')}  \!+\! \frac{1}{2}\! \notag\\
    &\quad +\!\frac{1}{K}\!\sum_{s'=-T}^T \! (-1)^{s'}\big[\frac{\frac{4\alpha^3}{3} - \pi^2 \alpha (s-s')^2 + i \left(\pi^3\frac{(s-s')^3}{6} - 2\pi \alpha^2 (s-s') \right) + o(1)}{(2\alpha-i\pi(s-s')+o(1))(2\alpha-i\pi(s-s'))}\big].\label{Cz approx}
\end{align}
But, 
\[
\bigg[\frac{- \pi^2 \alpha (s-s')^2 + i \left(\pi^3\frac{(s-s')^3}{6} - 2\pi \alpha^2 (s-s') \right) + o(1)}{(2\alpha-i\pi(s-s')+o(1))(2\alpha-i\pi(s-s'))}\bigg] \underset{+\infty}{\sim}i\pi\frac{s-s'}{6},
\]
So there exists a sequence $(\epsilon_{s'})$ for $s'\in\mathbb{Z}$ such that $\epsilon_{s'}\rightarrow 0$ and 
\begin{align}
    &\sum_{s'=-T}^T \! (-1)^{s'}\big[\frac{\frac{4\alpha^3}{3} - \pi^2 \alpha (s-s')^2 + i \left(\pi^3\frac{(s-s')^3}{6} - 2\pi \alpha^2 (s-s') \right) + o(1)}{(2\alpha-i\pi(s-s')+o(1))(2\alpha-i\pi(s-s'))} \notag
    \\&=\sum_{s'=-T}^T(-1)^{s'}\frac{\alpha}{3} +\sum_{s'=-T}^T(-1)^{s'}\frac{i\pi(s-s')}{6}(1+\epsilon_{s'})\notag\\
    &\label{eq:equivalent partial sums 0(1)}=(-1)^T[\frac{\alpha}{3}-\frac{i\pi s}{6}] +\sum_{s'=-T}^T(-1)^{s'}\frac{i\pi(s-s')\epsilon_{s'}}{6} = O(1)
\end{align}
Note that we had to keep the constant term $\frac{4\alpha^3}{3}$ which corresponds to $s=s'$.

Plugging this into eq~\eqref{Cz approx}, this leads to
 \begin{align*}
    (Cz)_s &=\sum_{s'=-T}^T\frac{(-1)^{s'}K}{2\alpha - i\pi(s-s')}  + \frac{1}{2} + O(\frac{1}{K}) =(-1)^s\frac{2K}{e^{2\alpha}-e^{-2\alpha}} + O(\frac{K}{T}),
 \end{align*}   
where we used Lemma~\ref{appendix lemma 12.2}.
We can deduce from this coordinate-wise equation that 
\[
Cz = \frac{2K}{e^{2\alpha}-e^{-2\alpha}}z + O(\frac{K}{T}).
\]
We finally use the bounded nature of $C$'s condition number\footnote{This is due to the link between eigenvalues of Toeplitz matrices and the Fourier series of the first row~\cite{gray2006toeplitz}, and the relationship $C_{ss'} = \frac{1}{1-\exp(-2\alpha/K) \exp( i \pi (s-s')/K)}
\sim \frac{K}{2 \alpha - i\pi ( s-s')}$ together with Eq.~\eqref{eq:FS}.} to apply $C^{-1}$ and to show:
\[ \displaystyle
C^{-1}\bar{a}^K \sim \frac{e^{2\alpha} - e^{-2\alpha}}{2K}z.
\]
This is valid when $T\rightarrow+\infty, T/K\rightarrow 0$.
\end{proof}

\begin{figure}[ht]
    \centering
    \includegraphics[width=1\linewidth]{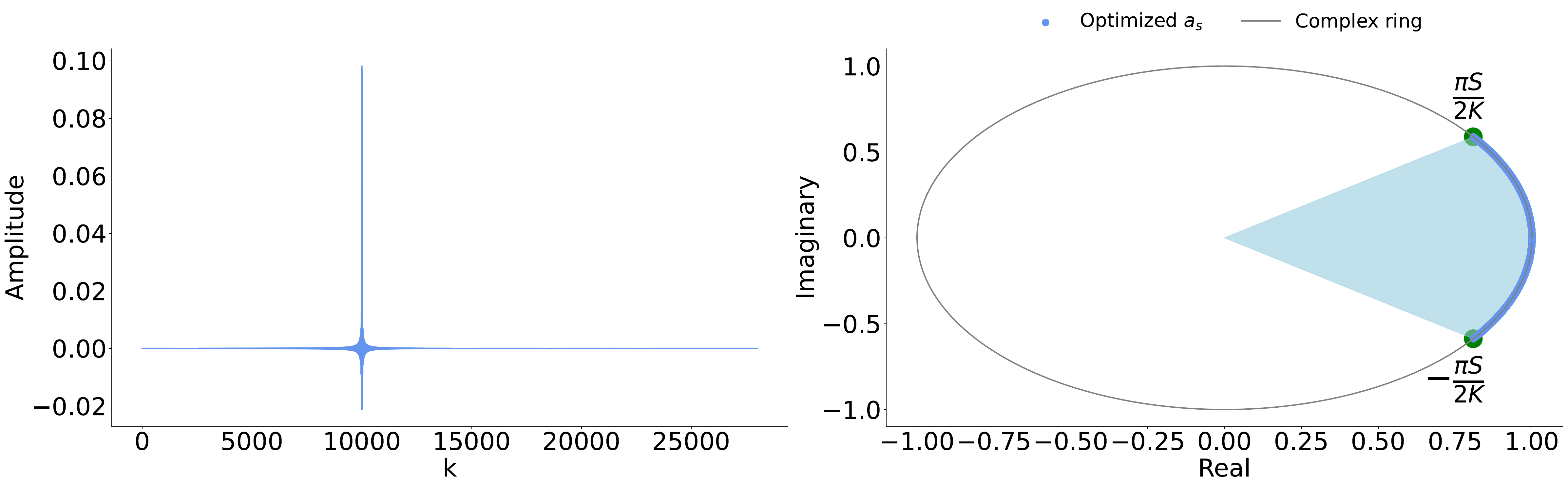}
    \caption{\textit{Left: Real values of filter in Eq.~\eqref{definition new filter} in the time-domain. Right: Positions of the $a_s$ on the unit disk. In this case, $S=100, K=10000$. The $x$-axis on the left images represents the timesteps. The representation in Eq.~\eqref{definition new filter} concentrates the poles $a_s$ on a slice of the unit disk, whose size depends on the ratio $\frac{S}{K}$. The $a_s$ operate in pairs of complex conjugates, ensuring that the final filter remains real. Each $a_s$ approximates a single oscillation of the complex exponential, with the oscillations spaced by a distance of $\frac{\pi}{K}$. Therefore when $K$ increases (for fixed $S$), the slice size decreases, imposing smaller phase shifts to capture long-range dependencies in the data (see \cite{orvieto2023resurrecting}). This parametrization allows to build filters than can look far back in time.}}
    \label{fig:time domain filter}
\end{figure}

\subsection{Upper bound of the loss}

In this section, we will prove Theorem~\ref{thm upper bound} through an asymptotic expansion. We will use general results on the remainder of alternate series in Lemmas~\ref{appendix lemma 12.1} and~\ref{appendix lemma 12.2}. We also start with a lemma very similar to Lemma~\ref{lemma eigenvector Toeplitz}. We use all of them later in the proof of Theorem~\ref{thm upper bound}.

\begin{lemma}\label{appendix lemma fourier series}
    Let $\alpha\in\mathbb{C}$ such that $\textnormal{Re}(\alpha)>0$. Then,
    \( \displaystyle     \sum_{s=-\infty}^{+\infty}\frac{(-1)^s}{2\alpha - i\pi s} = \frac{2}{e^{2\alpha}-e^{-2\alpha}}.
    \)
\end{lemma}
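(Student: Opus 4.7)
The plan is to recognize this identity as the evaluation at $\omega = \pi$ of a Fourier series whose coefficients were already computed in the paper. Consider on $[0, 2\pi]$ the function
\[
f_\alpha(\omega) \;=\; \frac{2}{e^{2\alpha}-e^{-2\alpha}}\, e^{-2\alpha(\omega-\pi)/\pi}.
\]
By the integral identity in Eq.~\eqref{eq:FS}, the $k$-th Fourier coefficient of $f_\alpha$ on $[0,2\pi]$ is exactly $\dfrac{1}{2\alpha + i\pi k}$. The function $f_\alpha$ is smooth on the open interval $(0, 2\pi)$, in particular $C^1$ in a neighborhood of $\omega = \pi$, so Dirichlet's theorem (as recalled in Appendix~\ref{appendix subsection Fourier series}) gives pointwise convergence of the symmetric Fourier partial sums at the interior point $\omega = \pi$.

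Evaluating the Fourier series at $\omega = \pi$ and using $e^{ik\pi} = (-1)^k$, Dirichlet's theorem yields
\[
\lim_{N\to+\infty} \sum_{k=-N}^{N} \frac{(-1)^k}{2\alpha + i\pi k} \;=\; f_\alpha(\pi) \;=\; \frac{2}{e^{2\alpha}-e^{-2\alpha}}.
\]
The change of summation index $s = -k$, combined with $(-1)^{-s} = (-1)^s$, rewrites the left-hand side as $\sum_{s\in\mathbb{Z}} \frac{(-1)^s}{2\alpha - i\pi s}$, which establishes the claim. Equivalently, one can read the lemma as the specialization to $s=0$ of the identity $\sum_{s'} \frac{(-1)^{s'}}{2\alpha - i\pi(s-s')} = \frac{2}{e^{2\alpha - i\pi s} - e^{-2\alpha + i\pi s}}$ already derived inside the proof of Lemma~\ref{lemma eigenvector Toeplitz}; so in either presentation essentially no new work beyond that earlier Fourier computation is required.

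The only delicate point, which I expect to be the main obstacle to flag explicitly, is that the series is conditionally convergent: its general term decays like $1/s$, so the sum must be interpreted as the symmetric partial-sum limit. This is precisely the sense in which Dirichlet's theorem delivers convergence, so no extra technical argument is needed. The hypothesis $\operatorname{Re}(\alpha) > 0$ enters in two places: it guarantees that the exponential $e^{-2\alpha(\omega-\pi)/\pi}$ defining $f_\alpha$ is a bounded smooth function on $(0, 2\pi)$, and it ensures that no denominator $2\alpha - i\pi s$ vanishes on $s \in \mathbb{Z}$, so the series is meaningful term by term.
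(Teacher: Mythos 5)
Your proposal is correct and follows essentially the same route as the paper: both recognize the sum as the Fourier series of the exponential $\omega \mapsto e^{\pm 2\alpha(\omega-\pi)/\pi}$ (suitably normalized) evaluated at the interior point $\omega=\pi$, compute the coefficients, and invoke Dirichlet's theorem; your index flip $s=-k$ versus the paper's direct choice of sign in the exponent is an immaterial difference. Your explicit remarks on symmetric partial sums and on where $\operatorname{Re}(\alpha)>0$ is used are correct and slightly more careful than the paper's own write-up.
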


\begin{proof}
We consider $    \sum_{s=-\infty}^{+\infty}\frac{(-1)^s}{2\alpha - i\pi s}$.
    This looks like a Fourier series evaluated in $\omega=\pi$. We will look for a function $f_\alpha$ such that $c_s(f_\alpha) = \frac{1}{2\alpha -i\pi s}$. Denoting $f_\alpha$ such that $f_\alpha(\omega) = \frac{2e^{\frac{-2\alpha}{\pi}(\omega-\pi)}}{e^{2\alpha}-e^{-2\alpha}}$, we have,
\begin{align*}
    c_s(f_\alpha) &=\frac{1}{2\pi}\int_{0}^{2\pi}\frac{2e^{\frac{2\alpha}{\pi}(\omega-\pi)}}{e^{2\alpha}-e^{-2\alpha}}e^{-is\omega}d\omega=\frac{1}{\pi}\frac{1}{(e^{2\alpha}-e^{-2\alpha})(\frac{2\alpha}{\pi}-is)}(e^{2\alpha}-e^{-2\alpha})=\frac{1}{2\alpha - i\pi s}.
\end{align*}

Therefore, using Dirichlet's theorem, $f_\alpha$ is the appropriate function and
\begin{align*}
   &f_{\alpha}(\pi) = \sum_{s} \frac{(-1)^{s}}{2\alpha -i\pi s}  \Leftrightarrow\sum_{s=-\infty}^{+\infty} \frac{(-1)^{s}}{2\alpha -i\pi s} = \frac{2}{e^{2\alpha}-e^{-2\alpha}}. 
\end{align*}
\end{proof}

\begin{proof}\textbf{(Theorem~\ref{thm upper bound})}\label{proof thm 15} \hspace*{.2cm}
    We recall that we have the following asymptotic representations for our filter:

\begin{align*}
    a_s &= e^{-\frac{\alpha}{K}}e^{i\frac{s\pi}{K}}, s\in\llbracket-T, T\rrbracket\\
    b_s &= \frac{e^{-\alpha}(e^{2\alpha}-e^{-2\alpha})}{2K}(-1)^s,s\in\llbracket-T, T\rrbracket,
\end{align*}
and that $\mathcal{L}_\text{time}(c, d)$ can therefore write:
\begin{align*}
    \mathcal{L}_\text{time}(c, d) &= \sum_{s, s'=-T}^T\frac{b_s\bar{b}_{s'}}{1-a_s\bar{a}_{s'}} - 2\sum_{s=-T}^Tb_sa_s^K + 1\\
    &= \frac{e^{-2\alpha}(e^{2\alpha}-e^{-2\alpha})^2}{4K^2}\sum_{s, s'=-T}^T\frac{(-1)^{s+s'}}{1 - e^{-\frac{2\alpha}{K}}e^{(s-s')i\frac{\pi}{K}}} \\
    &\qquad - 2\sum_{s=-T}^T\frac{(-1)^se^{-2\alpha}(e^{2\alpha}-e^{-2\alpha})(-1)^s}{2K} + 1\\
    &=  \frac{e^{-2\alpha}(e^{2\alpha}-e^{-2\alpha})^2}{4K^2}\sum_{s, s'=-T}^T\frac{(-1)^{s+s'}}{1 - e^{-\frac{2\alpha}{K}}e^{(s-s')i\frac{\pi}{K}}} - \frac{e^{-2\alpha}(e^{2\alpha}-e^{-2\alpha})S}{K} + 1.
\end{align*}

First, we prove that 
\begin{equation}\label{approx O(1) loss}
    \sum_{s, s'=-T}^T\frac{(-1)^{s+s'}}{1 - e^{\frac{-2\alpha}{K}}e^{(s-s')i\frac{\pi}{K}}} = \sum_{s=-T}^T(-1)^sK\sum_{s'=-T}^T\frac{e^{i\pi s'}}{2\alpha - i\pi(s-s')} + O(1),
\end{equation}
when $T\rightarrow +\infty, T/K \rightarrow 0$. Let us compute:
\begin{align*}
        &\sum_{s, s'=-T}^T\frac{(-1)^{s+s'}}{1 - e^{\frac{-2\alpha}{K}}e^{(s-s')i\frac{\pi}{K}}} - \sum_{s=-T}^T(-1)^sK\sum_{s'=-T}^T\frac{e^{i\pi s'}}{2\alpha - i\pi(s-s')}\\
        =&\sum_{s, s'=-T}^T(-1)^{s+s'}\big[\frac{1}{1-e^{-\frac{2\alpha}{K}}e^{i\frac{\pi}{K}(s-s')}} - \frac{K}{2\alpha - i\pi(s-s')}\big]\\
        =&\sum_{s, s'=-T}^T(-1)^{s+s'}\big[\frac{2\alpha -i\pi(s-s')-K[1-e^{-\frac{2\alpha}{K}}e^{i\frac{\pi}{K}(s-s')}]}{(1-e^{-\frac{2\alpha}{K}}e^{i\frac{\pi}{K}(s-s')})(2\alpha - i\pi(s-s'))}\big].\\
\end{align*}
Let us finish the computation:
\begin{align*}
        =&\sum_{s, s'=-T}^T(-1)^{s+s'}\frac{2\alpha -i\pi(s-s')-[2\alpha - i\pi(s-s')-\frac{4\alpha^2}{2K}+\frac{\pi^2}{2K}(s-s')^2+\frac{4i\pi\alpha}{K}(s-s') +o(\frac{1}{K})]}{(1-e^{-\frac{2\alpha}{K}}e^{i\frac{\pi}{K}(s-s')})(2\alpha - i\pi(s-s'))}\\
        =&\sum_{s, s'=-T}^T\frac{(-1)^{s+s'}}{K}\times\frac{2\alpha^2 - \frac{\pi^2}{2}(s-s')^2 - 2i\pi\alpha(s-s')+o(1)}{(1-e^{-\frac{2\alpha}{K}}e^{i\frac{\pi}{K}(s-s')})(2\alpha - i\pi(s-s'))}\\
        =&\sum_{n=-2T}^{2T}\frac{(-1)^n}{K}\times\frac{2\alpha^2-\frac{\pi^2n^2}{2}-2i\pi\alpha n+o(1)}{(1-e^{-\frac{2\alpha}{K}}e^{i\frac{\pi}{K}n})(2\alpha - i\pi n)}(2T + 1-\vert n\vert) \text{ with the change of variable $n=s-s'$}\\
        =&\sum_{n=-2T}^{2T}\frac{(-1)^n}{K} (2T+1-\vert n\vert)f(n) \text{ with $f(n) = \frac{2\alpha^2-\frac{\pi^2n^2}{2}-2i\pi\alpha n+o(1)}{(1-e^{-\frac{2\alpha}{K}}e^{i\frac{\pi}{K}n})(2\alpha - i\pi n)}$}.
\end{align*}
But notice that \begin{align*}
    f(n) &= \frac{2\alpha^2-\frac{\pi^2n^2}{2}-2i\pi\alpha n+o(1)}{(1-e^{-\frac{2\alpha}{K}}e^{i\frac{\pi}{K}n})(2\alpha - i\pi n)} = \frac{K(2\alpha^2-\frac{\pi^2n^2}{2}-2i\pi\alpha n+o(1))}{(2\alpha - i\pi n +o(1))(2\alpha - i\pi n)},\\
\end{align*}
therefore
\begin{align*}
\sum_{n=-2T}^{2T}\frac{(-1)^n}{K} (2T+1-\vert n\vert)f(n) &= \sum_{n=-2T}^{2T}(-1)^n (2T+1-\vert n\vert) \frac{2\alpha^2-\frac{\pi^2n^2}{2}-2i\pi\alpha n+o(1)}{(2\alpha - i\pi n +o(1))(2\alpha - i\pi n)}\\
&\sim\sum_{n=-2T}^{2T}(-1)^n(2T+1-\vert n\vert)\frac{-\frac{\pi^2n^2}{2}}{-\pi^2n^2}\sim\frac{1}{2},
\end{align*}
using the same reasoning as for equation ~\eqref{eq:equivalent partial sums 0(1)}.
This shows 
\begin{equation}\label{O(1) intermediate}
    \sum_{n=-2T}^{2T}\frac{(-1)^n}{K}(2T+1-\vert n\vert)f(n) = O(1).
\end{equation}
We can therefore conclude:
\begin{align*}
    &\mathcal{L}_{\text{time}}(c,d) = 1 - \frac{e^{-2\alpha}(e^{2\alpha}-e^{-2\alpha})S}{K} + \frac{e^{-2\alpha}(e^{2\alpha}-e^{-2\alpha})^2}{4K^2}\times\sum_{s=-T}^T(-1)^sK\sum_{s'=-T}^T\frac{e^{i\pi s'}}{2\alpha - i\pi(s-s')} + O(\frac{1}{K^2})\\
    &=1 - \frac{e^{-2\alpha}(e^{2\alpha}-e^{-2\alpha})S}{K} + \frac{e^{-2\alpha}(e^{2\alpha}-e^{-2\alpha})^2}{4K}\big[\sum_{s=-T}^T(-1)^s\big(\sum_{s'=-\infty}^{+\infty}\frac{(-1)^{s'}}{(2\alpha-i\pi s)+i\pi s'} + O(\frac{1}{T})\big)\big] + O(\frac{1}{K^2})\\
    &=1 - \frac{e^{-2\alpha}(e^{2\alpha}-e^{-2\alpha})S}{K} + \frac{e^{-2\alpha}(e^{2\alpha}-e^{-2\alpha})^2}{4K}\big[\sum_{s=-T}^T(-1)^s\frac{2\times(-1)^s}{e^{2\alpha}-e^{-2\alpha}} +O(\frac{1}{T})\big] + O(\frac{1}{K^2})\\
    &= 1 - \frac{e^{-2\alpha}(e^{2\alpha}-e^{-2\alpha})S}{K} + \frac{e^{-2\alpha}(e^{2\alpha}-e^{-2\alpha})S}{2K} + O(\frac{1}{TK})\\
    &=  1 - \frac{e^{-2\alpha}(e^{2\alpha}-e^{-2\alpha})S}{2K} + O(\frac{1}{TK}),
\end{align*}
where we used Lemmas~\ref{appendix lemma 12.2} and~\ref{appendix lemma fourier series}.
\end{proof}

\subsection{Proof of Theorem~\ref{convergence to window}}

In this section, we give the proof for Theorem~\ref{convergence to window} that describes the asymptotic behavior of the transfer function $C(e^{i\omega})$. First, we refer the reader to Lemmas~\ref{appendix lemma 12.1} and~\ref{appendix lemma 12.2} where we derive results on the remainder of some series. Then in Lemmas~\ref{appendix lemma 12.3} and~\ref{appendix lemma 12.4}, we derive an asymptotic new form for the transfer function. We combine all these lemmas to prove Theorem~\ref{convergence to window}.

\begin{lemma}\label{appendix lemma 12.3}
For $\alpha$ real and positive and $\Omega$ real,
\begin{equation*}
    \frac{1}{2}\sum_{s=-T}^T\frac{e^{-\alpha}(e^{2\alpha}-e^{-2\alpha})(-1)^s}{K(1 - e^{-\alpha/K}e^{i\pi(\frac{s}{K}-\frac{\Omega}{K})})} = \frac{1}{2}\sum_{s=-T}^T\frac{e^{-\alpha}(e^{2\alpha}-e^{-2\alpha})(-1)^s}{\alpha - i\pi (s-\Omega)} + O\big(\frac{1}{K}\big),
\end{equation*}
as $T\rightarrow+\infty, T/K\rightarrow 0$.
    
\end{lemma}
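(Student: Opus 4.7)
The plan is to Taylor-expand the exponential in each denominator of the left-hand side around large $K$, and to exploit the cancellation provided by the alternating sign $(-1)^s$ when summing the resulting series.

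Setting $u_s := \alpha - i\pi(s - \Omega)$, the denominator on the left-hand side rewrites as $K(1 - e^{-u_s/K})$. Because $|u_s| \leq \alpha + \pi(T + |\Omega|) = O(T)$ uniformly in $s \in \llbracket -T, T\rrbracket$ and $T/K \to 0$, the ratio $|u_s|/K$ is uniformly small, and I can expand
\begin{equation*}
K(1 - e^{-u_s/K}) = u_s\left(1 - \frac{u_s}{2K} + \frac{u_s^2}{6K^2} - \cdots\right).
\end{equation*}
Inverting this (uniformly) convergent power series gives, with absolute numerical constants $c_k$,
\begin{equation*}
\frac{1}{K(1 - e^{-u_s/K})} = \frac{1}{u_s} + \frac{1}{2K} + \sum_{k \geq 1} \frac{c_k\, u_s^k}{K^{k+1}}.
\end{equation*}

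Next, I would multiply by $(-1)^s$ and sum over $s \in \llbracket -T, T\rrbracket$, then subtract the second sum appearing in the lemma. The constant correction $\frac{1}{2K}$ contributes $(-1)^T/(2K) = O(1/K)$, since $\sum_{s=-T}^T (-1)^s = (-1)^T$. For each higher-order term, $u_s^k$ is a polynomial in $s$ of degree $k$, and I would use the alternating-sum estimate
\begin{equation*}
\left|\sum_{s=-T}^T (-1)^s u_s^k\right| = O(T^k),
\end{equation*}
which follows from $\sum_{s=-T}^T (-1)^s s^d$ being $O(T^d)$ for even $d$ and $0$ for odd $d$ (by the symmetry $s \mapsto -s$, noting that $(-1)^{-s} = (-1)^s$), combined with the binomial expansion of $u_s^k$. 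The $k$-th term therefore contributes $O(T^k/K^{k+1}) = \tfrac{1}{K}(T/K)^k$, and summing this geometric-like series in $T/K$ yields a total of $O\bigl((T/K)/K\bigr) = o(1/K)$.

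Combining the two contributions gives an overall error of $O(1/K)$, and the fixed prefactor $\tfrac{1}{2}e^{-\alpha}(e^{2\alpha}-e^{-2\alpha})$ in front does not affect this order of magnitude. The main delicate point is the alternating-sum cancellation: the naive absolute-value bound $\sum_s |u_s|^k = O(T^{k+1})$ would only yield $O((T/K)^{k+1})$ per order, which is not enough since $T$ is only assumed to satisfy $T/K \to 0$ and may grow much faster than $\sqrt{K}$. It is precisely the cancellation provided by $(-1)^s$ that upgrades the per-$k$ rate from $O((T/K)^{k+1})$ to $\tfrac{1}{K}(T/K)^k$, which is what delivers the $O(1/K)$ bound claimed by the lemma.
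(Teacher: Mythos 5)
Your proposal is correct, and it rests on the same two ingredients as the paper's proof: a Taylor expansion of $e^{-u_s/K}$ with $u_s=\alpha-i\pi(s-\Omega)$, and the cancellation supplied by the alternating sign $(-1)^s$. The execution differs in a way worth noting. The paper combines the two sums into a single fraction, expands the numerator $u_s-K(1-e^{-u_s/K})$ only to second order, and is left with $\frac{1}{K}\sum_s(-1)^s g_K(s)$ where $g_K(s)\to\frac12$ as $|s-\Omega|\to\infty$; the $O(1)$ bound on this alternating sum is then obtained by the same summation-by-parts argument it uses elsewhere (its Eq.~\eqref{eq:equivalent partial sums 0(1)} and Eq.~\eqref{O(1) intermediate}), a step it leaves somewhat terse. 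You instead invert the full power series $K(1-e^{-u_s/K})=u_s(1-\tfrac{u_s}{2K}+\cdots)$, which cleanly isolates the exact term $\tfrac{1}{u_s}$ (reproducing the right-hand side), the constant $\tfrac{1}{2K}$ (whose alternating sum is exactly $(-1)^T$, matching the paper's leading $\tfrac12$), and a tail of polynomial terms that you control via $\bigl|\sum_{s=-T}^T(-1)^s u_s^k\bigr|=O(T^k)$. This reduction to parity of $\sum_s(-1)^s s^d$ plus a binomial expansion makes the cancellation mechanism more transparent than the paper's limit argument, and correctly identifies that naive absolute bounds would fail when $T\gg\sqrt K$. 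Two points you should make explicit if writing this up: the bound $\sum_{s=-T}^T(-1)^s s^d=O(T^d)$ for even $d$ needs a one-line justification (pair consecutive terms, or Abel summation against the bounded partial sums of $(-1)^s$), and the constants in your $O(T^k)$ estimates grow geometrically in $k$ (from the binomial expansion), so you should check that they are dominated by the decay of the $c_k$ (radius of convergence $2\pi$ of $x/(1-e^{-x})$) together with $T/K\to0$ --- which they are. Neither is a gap, only an omission of routine detail.
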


\begin{proof}
    \begin{align*}
         &\sum_{s=-T}^T\frac{(-1)^s}{K(1 - e^{-\alpha/K}e^{i\pi(\frac{s}{K}-\frac{\Omega}{K})})} - \sum_{s=-T}^T\frac{(-1)^s}{\alpha - i\pi (s-\Omega)}\\
         &= \sum_{s=-T}^T(-1)^s\big[\frac{1}{K(1-e^{-\alpha/K}e^{i\frac{\pi}{K}(s-\Omega)})} - \frac{1}{\alpha -i\pi (s - \Omega)}\big]\\
         &= \sum_{s=-T}^T(-1)^s\frac{\alpha -i\pi(s-\Omega)-K(1 - e^{-\alpha/K}e^{\frac{i\pi}{K}(s-\Omega)})}{K\big(1-e^{-\alpha/K}e^{i\frac{\pi}{K}(s-\Omega)}\big)\big(\alpha - i\pi (s-\Omega)\big)}\\
         &= \sum_{s=-T}^T(-1)^s\frac{\alpha - i\pi(s-\Omega) - K\big(\frac{\alpha}{K} - \frac{i\pi}{K}(s-\Omega) - \frac{\alpha^2}{2K^2} + \frac{\pi^2}{2K^2}(s-\Omega)^2 + \frac{i\pi\alpha}{K^2}(s-\Omega)+o(\frac{1}{K^2})\big)}{K\big(1-e^{-\alpha/K}e^{i\frac{\pi}{K}(s-\Omega)}\big)\big(\alpha - i\pi (s-\Omega)\big)}\\
         &= \sum_{s=-T}^T(-1)^s\frac{\frac{\alpha^2}{2K} - \frac{\pi^2}{2K}(s-\Omega)^2 - \frac{i\pi\alpha}{K}(s-\Omega) + o(\frac{1}{K})}{K\big(1-e^{-\alpha/K}e^{i\frac{\pi}{K}(s-\Omega)}\big)\big(\alpha - i\pi (s-\Omega)\big)}\\
         &=\frac{1}{K}\sum_{s=-T}^T(-1)^s\frac{\frac{\alpha^2}{2} - \frac{\pi^2}{2}(s-\Omega)^2-i\pi\alpha(s-\Omega)+o(1)}{[\alpha-i\pi(s-\Omega)+o(1)][\alpha-i\pi(s-\Omega)]}=O\big(\frac{1}{K}\big).
    \end{align*}
    We used a similar argument as the one in eq.~\eqref{eq:equivalent partial sums 0(1)} and eq.~\eqref{O(1) intermediate}.
    The constant $\frac{e^{-\alpha}(e^{2\alpha}-e^{-2\alpha})}{2}$ does not impact our computation. 
\end{proof}
    
\begin{lemma}\label{appendix lemma 12.4}
For $\alpha$ real and positive, 
\[
\frac{1}{2}\sum_{s=-T}^T\frac{(-1)^se^{-\alpha}(e^{2\alpha}-e^{-2\alpha})}{\alpha - i\pi(s-\Omega)} \underset{S/K\rightarrow 0}{\underset{S\rightarrow+\infty}{\sim} }
\begin{cases}
    \frac{e^{-\alpha}(e^{2\alpha}-e^{-2\alpha})}{2}\times\frac{i(-1)^{T+1}\times 2\lfloor\Omega\rfloor}{2\pi(\lfloor\Omega\rfloor-T)(\lfloor\Omega\rfloor+T)} & \text{if } \vert\Omega\vert > T, \\
\frac{e^{-\alpha}(e^{2\alpha}-e^{-2\alpha})}{e^{\alpha}e^{i\pi\Omega}-e^{-\alpha}e^{-i\pi\Omega}} & \text{if } \vert\Omega\vert < T.
\end{cases}
\] 
\end{lemma}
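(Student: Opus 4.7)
Set $z=\Omega-i\alpha/\pi$, so that $\frac{1}{\alpha-i\pi(s-\Omega)}=\frac{i/\pi}{s-z}$, and the problem reduces to the asymptotic behaviour of
\[
\Sigma_T(z):=\sum_{s=-T}^T \frac{(-1)^s}{s-z}.
\]
Since $\alpha>0$, the shift $z$ has a strictly positive (in modulus) imaginary offset, so $\Sigma_T(z)$ is well-defined and analytic. The two branches of the lemma correspond to whether $z$ lies in the summation interval $[-T,T]$: inside, the truncated sum inherits the closed-form value of the infinite Fourier partial-fraction expansion $\sum_{s\in\mathbb{Z}}\frac{(-1)^s}{s-z}=-\pi\csc(\pi z)$; outside, no denominator becomes singular and the leading order comes from the \emph{boundary} of the sum via Abel summation.

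\textbf{Case $|\Omega|<T$.} Apply Lemma~\ref{appendix lemma fourier series} with the complex scalar $\beta=(\alpha+i\pi\Omega)/2$ (which has $\mathrm{Re}(\beta)=\alpha/2>0$) to obtain $\sum_{s\in\mathbb{Z}}\frac{(-1)^s}{(\alpha+i\pi\Omega)-i\pi s}=\frac{2}{e^\alpha e^{i\pi\Omega}-e^{-\alpha}e^{-i\pi\Omega}}$. Control the truncation error by splitting off the two tails $|s|>T$: by Lemma~\ref{appendix lemma 12.2} applied with $\alpha$ replaced by $\alpha+i\pi\Omega$ (still with positive real part), each tail is $O(1/T)$ and therefore vanishes as $T\to\infty$. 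Multiplying by the prefactor $e^{-\alpha}(e^{2\alpha}-e^{-2\alpha})/2$ yields exactly the second branch of the statement.

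\textbf{Case $|\Omega|>T$.} Apply Abel summation. The partial sums are $S_s=\sum_{k=-T}^s(-1)^k=\tfrac{(-1)^T+(-1)^s}{2}$ with boundary value $S_T=(-1)^T$, and the elementary identities for $b_s=1/(s-z)$ give $b_T+b_{-T}=2z/(T^2-z^2)$ and $b_{s+1}-b_s=-1/[(s-z)(s+1-z)]$. Substituting yields the exact decomposition
\[
\Sigma_T(z)=\frac{(-1)^T z}{T^2-z^2}+\frac{1}{2}\sum_{s=-T}^{T-1}\frac{(-1)^s}{(s-z)(s+1-z)}.
\]
Multiplying by $i/\pi$ turns the boundary term into $\frac{i(-1)^{T+1}z}{\pi(z^2-T^2)}$, and since $z=\Omega-i\alpha/\pi\sim\Omega\sim\lfloor\Omega\rfloor$ in the asymptotic regime this is precisely the claimed $\frac{i(-1)^{T+1}\cdot 2\lfloor\Omega\rfloor}{2\pi(\lfloor\Omega\rfloor-T)(\lfloor\Omega\rfloor+T)}$. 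To show the residual sum is subdominant, iterate the same Abel procedure on the new alternating sum with $c_s=1/[(s-z)(s+1-z)]$: this produces a further boundary contribution involving $c_{T-1}-c_{-T}$ together with a tail whose summands carry the cubic denominator $(s-z)(s+1-z)(s+2-z)$. Combining the two gives a total residual of order $O(1/(|\Omega|-T)^2)$, which is strictly smaller than the leading $O(|\Omega|/(|\Omega|^2-T^2))$ whenever $|\Omega|/T$ remains bounded away from $1$. Multiplication by the prefactor $e^{-\alpha}(e^{2\alpha}-e^{-2\alpha})/2$ then produces the first branch.

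\textbf{Main obstacle.} The delicate step is the residual estimate in the second case: a naive absolute-value bound on the single-step Abel remainder gives $O(T/(|\Omega|-T)^2)$, which is merely \emph{comparable} to (not strictly smaller than) the leading $O(|\Omega|/(|\Omega|^2-T^2))$ when $|\Omega|$ is only slightly larger than $T$; the second Abel iteration, combined with the remaining alternating cancellation in the paired series, is what upgrades the bound to a genuine little-$o$ of the boundary term. A secondary minor point is the appearance of $\lfloor\Omega\rfloor$ rather than $\Omega$ in the stated formula; this is consistent with the leading Abel term only because $\lfloor\Omega\rfloor/\Omega\to 1$ in the regime $\Omega\to\infty$ with $\Omega>T$, and the difference is absorbed into the same subdominant residual already controlled above.
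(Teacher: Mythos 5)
Your proof is correct and reaches the stated asymptotics, but the case $\vert\Omega\vert>T$ follows a genuinely different route from the paper's. For $\vert\Omega\vert<T$ you and the paper do essentially the same thing — invoke Lemma~\ref{appendix lemma fourier series} for the bilateral sum and Lemma~\ref{appendix lemma 12.2} for the two tails — except that the paper first re-indexes by $n=\lfloor\Omega\rfloor$ so that the complex parameter $\tilde\alpha=\alpha+i\pi\beta$ has bounded imaginary part, whereas you apply the lemmas directly with parameter $\alpha+i\pi\Omega$; the paper's normalization is slightly safer for uniformity in $\Omega$, but for fixed $\Omega$ both are fine. For $\vert\Omega\vert>T$ the paper again re-indexes by $n$, writes the shifted sum as a difference of two one-sided alternating tails, and reads off each tail's leading term from Lemma~\ref{appendix lemma 12.2} (whose proof is itself a one-step pairing of consecutive terms); you instead perform Abel summation symmetrically on $\sum_{s=-T}^{T}(-1)^s/(s-z)$ and obtain the \emph{exact} boundary contribution $(-1)^T z/(T^2-z^2)$ in closed form, then iterate Abel once more to beat the residual down to $O\big(1/(\vert\Omega\vert-T)^2\big)$. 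Your version buys a cleaner identification of the leading term (and, for what it is worth, its sign agrees with the lemma's statement, which can be checked at $T=0$, whereas the paper's intermediate two-tail computation appears to carry a stray sign); the paper's version buys reuse of an already-proved tail lemma. One caveat you correctly flag, and which is shared by the paper's own argument (whose tail asymptotics require $N=n-T\to\infty$ and whose replacement of $n+T+1$ by $n+T$ requires the same): the equivalence in the outer branch, including the substitution of $z=\Omega-i\alpha/\pi$ by $\lfloor\Omega\rfloor$, only holds when $\vert\Omega\vert-T\to\infty$; near the window edge $\vert\Omega\vert\approx T$ neither proof establishes the stated equivalence. This is a limitation of the lemma as stated, not a defect specific to your argument.
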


\begin{proof}
    We decompose $\Omega = \lfloor\Omega\rfloor + \beta = n + \beta$, and look at two different cases.
    
    \paragraph{First case:}$\vert\Omega\vert < T$. We have:
    
\begin{align*}
        &\frac{1}{2}\sum_{s=-T}^T(-1)^s\frac{e^{-\alpha}(e^{2\alpha}-e^{-2\alpha})}{\alpha - i\pi(s-n-\beta)}\\
        &=\frac{(-1)^n}{2}\sum_{s=-(T+n)}^{T-n}\frac{e^{-\alpha}(e^{2\alpha}-e^{-2\alpha})}{\alpha - i\pi(s-\beta)} =\frac{(-1)^n}{2}\sum_{s=-(T+n)}^{T-n}\frac{e^{-\alpha}(e^{2\alpha}-e^{-2\alpha})}{\tilde{\alpha} - i\pi s} \text{ where }\tilde{\alpha} = \alpha+i\pi\beta\\
        &=\frac{(-1)^n}{2}\sum_{s=-\infty}^{+\infty}\frac{e^{-\alpha}(e^{2\alpha}-e^{-2\alpha})}{\tilde{\alpha}-i\pi s}\\
        &+ \big[\frac{(-1)^n}{2}\sum_{s=-(T+n)}^{T-n}\frac{e^{-\alpha}(e^{2\alpha}-e^{-2\alpha})}{\tilde{\alpha}-i\pi s} -  \frac{(-1)^n}{2}\sum_{s=-\infty}^{+\infty}\frac{e^{-\alpha}(e^{2\alpha}-e^{-2\alpha})}{\tilde{\alpha}-i\pi s}\big]\\
        &=\frac{(-1)^n}{2}\sum_{s=-\infty}^{+\infty}\frac{e^{-\alpha}(e^{2\alpha}-e^{-2\alpha})}{\tilde{\alpha}-i\pi s} -\frac{i}{\pi}\times\frac{(-1)^{T-n+1}}{2(T-n+1)} + \frac{i}{\pi}\times\frac{(-1)^{T+n+1}}{2(T+n+1)} + o\big(\frac{1}{T-n}\big) \text{ (Lemma~\ref{appendix lemma 12.2})}\\
        &= (-1)^n\frac{e^{-\alpha}(e^{2\alpha}-e^{-2\alpha})}{e^{\alpha}e^{i\pi\beta} - e^{-\alpha}e^{-i\pi\beta}} + O\big(\frac{1}{{T}}\big) = \frac{e^{-\alpha}(e^{2\alpha}-e^{-2\alpha})}{e^{\alpha}e^{i\pi\Omega} - e^{-\alpha}e^{-i\pi\Omega}} +  O\big(\frac{1}{{T}}\big).
\end{align*}

    \paragraph{Second case:} $\vert\Omega\vert > T$. We consider again: 
    \begin{align*}
         \frac{(-1)^n}{2}\sum_{s=-(T+n)}^{T-n}\frac{e^{-\alpha}(e^{2\alpha}-e^{-2\alpha})}{\tilde{\alpha} - i\pi s} 
        &=\frac{(-1)^ne^{-\alpha}(e^{2\alpha}-e^{-2\alpha})}{2}\big[\sum_{s=-\infty}^{T-n}\frac{1}{\tilde{\alpha} - i\pi s} - \sum_{s=-\infty}^{-T-n-1}\frac{1}{\tilde{\alpha} - i\pi s}\big]\\
        &=\frac{(-1)^ne^{-\alpha}(e^{2\alpha}-e^{-2\alpha})}{2}\big[\sum_{s=n-T}^{+\infty}\frac{1}{\tilde{\alpha}+i\pi s} - \sum_{s=T+n+1}^{+\infty}\frac{1}{\tilde{\alpha}+i\pi s}
        \big]\\
        &= \frac{e^{-\alpha}(e^{2\alpha}-e^{-2\alpha})}{2}\times \frac{i(-1)^{T+1}\times 2n}{2\pi(n-T)(T+n)} + o\big(\frac{1}{T}\big) \text{\quad (Lemma~\ref{appendix lemma 12.2}) }.
    \end{align*}
\end{proof} 

\begin{proof}\textbf{(Theorem~\ref{convergence to window})} \hspace*{.2cm}
For $\Omega\in\mathbb{R}$, 
\begin{align*}
    C(e^{i\omega}) &= \sum_{s=-T}^T\frac{e^{-\alpha}(e^{2\alpha}-e^{-2\alpha})}{2K}\frac{(-1)^s}{1-e^{-\alpha/K}e^{i\frac{\pi}{K}(s-\Omega)}}\\
    &= \sum_{s=-T}^T\frac{e^{-\alpha}(e^{2\alpha}-e^{-2\alpha})}{2}\times\frac{(-1)^s}{\alpha - i\pi(s-\Omega)}\\
    &+\big[\sum_{s=-T}^T\frac{e^{-\alpha}(e^{2\alpha}-e^{-2\alpha})}{2K}\times\frac{(-1)^s}{1-e^{-\alpha/K}e^{i\frac{\pi}{K}(s-\Omega)}} - \sum_{s=-T}^T\frac{e^{-\alpha}(e^{2\alpha}-e^{-2\alpha})}{2}\frac{(-1)^s}{\alpha - i\pi(s-\Omega)}
    \big].
\end{align*}
We then use Lemma~\ref{appendix lemma 12.3} and Lemma~\ref{appendix lemma 12.4} to conclude.
    
\end{proof}

\section{Experiments}

In this section, we present a series of experiments designed to validate our theoretical findings in a practical setting. Specifically, we assess whether our conclusions hold when transitioning from an idealized infinite-data framework to real-world scenarios with a limited number of samples. 

Let us first introduce the linear recurrent neural network (RNN) used in our study. It is defined by the following recurrence relations:
\begin{align*}
    h_0 &= 0, \\
    x_{t+1} &= Ax_t + Bu_{t+1}, \\
    y_t &= Cx_t,
\end{align*}
where $x_t \in \mathbb{R}^{d_{\text{hidden}}}$ represents the hidden state, $u_t \in \mathbb{R}$ is the input, and $y_t \in \mathbb{R}$ is the output. The network parameters consist of $A \in \mathbb{C}^{d_{\text{hidden}} \times d_{\text{hidden}}}$, $B \in \mathbb{C}^{d_{\text{hidden}} \times 1}$, and $C \in \mathbb{C}^{d_{\text{hidden}}}$. 

Without loss of generality, we adopt a diagonal representation for the matrix $A$. The choice of its initial eigenvalues depends on the specific experiment: we either use a random initialization or employ the structured initialization given by Eq.~\eqref{Param_of_the_as}. 

In the simple experiments conducted below, the objective is to learn a single filter. Consequently, there is no need to decompose the matrix $A$ into multiple diagonal blocks. The matrix $C$ is initialized as:
\[
C_{\text{init}} = \begin{pmatrix} 1 & \dots & 1 \end{pmatrix} \in \mathbb{R}^{d_{\text{hidden}} \times 1}.
\]
and the entries of $B$ are initialized given by Eq.~\eqref{Param_of_the_bs}.\newline

The synthetic dataset consists of autoregressive sequences $X = (u_1, u_2, \dots, u_N)$ of length $N$, generated as:
\begin{equation}
    u_n = \rho u_{n-1} + \epsilon_k, \quad \epsilon_k \sim \mathcal{N}(0, 1 - \rho^2), \quad u_1 \sim U(0,1).
\end{equation}
The objective is to learn a mapping with linear recurrences $f: X \to Y$, where the target is given by:
\begin{equation}
    Y = u_{t^*}
\end{equation}
This corresponds to learning a shift of $N - t^*$ with finite samples.

\subsection{Random initialization vs. Shift-K initialization}

In this first set of experiments, we analyze the impact of initializing the complex diagonal entries $a_s$ of the linear RNN using phases that are uniformly distributed over a segment of the unit disk, with a constant radius close to 1, as described in the parametrization in Eq.~\eqref{Param_of_the_as}. Additionally, the parameters $b_s$ are initialized following the parametrization given in Eq.~\eqref{Param_of_the_bs}. We call this initialization the shift-$K$ initialization. We compare this approach to a standard random initialization to evaluate potential benefits in terms of performance and stability.

\begin{table}[ht]
    \centering
    \begin{tabular}{@{}ll@{}}
        \toprule
        & \textbf{Random init.} \hspace{4,2cm} \textbf{Shift-K init.} \\ \midrule
        Batch size & [20, 50, 100] \hspace{4,4cm} [20, 50, 100] \\
        Number Samples & 130000 \hspace{5,2cm} 130000\\
        Sequence length & 1500 \hspace{5,6cm} 1500 \\
        Position of $t^*$ & 200 \hspace{5,8cm} 200 \\
        Hidden neurons & 128 \hspace{5,8cm} 128 \\
        Input / output dimension & 1 \hspace{6,2cm} 1 \\
        Learning rates & [0.01, 0.005, 0.001, 0.0001] \hspace{2,2cm}[0.01, 0.005, 0.001, 0.0001] \\
        Weight decay & $10^{-5}$ \hspace{5,7cm}$10^{-5}$ \\ 
        $\rho$ & \{0, 0.2, 0.4, 0.6, 0.8, 1\} \hspace{2,7cm} \{0, 0.2, 0.4, 0.6, 0.8, 1\} \\ \midrule
        $a_s$ param. & $a_u = e^{-\alpha/K_{\textnormal{init}}}e^{i\epsilon_u\pi}, \varepsilon \sim \mathcal{U}(-1,1)$ \hspace{1cm} $a_u = e^{-\alpha/K_{\textnormal{init}}}e^{iu\frac{\pi}{K_{\textnormal{init}}}}$\\
         $b_s$ param. & $b_u = \frac{e^{-\alpha}(e^{2\alpha}-e^{-2\alpha})}{2K_{\textnormal{init}}}\times(-1)^u$ \hspace{1,8cm} $b_u = \frac{e^{-\alpha}(e^{2\alpha}-e^{-2\alpha})}{2K_{\textnormal{init}}}\times(-1)^u$  \\
        $\alpha$ & 1 \hspace{6,2cm} 1 \\
       $K_{\textnormal{init}}$ & 1300 \hspace{5,6cm} 1300 \\ \midrule
        Number epochs & 60 \hspace{6cm} 60 \\
        \bottomrule
    \end{tabular}
    \caption{{Experimental details for Figure~\ref{fig:xps} (left)}. We use $[\dots]$ to denote hyperparameters that were scanned over with grid search and $\{\dots\}$ to denote the variable of interest for the figure. We chose the same representation for $b_s$ in both cases because we observed small impact of this parameter on the final results.}
    \label{tab:xp-compare_init}
\end{table}

\begin{figure}
    \centering
    \includegraphics[width=1\linewidth]{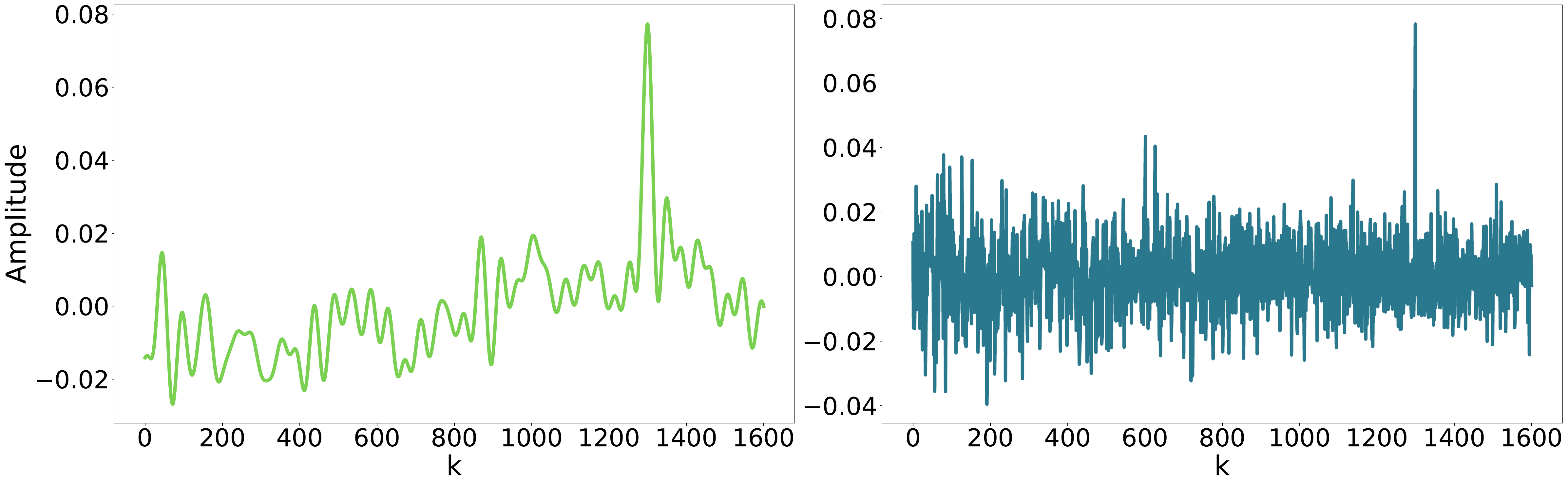}

    \vspace*{-.2cm}
    
    \caption{\textit{Comparison of Filters Obtained with Different Initialization Methods. Left: Filter obtained using our proposed shift-$K$ initialization method, which exhibits a more structured and interpretable pattern. Right: Filter obtained with random initialization, which appears significantly noisier, indicating less effective memory propagation. }}
    \label{fig:appendix 2 filters}
\end{figure}

\subsection{Robustness of Shift-K initialization}

In this second set of experiments, we investigate the robustness of our initialization scheme with respect to inaccuracies in the choice of $K_{\textnormal{init}}$ when initializing $a_s$ as in Eq.~\eqref{Param_of_the_as}. In practical applications, the actual shift of the sequence is often unknown, making it impossible to initialize with the exact optimal value of~$K$. A robust initialization method should exhibit resilience to such uncertainties, allowing for performance stability within a reasonable range of $K_{\textnormal{init}}$ values.

\begin{table}[ht]
    \centering
    \begin{tabular}{@{}l@{}}
        \toprule
        \textbf{Shift-K init.} \\ \midrule
        Batch size: [20, 50, 100] \\
        Number of Samples: 150000 \\
        Sequence length: 2250 \\
        Position of $t^*$: 250 \\
        Hidden neurons: 128 \\
        Input / output dimension: 1 \\
        Learning rates: [0.01, 0.005, 0.001, 0.0001] \\
        Weight decay: $10^{-5}$ \\ 
        $\rho$: 0.7 \\ \midrule
        $a_s$ param.: $a_u = e^{-\alpha/K_{\textnormal{init}}}e^{iu\frac{\pi}{K_{\textnormal{init}}}}$\\
        $b_s$ param.: $b_u = \frac{e^{-\alpha}(e^{2\alpha}-e^{-2\alpha})}{2K_{\textnormal{init}}}\times(-1)^u$  \\
        $\alpha$: 1 \\
        $K_{\textnormal{init}}$: \{250, 500, 1000, 2000, 4000, 8000, 16000, 32000\} \\ \midrule
        Number of epochs: 60 \\
        \bottomrule
    \end{tabular}
    \caption{{Experimental details for Figure~\ref{fig:xps} (right).}}
    \label{tab:xp-robustness}
\end{table}

\end{appendices}

\end{document}